\documentclass[Afour,times,sageh]{sagej}
\pdfoutput=1
\usepackage[T1]{fontenc}
\usepackage[utf8]{inputenc}
\usepackage{amsmath,amssymb,amsthm}
\usepackage{microtype}
\usepackage{enumitem}
\usepackage{graphicx}
\usepackage{subcaption}
\usepackage{multirow,balance}
\usepackage{url}

\newif\ifarxiv
\arxivtrue
\ifarxiv
\makeatletter
\def\@maketitle{%
\vspace*{-34pt}%
\null%
\begin{center}
\begin{sf}
\begin{minipage}[t]{\textwidth}
  \vskip 12.5pt%
    {\raggedright\titlesize\textbf{\@title} \par}%
    \vskip 1.5em%
\end{minipage}
{\par\large%
      \lineskip .5em%
      {\raggedright\textbf{\@author}
      \par}}
     \vskip 40pt%
    {\noindent\usebox\absbox\par}
    {\vspace{20pt}%
      {\noindent\normalsize\@keywords}\par}
      \end{sf}
      \end{center}
      \vspace{22pt}
        \par%
  }
\def\ps@title{%
\def\@oddhead{\parbox{\textwidth}{\mbox{}\\[-1pt]%
\noindent\rule{\textwidth}{0.5pt}%
}}%
\let\@evenhead\@oddhead
\def\@oddfoot{}%
\let\@evenfoot\@oddfoot}
\def\ps@sagepage{%
\let\@mkboth\@gobbletwo
\def\@evenhead{\parbox{\textwidth}{%
\normalsize\sagesf\thepage\hfill\\[-6pt]
\noindent\rule{\textwidth}{0.25pt}}}
\def\@oddhead{\parbox{\textwidth}{%
\normalsize\sagesf{\itshape{\leftmark}}\hfill\thepage\\[-6pt]
\noindent\rule{\textwidth}{0.25pt}}}
\def\@evenfoot{}%
\def\@oddfoot{\@evenfoot}}
\makeatother
\fi
\usepackage{booktabs}
\usepackage{colortbl}
\usepackage{xcolor}
\definecolor{dkyellow}{rgb}{0.72,0.53,0.0}
\definecolor{grayblue}{rgb}{0.13,0.22,0.42}
\definecolor{dkgreen}{HTML}{1B6B2A}
\definecolor{dkred}{HTML}{A11212}
\newcommand{\cgood}[1]{\textcolor{dkgreen}{$\mathbf{#1}$}}
\newcommand{\cmid}[1]{\textcolor{dkyellow}{$#1$}}
\newcommand{\cbad}[1]{\textcolor{dkred}{$#1$}}
\newcommand{\wms}[1]{%
  \ifdim#1sp<300sp\textcolor{dkgreen}{$#1$}%
  \else\ifdim#1sp>1500sp\textcolor{dkred}{$#1$}%
  \else\textcolor{dkyellow}{$#1$}\fi\fi}
\newcommand{\wmsb}[1]{%
  \ifdim#1sp<300sp\textcolor{dkgreen}{$\mathbf{#1}$}%
  \else\ifdim#1sp>1500sp\textcolor{dkred}{$\mathbf{#1}$}%
  \else\textcolor{dkyellow}{$\mathbf{#1}$}\fi\fi}
\usepackage{algorithm}
\usepackage[indLines=true]{algpseudocodex}
\makeatletter
\patchcmd{\algpx@drawIndentLine}
  {\draw[algpxIndentLine]}
  {\draw[algpxIndentLine,algpxIndentLine#1/.try]}
  {\typeout{ALGPX-PATCH-OK}}{\typeout{ALGPX-PATCH-FAILED}}
\makeatother
\tikzset{
  algpxIndentLine0/.style={draw=kwrep},
  algpxIndentLine1/.style={draw=kwfor},
  algpxIndentLine2/.style={draw=kwfor},
  algpxIndentLine3/.style={draw=kwif},
  algpxIndentLine4/.style={draw=kwrep},
  algpxIndentLine5/.style={draw=kwfor},
  algpxIndentLine6/.style={draw=kwif},
}

\graphicspath{{./}}

\makeatletter
\renewcommand\subsubsection{\@startsection{subsubsection}{3}{\z@}{0.5\@bls plus .3\@bls minus .1\@bls}{4pt\@afterindentfalse}{\sagesf\normalsize\itshape}}
                                
\makeatother

\allowdisplaybreaks
\setlist[itemize]{leftmargin=1.4em,itemsep=1pt,topsep=2pt}
\makeatletter
\def\thmhead@parens#1#2#3{%
  \thmname{#1}\thmnumber{\@ifnotempty{#1}{ }\@upn{#2}}%
  \thmnote{ {\the\thm@notefont(#3).}}}
\let\thmhead\thmhead@parens
\makeatother

\theoremstyle{definition}
\newtheorem{definition}{Definition}[section]

\newtheorem{remark}[definition]{Remark}
\newtheorem{example}[definition]{Example}
\newtheorem{procedure}[definition]{Procedure}
\theoremstyle{plain}
\newtheorem{lemma}[definition]{Lemma}
\newtheorem{theorem}[definition]{Theorem}
\newtheorem{proposition}[definition]{Proposition}

\newtheorem{claim}[definition]{Claim}

\newcommand{\X}{\mathcal{X}}
\newcommand{\A}{\mathcal{A}}

\newcommand{\R}{\mathbb{R}}
\newcommand{\Z}{\mathbb{Z}}
\newcommand{\tra}{\pi}

\newcommand{\preds}{\mathcal{G}}

\newcommand{\W}{\mathcal{W}}
\newcommand{\roll}{\operatorname{roll}}
\newcommand{\jet}{\operatorname{jet}}

\begin{document}

\runninghead{Deshpande and How}

\title{Dispersive Forward Tree Search for Optimal Control:
Coverage, Complexity, and Computation}

\author{Shashank A. Deshpande\affilnum{1} and
Jonathan P. How\affilnum{1}}

\affiliation{\affilnum{1}Department of Aeronautics and
Astronautics, Massachusetts Institute of Technology,
Cambridge, MA, USA}

\corrauth{Shashank A. Deshpande, Department of Aeronautics and
Astronautics, Massachusetts Institute of Technology,
77 Massachusetts Avenue, Cambridge, MA 02139, USA.}

\email{croshank@mit.edu}

\begin{abstract}
Steering-based planners require solutions to state-to-state
boundary value problems, which can be inaccessible for nonlinear
platforms.
Forward propagation evades the steering requirement, but the
finite-sample behavior of the associated planners remains
uncharacterized and their implementations underperform in
practice. This paper develops a propagation-based kinodynamic
planner with deterministic finite-sample near-optimality
guarantees. We work
within the large class of differentially flat nonlinear
systems and show that a forward tree of locally dispersive
control commands contains a near-optimal trajectory at a
certified tree size. We provide a general mechanism to construct
dispersive command sets for control-affine systems, which are
necessary to implement the search algorithm prescribed by the
theory. We show that covering the certified trajectory
class irrespective of cost provably demands a tree exponentially
sized in the problem horizon, and present a cost-conditioned
dominance pruning procedure that retains near-optimality at a
tree size polynomial in the horizon. We implement the resulting
search algorithm, Dispersive Forward Tree search
(DFT\textsuperscript{*}), as breadth-first expansion
of the forward tree, which maps naturally onto parallel
hardware. We design efficient dispersive samplers for the
unicycle, the trailer car, and the quadrotor and evaluate
challenging planning tasks for these platforms. DFT\textsuperscript{*}
delivers consistently competitive and often substantially better
solution quality than state-of-the-art kinodynamic planners at
comparable solution times on embedded-tier processors,
accelerating further as parallel compute is scaled.
We also implement DFT\textsuperscript{*} in a receding-horizon loop to demonstrate
real-time planning in dynamic environments at embedded-tier
compute budgets.
\end{abstract}

\keywords{Kinodynamic motion planning, sampling-based motion
planning, optimal control, differential flatness, dispersion}

\maketitle
\footnotetext[0]{\sagesf Code:
\url{https://github.com/croshank/DFTSearch}}

\section{Introduction}
\label{sec:intro}

This paper addresses the finite-sample performance
of sampling-based kinodynamic planners that search the reachable
solution space by forward propagation of sampled control
commands. For the class of differentially flat nonlinear
systems, we develop deterministic, non-asymptotic guarantees on
the solution quality attainable at a finite sampling budget.
Sampling-based algorithms constitute a canonical approach to
kinodynamic motion planning, with theoretical guarantees of
completeness and
optimality as their foundational properties. Early algorithms in
this field explore a configuration space by randomized sampling,
most prominently under the Voronoi bias, and guarantee
probabilistic convergence to a solution \citep{kavraki1996prm,
lavalle2001rrt, kleinbort2019rrt}. Successive refinements
strengthen this guarantee to asymptotic optimality and provide
probabilistic convergence to an optimal solution
\citep{karaman2011sampling, li2016asymptotically}.

A vast majority of sampling-based algorithms sample system states
in the configuration space and rely on state-to-state steering
executed by solving a boundary value problem (BVP). Strong
theoretical properties are available in the incumbent literature
for this class of planning algorithms. First, the asymptotic
optimality of PRM\textsuperscript{*} and RRT\textsuperscript{*} \citep{karaman2011sampling} transfers
to their kinodynamic instantiation when optimal steering between
states is available \citep{perez2012lqrrrt, webb2013kinodynamic,
schmerling2015drift}. Further, FMT\textsuperscript{*} \citep{janson2015fmt} provides
a non-asymptotic analysis and derives a convergence rate to
optimality alongside a quantified probability of a near-optimal
path at finite sample count. This analysis is extended to a
deterministic set of guarantees in \citet{janson2018deterministic}.
Optimal point-to-point steering, however, is rarely available for
nonlinear systems, as the required BVP can be challenging to
solve.

\begin{figure}[t!]
  \centering
  \includegraphics[width=\columnwidth]{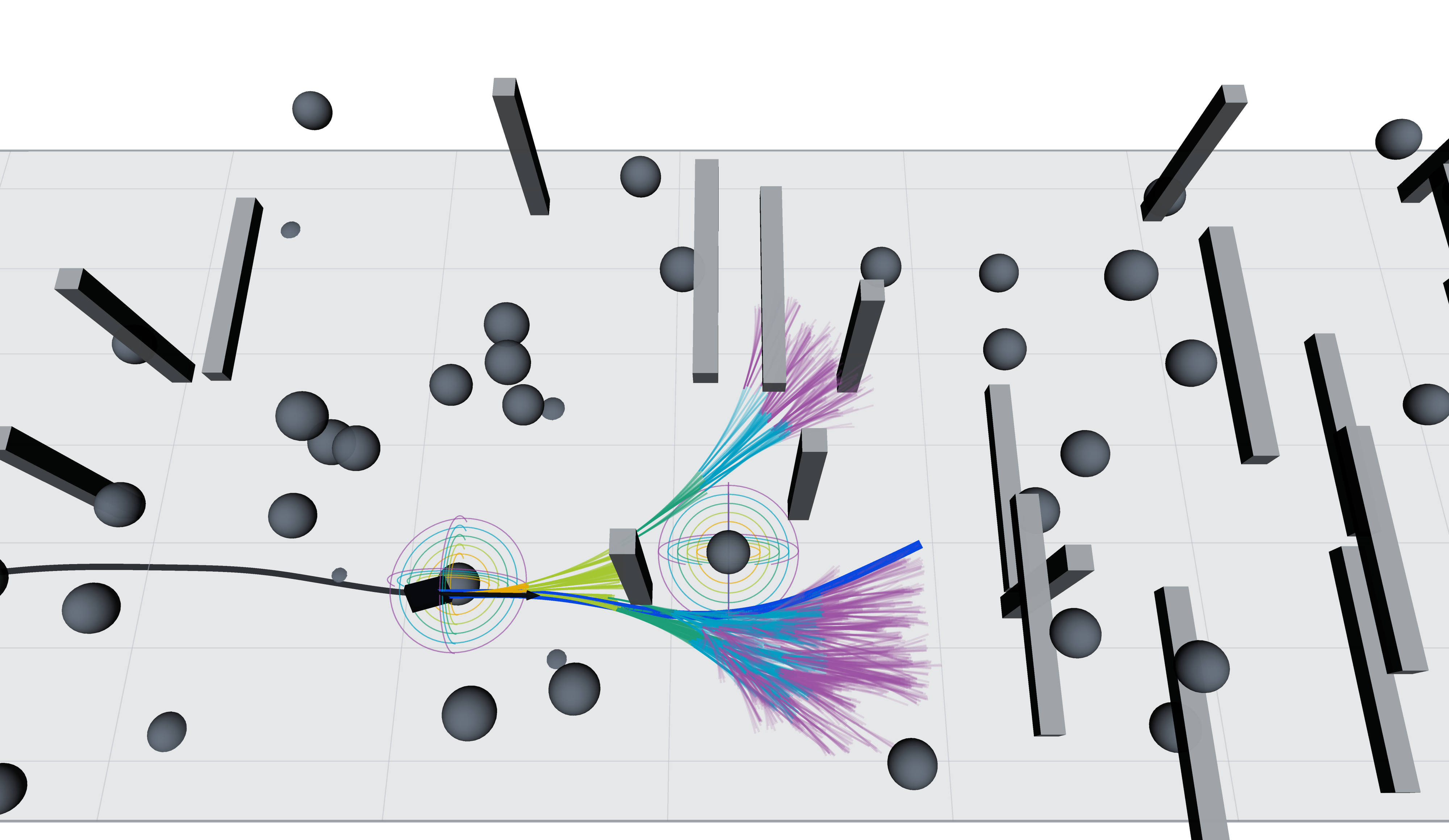}
  \caption{Dispersive forward tree search for online navigation
  in dynamic environments. The forward tree of dispersive
  commands is grown subject to safety predicates from the static
  and moving obstacles. The colored segmentation records tree
  depth, and the concentric rings mark the worst-case reachable
  sets of nearby moving obstacles at the matching depths.}
  \label{fig:teaser}
\end{figure}

A second approach avoids the need for steering by sampling in the
control space and numerically propagating the dynamics under the
sampled command. While asymptotic guarantees of completeness and
optimality are available for this class of algorithms
\citep{lavalle2001rrt, kleinbort2019rrt, li2016asymptotically,
paden2016glc}, a non-asymptotic analysis, including convergence
rates and finite-sample near-optimality guarantees, remains absent.
The central difficulty is that finite-sample guarantees require
quantifying how well a finite collection of sampled commands
covers the resulting space of dynamically feasible trajectories.

In this paper, we address this difficulty for the large family of
nonlinear systems that are differentially flat
\citep{murray1995catalog}. Differential flatness provides a
finite-dimensional representation that allows us to construct
locally dispersive command sets, whose rollouts cover a
reference trajectory class at a specified distance, without
requiring point-to-point steering.
We show that the resulting forward tree provides a quantified
cover of a trajectory class executable by an eroded control
authority. This yields deterministic near-optimality bounds,
measured against the optimal cost over the class of
smooth trajectories that are robust to the problem constraints
with a quantified margin, for all sufficiently regular cost
functionals. Such coverage, however,
requires tree sizes exponential in the problem horizon, rendering
direct deployment intractable. For a permissive class of cost
functionals that aggregate recursively in time, together with
temporally composable constraints, we therefore introduce a
pruning procedure that retains near-optimality while reducing the
required tree size, and hence the deployment computation, to a
polynomial in the problem horizon. The constructed locally
dispersive command sets, combined with the dominance pruning
procedure, form our algorithm, Dispersive Forward Tree search
(DFT\textsuperscript{*}).

In summary, this paper makes the following
contributions:

\noindent $\bullet$~\textbf{Finite-sample theory.} Deterministic,
finite-sample near-optimality guarantees for kinodynamic search
under forward propagation on differentially flat nonlinear
systems.

\noindent $\bullet$~\textbf{Dispersive sampling.} Locally
dispersive command samplers with cardinality bounds and an
explicit construction for control-affine systems, alongside
efficient designs for the unicycle, the trailer car, and the
quadrotor.

\noindent $\bullet$~\textbf{Algorithm and implementation.} DFT\textsuperscript{*},
a massively parallelized implementation of our steering-free
search, with a dominance pruning procedure to control the tree
size in the problem horizon, and an
online adaptation, WWDFT\textsuperscript{*}, for real-time planning in dynamic
environments.

\noindent $\bullet$~\textbf{Evaluation.} Offline evaluation on
the Dynobench instances against state-of-the-art kinodynamic
planners to validate dominant near-optimality, and real-time
evaluation of WWDFT\textsuperscript{*} in dynamic environments to validate
safety-assured deployment at embedded-tier compute.

The remainder of this paper is organized as follows.
Section~\ref{sec:relwork} reviews related literature in the
field. Sections~\ref{sec:prelim}
and~\ref{sec:setting} recall preliminaries and develop the
planning problem alongside its flat reformulation.
Section~\ref{sec:treesearch} develops our theory of dispersive
tree search, and Section~\ref{sec:experiments} presents the DFT\textsuperscript{*}
algorithm, its online adoption WWDFT\textsuperscript{*}, and the full evaluation
suite. All proofs are delegated to the Appendix.

\section{Related Work}
\label{sec:relwork}

In this section, we review the principal approaches to
kinodynamic motion planning alongside their theoretical
properties and situate our contribution against this landscape.

\begin{table*}[t]
\centering
\caption{Convergence guarantees in sampling-based kinodynamic
planning.
\emph{Notation:}
$c_{n}$ --- returned cost at $n$ samples;
$c^{*}$ --- optimal cost;
$c^{*}_{\delta}$ --- optimum over $\delta$-robust solutions;
$c^{*}_{\varepsilon}$ --- optimum under $\varepsilon$-eroded
control authority;
$D_{n}$ --- sample dispersion;
$\delta_{A}(n)$ --- command-set dispersion of an $n$-node tree.
Remaining symbols are schematic.}
\label{tab:taxonomy}
\footnotesize
\renewcommand{\arraystretch}{1.15}
\setlength{\tabcolsep}{5pt}
\newcommand{\vlow}[1]{\raisebox{0.95ex}{#1}}%
\begin{tabular}{|l|l l l|}
\hline
Class & Planning algorithm & Mechanism & Convergence property \\
\hline
\multirow{3}{*}{\shortstack[l]{Probabilistic,\\asymptotic}}
& \shortstack[l]{\rule{0pt}{2.6ex}RRT\textsuperscript{*}/PRM\textsuperscript{*}\\{\scriptsize\citep{perez2012lqrrrt,
webb2013kinodynamic, schmerling2015drift}}} & \vlow{Optimal steering}
& \vlow{$\Pr\bigl(\lim_{n} c_{n} = c^{*}\bigr) = 1$} \\
& \cellcolor{gray!12}\shortstack[l]{\rule{0pt}{2.6ex}Kinodynamic RRT\\{\scriptsize\citep{lavalle2001rrt,
kleinbort2019rrt}}} & \cellcolor{gray!12}\vlow{Forward propagation} &
\cellcolor{gray!12}\vlow{$\Pr\bigl(c_{n} < \infty\bigr) \to 1$} \\
& \cellcolor{gray!12}\shortstack[l]{\rule{0pt}{2.6ex}SST\\{\scriptsize\citep{li2016asymptotically}}} &
\cellcolor{gray!12}\vlow{Forward propagation} &
\cellcolor{gray!12}\vlow{$\Pr\bigl(\lim_{n} c_{n} \le
(1+\alpha(\delta))\, c^{*}_{\delta}\bigr) = 1$} \\
\hline
\multirow{2}{*}{\shortstack[l]{Probabilistic,\\finite-sample}}
& \shortstack[l]{\rule{0pt}{2.6ex}FMT\textsuperscript{*}\\{\scriptsize\citep{janson2015fmt}}} & \vlow{Geometric connections}
& \vlow{$\Pr\bigl(c_{n} \le (1+\epsilon)\, c^{*}\bigr) \ge 1 -
\rho(n)$} \\
& \shortstack[l]{\rule{0pt}{2.6ex}FLASK\\{\scriptsize\citep{king2026akinopdf}}} & \vlow{Flat LQMT steering}
& \vlow{$\Pr\bigl(c_{n} \le (1+\epsilon)\, c^{*}\bigr) \ge 1 -
\rho(n)$} \\
\hline
\multirow{2}{*}{\shortstack[l]{Deterministic,\\asymptotic}}
& \shortstack[l]{\rule{0pt}{2.6ex}gPRM\\{\scriptsize\citep{janson2018deterministic}}} & \vlow{Geometric connections}
& \vlow{$\lim_{n} c_{n} = c^{*}$} \\
& \cellcolor{gray!12}\shortstack[l]{\rule{0pt}{2.6ex}GLC\\{\scriptsize\citep{paden2016glc}}} &
\cellcolor{gray!12}\vlow{Forward propagation} &
\cellcolor{gray!12}\vlow{$\lim_{n} c_{n} = c^{*}$} \\
\hline
\multirow{2}{*}{\shortstack[l]{Deterministic,\\finite-sample}}
& \shortstack[l]{\rule{0pt}{2.6ex}gPRM\\{\scriptsize\citep{janson2018deterministic}}} & \vlow{Geometric connections}
& \vlow{$c_{n} \le \bigl(1 + \tfrac{2 D_{n}}{r_{n} - 2 D_{n}}\bigr)\,
c^{*}_{\delta}$} \\
& \cellcolor{gray!12}\shortstack[l]{\rule{0pt}{2.6ex}\textbf{DFT\textsuperscript{*}}\\{\scriptsize(This work)}} &
\cellcolor{gray!12}\vlow{Forward propagation} &
\cellcolor{gray!12}\vlow{$c_{n} \le c^{*}_{\varepsilon} +
O(\delta_{A}(n))$} \\
\hline
\end{tabular}
\end{table*}

\subsection{Convergence Properties of Sampling-based Motion
Planning}
\label{sec:relwork_convergence}

Sampling-based motion planners divide broadly into two
categories. Planners in the first category sample states in the
configuration space and connect them by solving a two-point
boundary value problem (BVP), while those in the other sample
commands in the control space and numerically propagate the
dynamics forward. In either case, the convergence properties of
sampling-based planners have guided progress in this field.
In Table~\ref{tab:taxonomy} we organize this progression for
the two categories of sampling-based planners based on
convergence to optimality being probabilistic or deterministic
and whether finite-sample performance guarantees are available.

\subsubsection{Configuration-space sampling and BVP steering}

When optimal state-to-state steering is available,
sampling-based kinodynamic planners inherit their convergence
guarantees from geometric planning. \citet{karaman2011sampling} establish the asymptotic optimality
of PRM\textsuperscript{*} and RRT\textsuperscript{*} for geometric problems, which transfers to
kinodynamic planning for linear systems under quadratic cost
\citep{perez2012lqrrrt, webb2013kinodynamic} and linear systems
with drift \citep{schmerling2015drift}. The geometric statements of \citet{karaman2011sampling} are refined by FMT\textsuperscript{*}
\citep{janson2015fmt}, and the probability of attaining a
near-optimal solution at a finite sample count is quantified.
This refinement is subsequently derandomized in \citet{janson2018deterministic}, where dispersive sampling is
leveraged to obtain deterministic finite-sample guarantees
alongside asymptotic optimality. The finite-sample statements of \citet{janson2015fmt} rest on
probabilistic exhaustivity, the property that every path of
sufficient clearance is traced, with a given probability, by a
path within a surrounding spatial tube. A recent contribution,
FLASK \citep{king2026akinopdf}, carries probabilistic
exhaustivity to samples in the flat output space and develops
fast BVP connections between flat state samples for
linear-quadratic minimum-time (LQMT) costs.

\subsubsection{Control-space sampling and forward propagation}

The steering requirement of solving a state-to-state boundary
value problem can be evaded altogether by propagating sampled
control commands forward through the system's dynamics.
Available convergence properties of planners in the propagation
arm however remain relatively weak. Probabilistic completeness
of the original kinodynamic RRT \citep{lavalle2001rrt} was
established only recently \citep{kleinbort2019rrt}, and
guarantees the discovery of a suboptimal
solution in the asymptotic limit. Stable Sparse RRT (SST) \citep{li2016asymptotically}
modifies the sampling mechanism of the kinodynamic RRT and
introduces a witness pruning procedure to obtain probabilistic
asymptotic convergence to a $\delta$-robust optimum. GLC \citep{paden2016glc} searches a deterministic grid of
control samples, dominance-prunes on state-space cells, and
guarantees convergence to the optimal cost as the grid refines.
However, the sampling resolution demanded by the analysis grows
exponentially with the planning horizon.

\subsection{Graph Search over Motion Primitives}
\label{sec:relwork_primitives}

When the requisite BVP for state-to-state steering is expensive
to solve, as is often the case for nonlinear systems, graph
search over a precomputed library of motion primitives remains
an effective approach \citep{pivtoraiko2009lattice,
frazzoli2005maneuver, sakcak2019lattice, ortizharo2024idbastar}.
\citet{sakcak2019lattice} propose regeneration of
boundary-value-optimal primitives at each resolution of a
lattice in the state space and prove a resolution-dependent
near-optimality of the cost. iDb-A\textsuperscript{*}
\citep{ortizharo2024idbastar} relaxes primitive matching
requirements and allows bounded discontinuities in the solution
trajectories, which are repaired afterwards by trajectory
optimization. This relaxation lets iDb-A\textsuperscript{*} solve challenging
kinodynamic problems across the Dynobench suite
\citep{ortizharo2024idbastar}, including aerial and articulated
platforms, at state-of-the-art solution quality.

\subsection{Optimization and Entropy Methods}
\label{sec:relwork_opt}

Nonlinear trajectory optimization frameworks such as TrajOpt
\citep{schulman2014trajopt}, GuSTO \citep{bonalli2019gusto}, and
Crocoddyl \citep{mastalli2020crocoddyl} compute locally optimal
trajectories under the full system dynamics, with constraints
imposed explicitly or via penalties in the cost. These methods can find
high-quality solutions but remain expensive as well as sensitive
to initialization \citep{ortizharo2024idbastar}. For online
planning, reduced-order models admit lighter formulations
optimizing polynomial splines in real time, most prominently
deployed for ground and aerial navigation \citep{wang2022minco,
zhou2021ego, kondo2026mighty}.
Sampling-based optimizers such as MPPI \citep{williams2017mppi}
and the cross-entropy method \citep{kobilarov2012cem} replace
gradients with importance-weighted rollouts of sampled controls.
Every problem requirement is superposed into the rollout cost,
and hard constraints such as collision avoidance are imposed as
soft penalties over the same.

\subsection{Parallel Computation for Fast Planning}
\label{sec:relwork_parallel}

Modern sampling-based planners have increasingly benefitted
from hardware parallelism. Parallel processing on GPUs is
particularly suitable for parallel propagation of control
samples \citep{perrault2025kinopax} as well as batched frontier
expansions of steering-based planners \citep{ichter2017gmt}.
SIMD cores on consumer CPUs have also been shown to sharply
accelerate forward kinematics and collision checks by
vectorization over batched robot configurations
\citep{thomason2024vamp, king2026akinopdf}. In this work, we accordingly exploit the data-parallel structure
of depth-synchronized control propagation and dominance pruning
and implement DFT\textsuperscript{*} with fused GPU kernels that sample,
propagate, and grade candidate trajectories.


\section{Preliminaries}
\label{sec:prelim}

In this section, we recollect some preliminary definitions and
results utilized by our theoretical development later.

\subsection{Dynamic Feedback Linearization and Differential
Flatness}
\label{sec:prelim-flat}

Consider a nonlinear control system on the state space
$\X \subseteq \R^{n}$, with the compact input set
$U \subset \R^{m}$ and the smooth dynamics
\begin{equation}
\label{eq:sys}
  \dot{x} = f(x, u) .
\end{equation}

\begin{definition}[Differential flatness
\citep{fliess1995flatness}]
\label{def:flat}
The system \eqref{eq:sys} is \emph{differentially flat} if there
exist a \emph{flat output}
\[
  z = h\bigl( x, u, \dot{u}, \dots, u^{(p)} \bigr) \in \R^{d},
  \qquad d = m,
\]
with differentially independent
components,\footnote{No non-trivial differential relation
$\Phi\bigl( z, \dot{z}, \dots, z^{(k)} \bigr) = 0$ --- e.g.\
$\dot{z}_{1} = z_{2}$ --- holds along all trajectories.} and
smooth maps $\Psi_{x}, \Psi_{u}$ of $z$ and finitely many of its
derivatives such that, along every trajectory of the system,
\[
  x = \Psi_{x}\bigl( z, \dot{z}, \dots \bigr),
  \qquad
  u = \Psi_{u}\bigl( z, \dot{z}, \dots \bigr) .
\]
\end{definition}

Differential flatness is equivalent to dynamic feedback
linearizability in the following sense, which vastly simplifies
the analytic structure of flat systems. A \emph{dynamic feedback}
\[
  \dot{\zeta} = a(x, \zeta, w),
  \qquad
  u = \kappa(x, \zeta, w),
\]
with compensator state $\zeta \in \R^{q}$ and new input
$w \in \R^{m}$, is \emph{endogenous} if
\[
  \zeta \;=\; \sigma\bigl( x, u, \dot{u}, \dots, u^{(s)} \bigr)
\]
along every closed-loop trajectory.

\begin{theorem}[Fliess--L\'evine--Martin--Rouchon
\citep{fliess1995flatness, fliess1999liebacklund}]
\label{thm:flmr}
The system \eqref{eq:sys} is differentially flat in a neighborhood
of a regular point\footnote{A point at which the flatness maps are
defined and smooth. The singular locus --- zero net thrust for the
quadrotor, zero velocity for the unicycle --- is excluded.} if and
only if there exist indices $R = (r_{1}, \dots, r_{d})$, an
endogenous dynamic feedback, and a local diffeomorphism
\[
  (x, \zeta)
  \;\mapsto\;
  \bigl( z_{i}, \dot{z}_{i}, \dots, z_{i}^{(r_{i}-1)}
  \bigr)_{i=1}^{d}
\]
under which the closed-loop system takes the Brunovsk\'y form
\[
  z_{i}^{(r_{i})} \;=\; w_{i},
  \qquad i = 1, \dots, d .
\]
\end{theorem}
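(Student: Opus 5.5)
We prove the two directions separately. The direction from linearizability to flatness is a direct construction. The converse requires building the compensator, and that is where the work lies.

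\emph{Sufficiency.} Suppose an endogenous dynamic feedback and a local diffeomorphism $(x,\zeta)\mapsto (z_i,\dots,z_i^{(r_i-1)})_{i=1}^{d}$ bring the closed loop to the Brunovsk\'y form $z_i^{(r_i)}=w_i$.
\begin{itemize}
\item The coordinates $z_i$ are smooth functions of $(x,\zeta)$. By endogeneity, $\zeta=\sigma(x,u,\dots,u^{(s)})$, so $z=h(x,u,\dots,u^{(p)})$ for some finite $p$.
\item Inverting the diffeomorphism expresses $(x,\zeta)$ as a smooth function of the jets $z_i,\dots,z_i^{(r_i-1)}$. This gives $\Psi_x$.
\item Substituting $w_i=z_i^{(r_i)}$ into $u=\kappa(x,\zeta,w)$ gives $\Psi_u$.
\item The components of $z$ are differentially independent: in Brunovsk\'y form the $w_i$ are arbitrary functions of time, so the jets of $z$ are unconstrained.
\item Endogeneity also guarantees that every open-loop trajectory of \eqref{eq:sys} arises from some closed-loop trajectory. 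One sets $\zeta=\sigma(\cdot)$ along the trajectory and recovers $w$. Hence the flatness maps hold along every system trajectory, not only along closed-loop ones.
\end{itemize}

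\emph{Necessity.} Let $z=h(x,u,\dots,u^{(p)})$ be a flat output with maps $\Psi_x,\Psi_u$ depending on jets of $z$ up to some finite order. Work in the diffiety (infinite jet space) of the system with coordinates $(x,u,\dot u,\dots)$, and in the trivial diffiety of $\R^{d}$ with coordinates $(z,\dot z,\dots)$. The maps $(h,\text{its total derivatives})$ and $(\Psi_x,\Psi_u,\text{their total derivatives})$ are mutually inverse Lie--B\"acklund morphisms near the regular point. The argument then proceeds in three steps.
\begin{enumerate}
\item Choose indices $r_i$ large enough that $\Psi_x$ and $\Psi_u$ depend only on $z_i,\dots,z_i^{(r_i)}$, so $u=\Psi_u(\bar z, z^{(R)})$ with $\bar z:=(z_i,\dots,z_i^{(r_i-1)})_i$.
\item Write $\bar z=\Phi(x,u,\dots,u^{(s)})$. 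Complete $x$ to coordinates on the finite jet manifold by selecting functions $\zeta=\sigma(x,u,\dots,u^{(s)})$ such that $(x,\zeta)\mapsto\bar z$ is a local diffeomorphism. The count $\sum_i r_i=n+q$ fixes $q$.
\item Set $w=z^{(R)}$. Then $\dot\zeta=a(x,\zeta,w)$ and $u=\kappa(x,\zeta,w)$ follow by differentiating $\sigma$ and rewriting everything in $(\bar z,w)$ coordinates. The resulting feedback is endogenous by construction.
\end{enumerate}

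\emph{Main obstacle.} The hardest part is step 2. The map $\bar z\mapsto x$ is a submersion because $\Psi_x$ is surjective onto the reachable states, and $\bar z$ is expressible in $(x,u,\dots)$. One must show that complementary coordinates $\zeta$ exist as functions of finitely many input derivatives. My first attempt would use the rank condition from the invertibility of the Lie--B\"acklund map. That invertibility makes the differential of $\bar z$ with respect to $(x,u,\dots,u^{(s)})$ have full rank $\sum r_i$ at the regular point. One then picks $\zeta$ among the coordinates $u,\dots,u^{(s)}$ (after a possible reduction of $r_i$) by the implicit function theorem.

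If the rank bookkeeping fails, I would fall back to the cited construction of \citet{fliess1999liebacklund}. That construction prolongs the system by integrators on the inputs until the prolonged system is static feedback linearizable, and then invokes the classical static linearization theorem.
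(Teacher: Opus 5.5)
The paper gives no proof of this theorem; it is quoted from \citet{fliess1995flatness, fliess1999liebacklund}. Your sketch therefore has to stand on its own. Sufficiency is essentially fine, but necessity has a genuine gap at Step~2.

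For $(x,\zeta)\mapsto\bar z$ to be a local diffeomorphism, $\bar z=\Phi(x,u,\dots,u^{(s)})$ must \emph{factor} through $(x,\zeta)$. That is, $\zeta$ must be constant on the fibres of $\Phi$, hence a function of $\bar z$ alone. Full rank of $D\Phi$ does not give this. It only lets the implicit function theorem pick $q$ input-jet coordinates in which $\Phi$ is locally invertible once the other coordinates are frozen. It does not remove the dependence of $\Phi$ on those other coordinates.

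The differential-drive robot already defeats the selection. Take wheel speeds $(u_R,u_L)$ with $v=\tfrac12(u_R+u_L)$ and $\omega=(u_R-u_L)/b$, flat output $z=(x,y)$, and the forced indices $R=(2,2)$, so $q=1$. Then $\bar z=\Phi=(x,\,y,\,v\cos\theta,\,v\sin\theta)$. The minor of $D\Phi$ in the columns $(x,y,\theta,u_R)$ is nonsingular for $v\neq 0$, so your rule returns $\zeta=u_R$ (or, symmetrically, $u_L$). Yet $\bar z$ still depends on the other wheel speed, so it is not a function of $(x,y,\theta,\zeta)$. No reduction of $R$ is available either, since $\theta=\operatorname{atan2}(\dot y,\dot x)$ forces $r_1,r_2\ge 2$. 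The only admissible completion is $\zeta=v=\pm\|\dot z\|$, which is not a raw input-jet coordinate.

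The missing idea is to complete coordinates on the flat side.

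\begin{itemize}
\item Differentiating the identity $\Psi_x\bigl(\Phi(x,u,\dots,u^{(s)})\bigr)\equiv x$ gives $D\Psi_x\,D_x\Phi=I_n$, so $\Psi_x$ is a submersion at $\bar z_0$. This identity is the correct justification. Surjectivity onto reachable states is not, because a surjective map need not be a submersion.
\item Pick $q=|R|-n$ functions $\tilde\sigma(\bar z)$, for instance suitable components of $\bar z$, such that $\bar z\mapsto\bigl(\Psi_x(\bar z),\tilde\sigma(\bar z)\bigr)$ is a local diffeomorphism. Set $\zeta:=\tilde\sigma(\bar z)$.
\item Endogeneity is then automatic, since $\zeta=\tilde\sigma\bigl(\Phi(x,u,\dots,u^{(s)})\bigr)$.
\item Your Step~3 now closes. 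Both $\dot\zeta=D\tilde\sigma(\bar z)\,\dot{\bar z}$, with $\dot{\bar z}$ the integrator chain driven by $w=z^{(R)}$, and $u=\Psi_u(\bar z,w)$ are functions of $(x,\zeta,w)$.
\item Differential independence of $z$ makes $\tfrac{\mathrm{d}}{\mathrm{d}t}\Psi_x(\bar z)=f\bigl(\Psi_x(\bar z),\Psi_u(\bar z,w)\bigr)$ an identity in $(\bar z,w)$. So the closed loop is in Brunovsk\'y form in the coordinates $\bar z$.
\end{itemize}

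Two smaller points. Step~1 must choose $r_i$ so that $\Psi_x$ involves derivatives of order at most $r_i-1$; otherwise $x=\Psi_x(\bar z)$ fails. Your fallback only defers to the literature, so it does not close the gap.
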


The variable $w = (w_{1}, \dots, w_{d})$ is called the \emph{flat
input}, the tuple
$\bigl( z_{i}, \dot{z}_{i}, \dots, z_{i}^{(r_{i}-1)}
\bigr)_{i=1}^{d}$ is the \emph{flat state} of dimension
$|R| = \sum_{i} r_{i}$, and the integer $d$ is the \emph{flat
dimension}.

Following Definition~\ref{def:flat}, the flat output $z$ may
additionally depend on the input and its derivatives. Such a
dependence can be absorbed into the compensator state following
Theorem~\ref{thm:flmr}, as the following remark records.

\begin{remark}[State-determined flat output]
\label{rem:state-flat}
Consider the extended system of Theorem~\ref{thm:flmr} with state
$(x, \zeta)$ and input $w$. Each component $z_{i}$ and each
derivative $z_{i}^{(k)}$, $0 \le k \le r_{i} - 1$, is a
coordinate of the diffeomorphism
\[
  (x, \zeta)
  \;\mapsto\;
  \bigl( z_{i}, \dot{z}_{i}, \dots, z_{i}^{(r_{i}-1)}
  \bigr)_{i=1}^{d} .
\]
There accordingly exists a map $\tilde{h}$ such that
\[
  z(t) \;=\; \tilde{h}\bigl( x(t), \zeta(t) \bigr)
\]
along every trajectory of the extended system. We work with the
extended state whenever necessary and write $z = h(x)$ without
loss of generality.
\end{remark}

\subsection{Dispersion and Metric Entropy}
\label{sec:prelim-entropy}

Let $(\mathcal{M}, \rho)$ be a metric space and let
$\mathcal{Z} \subseteq \mathcal{M}$ be totally bounded.

\begin{definition}[Dispersion \citep{niederreiter1992random}]
\label{def:disp-classical}
The \emph{dispersion} of a finite set $P \subset \mathcal{M}$
over $\mathcal{Z}$ is
\[
  \mathrm{disp}(P;\, \mathcal{Z})
  \;=\;
  \sup_{\zeta \in \mathcal{Z}} \, \min_{p \in P} \,
  \rho(\zeta, p) ,
\]
the radius of the largest ball centered in $\mathcal{Z}$ that
contains no point of $P$.
\end{definition}

\begin{example}
\label{ex:disp-interval}
Let $\mathcal{Z} = \mathcal{M} = [0, 1]$ with the metric
$\rho(x, y) = |x - y|$.
\begin{itemize}
\item[(i)] The midpoints of the three equal subintervals of
$[0, 1]$,
\[
  P_{3} \;=\; \Bigl\{\, \frac{1}{6},\; \frac{1}{2},\;
  \frac{5}{6} \,\Bigr\} ,
\]
satisfy $\mathrm{disp}(P_{3};\, [0, 1]) = 1/6$.
\item[(ii)] In general, the set of the midpoints of the $n$
equal subintervals of $[0, 1]$,
\[
  P_{n} \;=\; \Bigl\{\, \frac{2k - 1}{2n} \;:\; k = 1, \dots, n
  \,\Bigr\} ,
\]
attains the least dispersion $1/(2n)$ among sets of $n$ points.
\end{itemize}
\end{example}

\begin{definition}[Covering and packing numbers]
\label{def:covpack}
Let $s > 0$.
\begin{itemize}
\item[(i)] The \emph{covering number} of $\mathcal{Z}$ is the
size of its smallest subset with dispersion at most $s$ over
$\mathcal{Z}$,
\[
  N_{s}(\mathcal{Z})
  \;=\;
  \min \bigl\{\, |Q| \;:\;
  Q \subseteq \mathcal{Z} ,\;
  \mathrm{disp}(Q;\, \mathcal{Z}) \le s
  \,\bigr\} .
\]
\item[(ii)] The \emph{packing number} of $\mathcal{Z}$ is the
size of its largest subset with members pairwise more than $s$
apart,
\[
  P_{s}(\mathcal{Z})
  \;=\;
  \max \Bigl\{\, |Q| \;:\;
  Q \subseteq \mathcal{Z} ,\;
  \min_{q \ne q' \in Q} \rho(q, q') > s
  \,\Bigr\} .
\]
\end{itemize}
\end{definition}

\begin{example}
\label{ex:covpack-interval}
Let $\mathcal{Z} = [0, 1]$ as in
Example~\ref{ex:disp-interval}.
\begin{itemize}
\item[(i)] The midpoint sets $P_{n}$ of
Example~\ref{ex:disp-interval} are the smallest covers, so
\[
  N_{s}\bigl( [0, 1] \bigr)
  \;=\;
  \Bigl\lceil \frac{1}{2s} \Bigr\rceil .
\]
\item[(ii)] The $\lceil 1/s \rceil$ evenly spaced points form a
largest set with pairwise separation exceeding $s$, so
\[
  P_{s}\bigl( [0, 1] \bigr)
  \;=\;
  \Bigl\lceil \frac{1}{s} \Bigr\rceil .
\]
\end{itemize}
\end{example}

At scale $s$, the covering and the packing numbers obey the
relationship
\[
  P_{2s}(\mathcal{Z})
  \;\le\;
  N_{s}(\mathcal{Z})
  \;\le\;
  P_{s}(\mathcal{Z}) .
\]
The quantity $\log N_{s}(\mathcal{Z})$ is called the
\emph{metric entropy} of $\mathcal{Z}$ at scale $s$
\citep{kolmogorov1959entropy}. In the following result, we recall
the classical entropy estimate for Lipschitz function classes,
which our complexity results build upon.

\begin{theorem}[Entropy of Lipschitz classes
\citep{kolmogorov1959entropy}]
\label{fact:lipentropy}
For $M > 0$, let $\mathrm{Lip}_{M}$ be the set of $M$-Lipschitz
continuous functions on $[0, 1]$ that vanish at zero,
\[
  \begin{aligned}
  \mathrm{Lip}_{M}
  \;=\;
  \bigl\{\, f \in C([0, 1]) \;:\;& f(0) = 0, \\
  &|f(\tau) - f(\tau')| \le M\, |\tau - \tau'| \\
  &\forall\, \tau, \tau' \in [0, 1] \,\bigr\} ,
  \end{aligned}
\]
and let it carry the supremum metric
\[
  \rho(f, h)
  \;=\;
  \sup_{\tau \in [0, 1]} \bigl| f(\tau) - h(\tau) \bigr| .
\]
There exists a universal constant
$c_{0}$ such that
\[
  \log N_{s}\bigl( \mathrm{Lip}_{M} \bigr)
  \;\le\;
  c_{0}\, \frac{M}{s}
  \qquad \text{for every } 0 < s \le M .
\]
\end{theorem}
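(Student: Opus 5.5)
We prove the estimate with the classical Kolmogorov--Tikhomirov piecewise-linear construction: we build an explicit $s$-net of $\mathrm{Lip}_{M}$ whose cardinality is at most $3^{\lceil M/s\rceil}$ up to a harmless factor. By Definition~\ref{def:covpack} the covering number counts subsets of $\mathrm{Lip}_{M}$ itself, so every net element must lie in $\mathrm{Lip}_{M}$. We address this at the end, either by constructing the net inside the class or by the usual factor-two adjustment.

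\textbf{Step 1: grid and lattice values.} Set $n = \lceil M/s \rceil$ and $h = 1/n \le s/M$, and place the grid $\tau_{k} = k h$ for $k = 0, \dots, n$. Take $f \in \mathrm{Lip}_{M}$. Then $|f(\tau_{k+1}) - f(\tau_{k})| \le M h \le s$. Round each grid value to the lattice $\varepsilon\Z$ with $\varepsilon = Mh$, going greedily from left to right: put $a_{0} = 0$, and let $a_{k+1} \in a_{k} + \{-\varepsilon, 0, \varepsilon\}$ be the choice nearest to $f(\tau_{k+1})$. Induction gives $|f(\tau_{k}) - a_{k}| \le \varepsilon$. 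The inductive step holds because $f(\tau_{k+1}) \in [f(\tau_{k}) - \varepsilon, f(\tau_{k}) + \varepsilon] \subseteq [a_{k} - 2\varepsilon, a_{k} + 2\varepsilon]$, and every point of that interval lies within $\varepsilon$ of one of $a_{k} - \varepsilon$, $a_{k}$, $a_{k} + \varepsilon$. There are therefore at most $3^{n}$ possible sequences $(a_{k})$.

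\textbf{Step 2: interpolants lie in the class.} Let $g$ be the piecewise-linear interpolant of the points $(\tau_{k}, a_{k})$. Consecutive slopes are in $\{-M, 0, M\}$, so $g$ is $M$-Lipschitz and $g(0) = 0$. Hence $g \in \mathrm{Lip}_{M}$, and the admissibility requirement $Q \subseteq \mathcal{Z}$ is met automatically.

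\textbf{Step 3: sup-distance.} Fix $\tau \in [\tau_{k}, \tau_{k+1}]$. The function $f - g$ is $2M$-Lipschitz and satisfies $|f - g| \le \varepsilon$ at both endpoints. On an interval of length $h$, such a function is bounded by
\[
\varepsilon + 2M \cdot \tfrac{h}{2} = 2Mh .
\]
So the net has dispersion at most $2Mh \le 2s$. To reach $s$ rather than $2s$, we rerun the construction with $n = \lceil 2M/s\rceil$. This yields
\[
\log N_{s}\bigl( \mathrm{Lip}_{M} \bigr) \le \lceil 2M/s \rceil \log 3 \le (2M/s + 1)\log 3 \le 3\log 3 \cdot \frac{M}{s},
\]
where the last inequality uses $s \le M$, so that $1 \le M/s$. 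We may therefore take $c_{0} = 3\log 3$.

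\textbf{Main obstacle.} The only delicate point is the inductive rounding in Step 1. A naive rounding of each $f(\tau_{k})$ to the nearest lattice point would still stay within $\varepsilon/2$ of $f$, but it can produce jumps of $2\varepsilon$ between neighbouring values. That would destroy both the count $3^{n}$ (which needs the choice set at each step to have size three, independent of $M$) and the Lipschitz constant of $g$. The greedy three-choice recursion resolves both problems at once.
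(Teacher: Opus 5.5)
Your argument is correct, and it is essentially the classical Kolmogorov--Tikhomirov piecewise-linear net; the paper gives no proof of its own here and simply cites that source. Your greedy three-way rounding also keeps every net element inside $\mathrm{Lip}_{M}$, which is what the paper's internal covering number (Definition~\ref{def:covpack}) requires. One small quibble concerns your closing remark, which overstates the danger of naive rounding: rounding each value independently allows at most five increments per step, and the resulting count $5^{n}$ would still be good enough, so its only real cost is that the interpolants can leave $\mathrm{Lip}_{M}$, which the factor-two external-to-internal adjustment would repair.
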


\section{Problem Formulation}
\label{sec:setting}

Consider a platform governed by the nonlinear control system
\eqref{eq:sys}, with state $x \in \X \subset \R^{n}$ and control
input $u \in U \subset \R^{m}$. Our theoretical development
applies to a generic finite-horizon optimal control problem
(OCP) for such a platform, with the following structure:
\begin{equation}
\label{eq:ocp}
\begin{aligned}
  \min_{u(\cdot)} \quad
  & J\bigl( x(\cdot) \bigr)
    \\[0.2em]
  \text{s.t.} \quad
  & \dot{x}(t) = f\bigl(x(t), u(t)\bigr),
    \qquad t \in [0, T], \\[0.2em]
  & g\bigl( x(\cdot) \bigr)
    \ge 0 \qquad \forall\, g \in \preds, \\[0.2em]
  & x(0) = x_{\mathrm{init}},
\end{aligned}
\end{equation}
where the cost functional $J : C([0,T]; \X) \to \R$ and the
predicates $g : C([0,T]; \X) \to \R$, $g \in \preds$, read the
state trajectory. This structure subsumes kinodynamic motion
planning instances, which are of typical interest in this work.

\begin{example}[Kinodynamic planning]
\label{ex:kino-instance}
Given a time-varying free space
$\X_{\mathrm{free}}(t) \subseteq \X$, a goal
region $\X_{\mathrm{goal}} \subseteq \X$, and an initial state
$x_{\mathrm{init}}$, the kinodynamic motion planning problem
\begin{equation}
\label{eq:kino}
\begin{aligned}
  \min_{T_{f},\, u(\cdot)} \quad
  & T_{f}
    \\[0.2em]
  \text{s.t.} \quad
  & \dot{x}(t) = f\bigl(x(t), u(t)\bigr),
    \qquad t \in [0, T_{f}], \\[0.2em]
  & x(t) \in \X_{\mathrm{free}}(t),
    \qquad t \in [0, T_{f}], \\[0.2em]
  & x(T_{f}) \in \X_{\mathrm{goal}},
    \qquad x(0) = x_{\mathrm{init}}
\end{aligned}
\end{equation}
asks for a control signal whose trajectory from the initial
state remains in the free space and arrives in the goal region
in minimum time. Let
$\operatorname{sd}( x,\, \X \setminus \X_{\mathrm{free}}(t) )$
denote the signed distance of the state $x$ to the obstacle set
$\X \setminus \X_{\mathrm{free}}(t)$, and
let $\operatorname{dist}( x,\, \X_{\mathrm{goal}} )$ denote the
distance of $x$ to the goal region. Then, setting the predicates
and the cost as
\[
  \begin{aligned}
  g_{\mathrm{free}}\bigl( x(\cdot) \bigr)
  &\;=\;
  \min_{t \in [0, T]}
  \operatorname{sd}\bigl( x(t),\,
  \X \setminus \X_{\mathrm{free}}(t) \bigr) , \\
  g_{\mathrm{goal}}\bigl( x(\cdot) \bigr)
  &\;=\;
  - \min_{t \in [0, T]}
  \operatorname{dist}\bigl( x(t),\, \X_{\mathrm{goal}} \bigr) ,
  \\
  J\bigl( x(\cdot) \bigr)
  &\;=\;
  \inf \bigl\{\, t \;:\; x(t) \in \X_{\mathrm{goal}} \,\bigr\}
  \end{aligned}
\]
recovers \eqref{eq:kino} within the structure of \eqref{eq:ocp}
on a horizon $T$ long enough to contain candidate solutions.
\end{example}

\subsection{Assumptions}
\label{sec:assumptions}

The theoretical development of this paper rests on a generic set
of assumptions enumerated below.

(i) \textbf{Regularity:} The dynamics $f$ is regular enough that,
starting from any state $x \in \X$, each admissible control
$u : [0, T] \to U$ generates a unique trajectory of
\eqref{eq:sys}.

(ii) \textbf{Differential flatness with indices $R$:} The system is
differentially flat in the sense of Theorem~\ref{thm:flmr}, with
flat output $z = h(x) \in \R^{d}$ following
Remark~\ref{rem:state-flat} and indices
$R = (r_{1}, \dots, r_{d})$,
on a fixed open domain of regular points. Denote
\[
  \begin{aligned}
  \jet z
  &\;:=\;
  \bigl( z_{i},\, \dot{z}_{i},\, \dots,\, z_{i}^{(r_{i}-1)}
  \bigr)_{i=1}^{d}
  \;\in\; \R^{|R|}, \\
  z^{(R)}
  &\;:=\;
  \bigl( z_{i}^{(r_{i})} \bigr)_{i=1}^{d}
  \;\in\; \R^{d}, \\
  |R|
  &\;:=\;
  \textstyle\sum_{i} r_{i},
  \qquad
  \bar{r} := \textstyle\max_{i} r_{i},
  \end{aligned}
\]
and let $|\cdot|_{\infty}$ denote the entrywise maximum norm. The flatness maps of Definition~\ref{def:flat} accordingly take
the continuously differentiable form
\[
  x(t) = \Psi_{x}\bigl( \jet z (t) \bigr),
  \quad
  u(t) = \Psi_{u}\bigl( \jet z (t),\, z^{(R)}(t) \bigr)
\]
along every trajectory.

Conversely, let $z(\cdot)$ be a curve with components $z_{i}$ in
the Sobolev space $W^{r_{i},\infty}([0,\tau]; \R)$ of scalar
functions with essentially bounded derivatives up to order
$r_{i}$. Suppose that, for almost every $t \in [0, \tau]$,
\[
  \Psi_{u}\bigl( \jet z (t),\, z^{(R)}(t) \bigr)
  \;\in\; U .
\]
Then $z = h\bigl( x(\cdot) \bigr)$ for a trajectory $x(\cdot)$ of
the system, realized by the control
$u(t) = \Psi_{u}\bigl( \jet z (t),\, z^{(R)}(t) \bigr)$ from the
state $x(0) = \Psi_{x}\bigl( \jet z (0) \bigr)$.

(iii) \textbf{Compact input set:} The input set $U \subset \R^{m}$
is compact.

(iv) \textbf{Jet observability:} There is a map
$\Xi = (\Xi_{i,k})_{1 \le i \le d,\; 0 \le k \le r_{i}-1}$ such
that, along every trajectory of the system,
\[
  \jet z (t)
  \;=\;
  \Xi\bigl( x(t) \bigr) .
\]
Following Remark~\ref{rem:state-flat}, an extension of the state
renders the jet observable whenever required.

\subsection{Flat Reformulation}
\label{sec:flatform}

It is convenient for our development downstream that the
OCP~\eqref{eq:ocp} be recast into a flat reformulation, which we
attend to in this section.

Let $\tra : [0, \tau] \to \X$ be a \emph{trajectory}, a
continuous curve in the state space, with the \emph{output trace}
\[
  z_{\tra} \;=\; h \circ \tra \;\in\; C([0,\tau]; \R^{d}) .
\]
By jet observability, the output jet along a trajectory is the
function $\Xi$ of its state, and we write
\[
  \jet z_{\tra}(t) \;=\; \Xi\bigl( \tra(t) \bigr),
  \qquad
  z^{(k)}_{\tra,i}(t) \;=\; \Xi_{i,k}\bigl( \tra(t) \bigr) .
\]

By flatness (ii) and jet observability (iv), the state along a
trajectory is a function of the output jet,
\[
  x(t) \;=\; \Psi_{x}\bigl( \jet z (t) \bigr),
  \qquad
  \jet z (t) \;=\; \Xi\bigl( x(t) \bigr),
\]
so the cost and the predicates of \eqref{eq:ocp} induce functionals
of the output jet curve, which we continue to denote by $J$ and
$g$,
\[
  \begin{aligned}
  J\bigl( x(\cdot) \bigr)
  &\;=\;
  J\bigl( \Psi_{x}( \jet z (\cdot) ) \bigr)
  \;=:\;
  J\bigl( \jet z (\cdot) \bigr), \\
  g\bigl( x(\cdot) \bigr)
  &\;=:\;
  g\bigl( \jet z (\cdot) \bigr),
  \quad g \in \preds .
  \end{aligned}
\]
Moreover, by the flatness converse in (ii), the trajectories of
admissible controls from $x_{\mathrm{init}}$ are exactly the state
realizations of the output curves $z(\cdot)$,
$z_{i} \in W^{r_{i},\infty}([0,T]; \R)$, whose induced control lies
in $U$ almost everywhere and that start at the initial jet.

The problem \eqref{eq:ocp} is therefore equivalent to the
output-space problem
\begin{equation}
\label{eq:ocp-flat}
\begin{aligned}
  \min_{z(\cdot)} \quad
  & J\bigl( \jet z (\cdot) \bigr)
    \\[0.2em]
  \text{s.t.} \quad
  & \Psi_{u}\bigl( \jet z (t),\, z^{(R)}(t) \bigr) \;\in\; U
    \ \ \text{a.e.\ on } [0, T], \\[0.2em]
  & g\bigl( \jet z (\cdot) \bigr)
    \ge 0 \qquad \forall\, g \in \preds, \\[0.2em]
  & \jet z (0) = \Xi\bigl( x_{\mathrm{init}} \bigr) .
\end{aligned}
\end{equation}
Our theory below addresses \eqref{eq:ocp} through this flat
reformulation.

We close the section by recording a notational convention used
throughout the development.

\begin{remark}[Evaluation on output traces]
\label{rem:trace-eval}
Along a trajectory $\tra$, each component $z_{\tra,i}$ of the
output trace is $r_{i}$-times differentiable, and its derivatives
assemble the jet curve
\[
  \jet z_{\tra}(t)
  \;=\;
  \bigl( z_{\tra,i}(t),\, \dot{z}_{\tra,i}(t),\, \dots,\,
  z^{(r_{i}-1)}_{\tra,i}(t) \bigr)_{i=1}^{d} .
\]
The output trace therefore determines the arguments of the cost
and the predicates, and we write
\[
  J\bigl( z_{\tra} \bigr),
  \qquad
  g\bigl( z_{\tra} \bigr), \quad g \in \preds ,
\]
for their evaluations along the trajectory.
\end{remark}

\section{Dispersive Tree Search: Coverage and Complexity}
\label{sec:treesearch}

This section contains all the major theoretical results of our
article. We begin with an overview of the development before
presenting our results through the rest of the section.

\subsection{Overview of Results}
\label{sec:overview}

In this subsection, we offer an overarching summary of the
theoretical results that follow through the rest of this section.

The central object of our investigation is a \emph{dispersive
forward tree}. Given a finite set $\A$ of flat-input
\emph{commands} of unit duration, the dispersive tree
$\mathcal{T}$ of depth $K$ with its root node at the state
$x_{\mathrm{init}}$ is grown by recursively applying every
command of $\A$ at each node. The command set $\A$ is
\emph{dispersive} in the sense that each local trajectory
starting from a state $x$ falls within a deviation $\delta$
of the finite set of rollouts grown by $\A$.

Our development proceeds to establish \emph{coverage} results
for dispersive forward trees. In particular, we certify that a
smooth trajectory class generated by interior control commands is
covered by the forward tree at a deviation $O(\delta)$, uniformly
in the tree depth.

Following this coverage, dispersive trees are then able to
attain closure to the optimal cost of trajectories in this
class that remain robust to the problem predicates --- the
\emph{$M$-smooth, $\varepsilon$-eroded, $\gamma$-robust optimum}
\[
  J^{*}_{M,\gamma,\varepsilon}
  \;=\;
  \inf_{u}\; J(z_{u})
  \;\left|\;
  \begin{array}{l}
  u \ \text{$\varepsilon$-interior to } U , \\
  g(z_{u}) \ge \gamma \quad \forall\, g \in \preds , \\
  z_{u} \ \text{$M$-smooth} ,
  \end{array}
  \right.
\]
where $z_{u}$ denotes the output trace of the trajectory under
the control $u$ from $x_{\mathrm{init}}$.

Our results permit the control erosion $\varepsilon$
and the predicate robustness $\gamma$ to scale with the local
dispersion $\delta_{\A}$ of the command set. Attaining a finer
dispersion, however, requires exponentially many commands, and
the complete dispersive tree grows intractably large over long
horizons. Indeed, using packing number estimates, we show that a tree size
exponential in the horizon is necessary to maintain dispersive
coverage over the trajectory class.

Fortunately, we show that for a large class of costs and
predicates encompassing common planning objectives, it is
possible to deploy spatio-temporal pruning at a desired scale
$s$, and prune down the required tree size to a polynomial
complexity $O\bigl( K^{d+1} ( M / s )^{|R|} \bigr)$, and with it
the deployment compute to
$O\bigl( |\A|\, K^{d+1} ( M / s )^{|R|} \bigr)$, all while
maintaining closure to the optimal trajectories of the class.

The remainder of this section develops these results
in order. In Section~\ref{sec:sampling}, we formalize dispersive
local sampling and exhibit the complexity and generic
constructions of dispersive command sets. In
Section~\ref{sec:correction}, we prove coverage of the smooth
interior trajectory class and closure of the search cost to the
optimum $J^{*}_{M,\gamma,\varepsilon}$.
Section~\ref{sec:sparse} establishes the necessity of an
exponential tree, and in Section~\ref{sec:genpruning}, we
develop the dominance pruning procedure that reduces the
required size to a polynomial in the horizon.

\subsection{Dispersive Local Sampling}
\label{sec:sampling}

Our theory throughout operates under the following metric defined
on the trajectories of the flat system.

\begin{definition}[Sobolev metric]
\label{def:dist}
For state trajectories $\tra, \tra'$ both defined on an interval
$I$, the \emph{Sobolev metric} reads their output jets,
$d\bigl(\tra, \tra'; I\bigr) =$
\[
  \sup_{t \in I}\; \max_{1 \le i \le d}\;
  \max_{0 \le k \le r_{i}-1}\;
  \bigl| z^{(k)}_{\tra,i}(t) - z^{(k)}_{\tra',i}(t) \bigr| ;
\]
when $I$ is omitted it is the common domain of the two trajectories.
\end{definition}

Further, we operate within smoothness envelopes on the system's
trajectories, handling the $M$-smooth trajectory classes defined
below throughout.

\begin{definition}[$M$-smooth trajectory class]
\label{def:msmooth}
For $M > 0$, a trajectory $\tra$ of the system is
\emph{$M$-smooth} if its jet and flat input are bounded by $M$,
i.e.,
\begin{equation}
\label{eq:envelope}
  \bigl| z^{(k)}_{\tra,i}(t) \bigr| \;\le\; M,
  \qquad 1 \le k \le r_{i}, \quad 1 \le i \le d ,
\end{equation}
with $w := z^{(R)}$ as the flat input. We denote the class of
$M$-smooth trajectories by $\mathcal{E}_{M}$.
\end{definition}

In this subsection, we are concerned with the existence,
complexity, and construction of a dispersive command set. A
command set generates a finite set of local trajectories of the
system starting from a given state, as the defining sequence
below clarifies.

\begin{definition}
\label{def:accepted}
(i) A \emph{command} $a$ is a measurable flat input
$w_{a} : [0,1] \to \R^{d}$. Following the integrator chain of
Theorem~\ref{thm:flmr}, the command $a$ at a state $x$ produces
the flat trajectory
\begin{equation}
\label{eq:rollout}
  \begin{aligned}[t]
  &z^{x}_{a,i}(t)
  \\
  &\;=\;
  \sum_{k=0}^{r_{i}-1} \Xi_{i,k}(x)\, \frac{t^{k}}{k!}
  +
  \int_{0}^{t} \frac{(t - s)^{r_{i}-1}}{(r_{i}-1)!}\,
  w_{a,i}(s)\, \mathrm{d}s
  \\
  &\;=:\;
  \roll_{x}(a)_{i}(t) ,
  \end{aligned}
\end{equation}
where $\Xi(x)$ is the output jet at $x$ from jet observability
(iv); we call $z^{x}_{a}$ the \emph{flat rollout} of $a$ from
$x$, and the control it induces is
\[
  u^{x}_{a}(t)
  \;=\;
  \Psi_{u}\bigl( \jet z^{x}_{a}(t),\, w_{a}(t) \bigr) .
\]
A \emph{command set} $\A$ is a finite set of commands.

(ii) The \emph{accepted set} of $\A$ at $x$ is
\[
  \A(x)
  \;=\;
  \bigl\{\, a \in \A \;:\;
  u^{x}_{a}(t) \in U \ \text{a.e.\ on } [0, 1]
  \,\bigr\} ,
\]
the commands whose induced control is admissible.

(iii) A command sequence $(a_{1}, \dots, a_{K})$ is
\emph{accepted} from $x$ if each $a_{k}$ is accepted at the state
reached by its predecessor.
\end{definition}

Given a state $x$, the accepted commands $a \in \A(x)$ sample
finitely many local trajectories over a time window. We are
interested in the scale at which this finite set covers the class
of all locally admissible trajectories. We work with $M$-smooth
trajectories generated by control inputs sampled in the
$\varepsilon$-interior of the input set $U$, as formalized below.

\begin{definition}[Eroded admissibility]
\label{def:erosion}
For $\varepsilon > 0$, the \emph{$\varepsilon$-eroded input set} is
\[
  U^{\varepsilon}
  \;=\;
  \bigl\{\, u \in U \;:\;
  u + [-\varepsilon, \varepsilon]^{m} \subseteq U \,\bigr\} ,
\]
and an admissible control $u$ is \emph{$\varepsilon$-eroded}, i.e.,
$u \in \mathcal{U}^{\varepsilon}$, if $u(t) \in U^{\varepsilon}$
a.e. Let $\tra^{x}_{u}$ be the trajectory generated by $u$ from a
state $x$. The class
\[
  \mathcal{U}^{\varepsilon}_{M}(x)
  \;:=\;
  \bigl\{\, u \in \mathcal{U}^{\varepsilon} \;:\;
  \tra^{x}_{u} \in \mathcal{E}_{M} \,\bigr\}
\]
defines the set of $\varepsilon$-eroded controls whose trajectory
from $x$ is $M$-smooth.
\end{definition}

The coverage offered by a command set over a local trajectory
class is measured by its dispersion over the class, as defined
below. Recall Definition~\ref{def:disp-classical}, here taken in
the Sobolev metric $d(\cdot, \cdot;\, [0, 1])$ of
Definition~\ref{def:dist}.

\begin{definition}[Dispersive Command Sets]
\label{def:dispersive}
Let $\mathcal{V}$ be a class of admissible controls on $[0, 1]$.
The command set $\A$ is \emph{$\delta$-dispersive over
$\mathcal{V}$ at a state $x \in \X$} if
\[
  \mathrm{disp}\Bigl(
  \bigl\{\, \roll_{x}(a) \;:\; a \in \A(x) \,\bigr\} ;\;
  \bigl\{\, \tra^{x}_{v} \;:\; v \in \mathcal{V} \,\bigr\}
  \Bigr)
  \;\le\; \delta .
\]
\end{definition}

\label{sec:realization}

Our downstream development that certifies coverage and
optimality for the OCP~\eqref{eq:ocp} presupposes locally
dispersive command sets. In our first result below, we record the
existence of such a set at every state and estimate its size via
the metric entropy of a local trajectory class.

\begin{proposition}[Existence and complexity]
\label{prop:floor}
Let $x \in \X$ and $\delta \in (0, M]$. There exists a command
set $\A^{*}$ that is $\delta$-dispersive at $x$ over the
$M$-smooth admissible class
$\bigl\{\, v \;:\; \tra^{x}_{v} \in \mathcal{E}_{M} \,\bigr\}$.
Moreover, its cardinality obeys
\[
  \log |\A^{*}| \;\le\; c_{\star}\, d\, \frac{M}{\delta} ,
\]
where $c_{\star}$ is an independent numerical constant.
\end{proposition}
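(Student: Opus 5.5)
The plan is to reduce the existence claim to a covering-number estimate for Lipschitz classes (Theorem~\ref{fact:lipentropy}), applied componentwise to the top jet derivative, and then to transfer a cover of the local trajectory class back to commands. Fix $x$ and let $\mathcal{C} = \{\tra^{x}_{v} : \tra^{x}_{v} \in \mathcal{E}_{M}\}$ on $[0,1]$. Every member of $\mathcal{C}$ starts from the same output jet $\Xi(x)$. For each component $i$ and order $0 \le k \le r_{i}-1$, the map $t \mapsto z^{(k)}_{\tra,i}(t) - \Xi_{i,k}(x)$ vanishes at zero and is $M$-Lipschitz, because its derivative $z^{(k+1)}_{\tra,i}$ is bounded by $M$ by \eqref{eq:envelope}.

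First I note that only the order $k = r_{i}-1$ has to be controlled. If two trajectories share the initial jet and their $(r_{i}-1)$-th derivatives differ by at most $\eta$ in sup norm on $[0,1]$, then integrating from the common initial values shows that every lower derivative also differs by at most $\eta$. So the Sobolev distance $d(\cdot,\cdot;[0,1])$ between members of $\mathcal{C}$ equals $\max_{i}\sup_{t}\bigl|z^{(r_{i}-1)}_{\tra,i} - z^{(r_{i}-1)}_{\tra',i}\bigr|$. The map
\[
\tra \mapsto \bigl(z^{(r_{i}-1)}_{\tra,i} - \Xi_{i,r_{i}-1}(x)\bigr)_{i=1}^{d}
\]
therefore embeds $\mathcal{C}$ isometrically, in the max-over-$i$ sup metric, into $\mathrm{Lip}_{M}^{d}$. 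A product of covers has cardinality at most $N_{s}(\mathrm{Lip}_{M})^{d}$, so Theorem~\ref{fact:lipentropy} gives $\log N_{s}(\mathrm{Lip}_{M}^{d}) \le c_{0} d M/s$.

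The cover must consist of rollouts of accepted commands, i.e.\ of points of $\mathcal{C}$ itself, not arbitrary Lipschitz functions. I will cover the subset directly. The $\delta$-covering number of a subset $\mathcal{C}$ of a metric space is at most the $(\delta/2)$-covering number of the ambient space. Take a $(\delta/2)$-cover of $\mathrm{Lip}_{M}^{d}$, discard the balls that miss $\mathcal{C}$, and pick one representative $\tra_{j} = \tra^{x}_{v_{j}} \in \mathcal{C}$ in each remaining ball. By the triangle inequality the representatives have dispersion at most $\delta$ over $\mathcal{C}$, and their number satisfies
\[
\log(\cdot) \le c_{0} d\, 2M/\delta .
\]
This requires $\delta/2 \le M$, which holds. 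Set $c_{\star} = 2c_{0}$. Each representative then defines a command $a_{j}$ with $w_{a_{j}} := z^{(R)}_{\tra_{j}}$, which is measurable and bounded by $M$. By \eqref{eq:rollout} and the integrator chain of Theorem~\ref{thm:flmr}, $\roll_{x}(a_{j})$ reproduces the output trace of $\tra_{j}$. Its induced control is $\Psi_{u}(\jet z, w) = v_{j} \in U$ a.e., so $a_{j} \in \A^{*}(x)$.

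The main obstacle I expect is this last identification. It needs the rollout formula to agree with the actual trajectory, which in turn needs the flatness maps to be well defined along the trajectory (regular points, assumption (ii)) and $u = \Psi_{u}(\jet z, z^{(R)})$ to recover $v_{j}$. I would settle it by invoking the forward direction of assumption (ii) together with uniqueness (i). If $\mathcal{C}$ is empty, the statement holds trivially with $\A^{*} = \emptyset$.
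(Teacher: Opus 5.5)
Your proposal is correct and matches the paper's proof essentially step for step. Like the paper, you:
\begin{itemize}
\item reduce the Sobolev distance to the top-order derivatives using the common initial jet (Lemma~\ref{lem:toporder});
\item place the translated top-order curves in the $d$-fold product of $\mathrm{Lip}_{M}$ and take a $\delta/2$ cover;
\item pick one internal representative in each ball that meets the class;
\item define each command as that representative's flat input, which is accepted because $\Psi_{u}$ returns the original control $v_{\iota}$ and whose rollout reproduces the trace (the paper cites Taylor's formula here);
\item absorb the resulting factor $2$ into $c_{\star}$.
\end{itemize}
Your remark that the embedding is actually an isometry is a harmless sharpening, since only the inequality direction is used.
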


To execute dispersive forward tree search, a deterministic
procedure that constructs a dispersive command set at every state
is necessary. One such procedure is provided below for
\emph{control-affine} systems. The idea is to sample and roll out
flat inputs dispersively, and reject trajectories that violate
authority requirements. Suppose that
\[
  f(x, u) \;=\; f_{0}(x) + g(x)\, u ,
\]
with $f_{0}$ and $g$ continuous and $g(x)$ of full column rank on
the domain, and that the input set $U$ is convex. Control-affine
dynamics render the input map affine in the flat input.

\begin{lemma}[Affinity of the input map]
\label{lem:affine}
Under the control-affine supposition, the input map $\Psi_{u}$ of
Assumption (ii), Section~\ref{sec:assumptions}, satisfies
\[
  \Psi_{u}\bigl( \jet z,\, w \bigr)
  \;=\;
  \alpha\bigl( \jet z \bigr) + B\bigl( \jet z \bigr)\, w ,
\]
with $\alpha$ and $B$ locally Lipschitz.
\end{lemma}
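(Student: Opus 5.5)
The plan is to read the control off the dynamics by a left inverse of $g(x)$, and to obtain the flat-input dependence from the integrator chain of Theorem~\ref{thm:flmr}. Along any trajectory, $\dot{x} = f_{0}(x) + g(x)\,u$. Since $g(x)$ has full column rank, the matrix $g(x)^{\top} g(x)$ is invertible, and the left inverse
\[
  g^{\dagger}(x) \;=\; \bigl( g(x)^{\top} g(x) \bigr)^{-1} g(x)^{\top}
\]
is continuous on the domain. Applying it gives
\[
  u \;=\; g^{\dagger}(x)\bigl( \dot{x} - f_{0}(x) \bigr).
\]
By flatness (ii), $x = \Psi_{x}(\jet z)$. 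Differentiating along the trajectory with the chain rule, and using that $\frac{d}{dt}\jet z$ consists of components of $\jet z$ shifted by one order, with the top entries equal to $z^{(R)} = w$, I would write
\[
  \dot{x} \;=\; D\Psi_{x}(\jet z)\,\bigl( S\,\jet z + E\,w \bigr).
\]
Here $S$ is the constant shift matrix of the Brunovsk\'y chain and $E$ is the constant matrix injecting $w_{i}$ into the slot of $z_{i}^{(r_{i})}$. Substituting this into the expression for $u$ gives
\[
  \alpha(\jet z) = g^{\dagger}(\Psi_{x})\bigl( D\Psi_{x}\,S\,\jet z - f_{0}(\Psi_{x}) \bigr),
  \qquad
  B(\jet z) = g^{\dagger}(\Psi_{x})\, D\Psi_{x}\, E,
\]
where every map on the right is evaluated at $\jet z$.

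Next I would check that this formula really is $\Psi_{u}$. Along every trajectory both expressions equal $u$. By the flatness converse in (ii), every pair $(\jet z, w)$ in the domain is realized by some trajectory, for instance one with constant $w$ over a short window. So the two maps agree wherever $\Psi_{u}$ is defined, and the affine structure in $w$ holds identically, not only along one curve.

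For local Lipschitz continuity, continuity of $f_{0}$, $g$ and $D\Psi_{x}$ alone would give only continuity of $\alpha$ and $B$. I would therefore read the standing regularity as $C^{1}$, or at least locally Lipschitz: the system \eqref{eq:sys} is assumed smooth, so $f_{0}$ and $g$ inherit smoothness from $f$, and $\Psi_{x}$ is a smooth map of a regular point by Theorem~\ref{thm:flmr}. Given this, matrix inversion is smooth on the invertible matrices, so $g^{\dagger}$ is locally Lipschitz. Products and compositions of locally Lipschitz maps are locally Lipschitz, which gives the claim for $\alpha$ and $B$.

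The main obstacle is this regularity bookkeeping. It requires that $D\Psi_{x}$ be locally Lipschitz, not merely continuous, and that full column rank hold uniformly on compact subsets. The latter follows from continuity of $g$ together with lower semicontinuity of the smallest singular value. If only continuity of $f_{0}$ and $g$ is granted, I would fall back on the smoothness of $f$ stated in \eqref{eq:sys} to justify the Lipschitz claim.
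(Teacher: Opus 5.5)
Your computation is the paper's. The left inverse of $g$, the chain rule through the integrator chain, and your formulas for $\alpha$ and $B$ all coincide with those in the appendix.

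The trouble is the step that upgrades the identity from ``along trajectories'' to ``for every pair $(\xi, w)$''. You invoke the flatness converse of Assumption (ii). That converse only realizes curves whose induced control satisfies $\Psi_{u}(\jet z(t), z^{(R)}(t)) \in U$ a.e., with $U$ compact. A constant-$w$ window from $\xi$ is therefore certified only when $\Psi_{u}(\xi, w)$ stays in $U$, in practice when it lies in the interior. So your argument proves the affine formula only on $\{(\xi, w) : \Psi_{u}(\xi, w) \in \operatorname{int} U\}$ and its closure, not for all $w \in \R^{d}$.

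This matters. The lemma is a statement about $\Psi_{u}(\xi, \cdot)$ on all of $\R^{d}$. Lemma~\ref{lem:margin} applies it at pairs such as $(\jet z_{v}(t), \bar{w}_{\ell})$ whose admissibility is not known in advance; that admissibility is what the argument is working toward.

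The paper closes this gap by realizing an arbitrary pair through Theorem~\ref{thm:flmr}. The endogenous feedback with constant new input $w$, started from the flat state $\xi$, gives a trajectory with $\jet z(0) = \xi$ and $z^{(R)} \equiv w$. No constraint $u \in U$ enters, since the control-affine field is defined for every $u \in \R^{m}$. The identity then holds for a.e.\ $t$ along this curve. It extends to every $t$ because both sides are continuous, and is read off at $t = 0$. Your sketch also skips this a.e.-to-pointwise step.

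On local Lipschitz continuity you take a heavier route than needed. Reading regularity off the explicit formula forces you to import Lipschitz continuity of $D\Psi_{x}$, $f_{0}$ and $g$ from the smoothness in \eqref{eq:sys} and Definition~\ref{def:flat}. That goes beyond the lemma's hypotheses ($f_{0}, g$ continuous) and the $C^{1}$ regularity of Assumption (ii).

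Your fallback is defensible, but the paper needs none of it. Once the affine identity holds for every $w$, one has $\alpha(\xi) = \Psi_{u}(\xi, 0)$ and $B(\xi) e_{k} = \Psi_{u}(\xi, e_{k}) - \Psi_{u}(\xi, 0)$. Hence $\alpha$ and $B$ are $C^{1}$ because $\Psi_{u}$ is, and so locally Lipschitz.

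This shortcut uses the identity at $w = 0$ and $w = e_{k}$, pairs that a $U$-restricted realization need not reach. That is one more reason to realize all pairs via Theorem~\ref{thm:flmr}.
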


For $j \in \mathbb{N}$, write
$I^{j}_{\ell} := [\tfrac{\ell-1}{j},\, \tfrac{\ell}{j}]$,
$1 \le \ell \le j$, for the equal subwindows of $[0,1]$. A lattice quantization of the
flat input on such subwindows realizes a deterministic procedure
to construct a dispersive command set, as proven by the theorem
below.

\begin{theorem}[Dispersion realization]
\label{thm:realization}
Let $\delta \in (0, M/2]$, set
$j := \lceil 4 \sqrt{d}\, M/\delta \rceil$ and
$n := \lceil 2 \sqrt{d}\, M/\delta \rceil$, and let
\[
  G \;=\;
  \Bigl\{\, -M + (2k - 1)\, \frac{M}{n} \;:\;
  k = 1, \dots, n \,\Bigr\}^{d}
\]
consist of the centers of the uniform grid of $[-M, M]^{d}$ with
$n$ cells per axis. For every
$\varepsilon \ge 2\, L_{\Psi}\, \delta$, the lattice command set
\[
  \A \;=\; \bigl\{\, w_{g} \;:\; g \in G^{j} \,\bigr\},
  \qquad
  w_{g}\big|_{I^{j}_{\ell}} \;\equiv\; g_{\ell} ,
\]
is $\delta$-dispersive over $\mathcal{U}^{\varepsilon}_{M}(x)$ at
every state $x$, and
\[
  \log |\A|
  \;\le\;
  c_{d}\, \frac{M}{\delta}\,
  \log\!\Bigl( 1 + \frac{M}{\delta} \Bigr) ,
\]
with $c_{d} = c_{d}(d)$ a constant depending on the flat
dimension $d$ alone.
\end{theorem}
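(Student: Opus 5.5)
The plan is to show that every $\varepsilon$-eroded, $M$-smooth trajectory from $x$ is shadowed, in the Sobolev metric, by the rollout of one explicit lattice command, and that this command is accepted at $x$. The cardinality bound then follows by counting.

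\textbf{Step 1 (choice of the shadowing command).} Fix $x$ and $v \in \mathcal{U}^{\varepsilon}_{M}(x)$, and let $w_{v} = z^{(R)}_{\tra^{x}_{v}}$ be its flat input, so that $|w_{v}|_{\infty} \le M$. On each subwindow $I^{j}_{\ell}$:
\begin{itemize}
\item form the window average $\bar{w}_{\ell} := j \int_{I^{j}_{\ell}} w_{v}$, which lies in $[-M, M]^{d}$;
\item let $g_{\ell} \in G$ be a nearest grid center, so that $|g_{\ell} - \bar{w}_{\ell}|_{\infty} \le M/n$.
\end{itemize}
The candidate command is $a = w_{g}$ with $g = (g_{1}, \dots, g_{j})$. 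Both rollouts start from the same jet $\Xi(x)$, so by \eqref{eq:rollout} the jet error is driven entirely by $e := w_{g} - w_{v}$.

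\textbf{Step 2 (jet bound via the primitive of $e$).} Set $E(t) := \int_{0}^{t} e$. For $t$ in window $\ell'$, each completed window contributes at most $(M/n)(1/j)$, because on it the average of $w_{v}$ is replaced by $g_{\ell}$. The partial window contributes at most $2M/j$, since $|e|_{\infty} \le 2M$. Summing over at most $j$ windows gives
\[
  |E(t)|_{\infty} \;\le\; \frac{M}{n} + \frac{2M}{j} \;\le\; \frac{\delta}{2\sqrt{d}} + \frac{\delta}{2\sqrt{d}} \;\le\; \delta .
\]
This is exactly the error in the derivative $z^{(r_{i}-1)}_{i}$. For a lower derivative $k < r_{i}-1$, the error is $\int_{0}^{t} K(t-s)\, e_{i}(s)\,\mathrm{d}s$ with $K(\tau) = \tau^{m}/m!$ and $m = r_{i}-1-k \ge 1$. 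Integrating by parts (using $K(0)=0$ when $m \ge 1$ and $E(0)=0$) turns this into $\int_{0}^{t} K'(t-s)\, E_{i}(s)\,\mathrm{d}s$. Since $\int_{0}^{1} |K'| \le 1$, this error is also at most $\delta$. Hence $d\bigl(\roll_{x}(a), \tra^{x}_{v}; [0,1]\bigr) \le \delta$.

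\textbf{Step 3 (acceptance, the main obstacle).} It remains to show that $a \in \A(x)$, i.e.\ that $u^{x}_{a}(t) \in U$ almost everywhere. A pointwise comparison with $u_{v}$ fails, because $w_{g}$ and $w_{v}$ can differ by $2M$ at individual times.

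The plan is instead to compare $u^{x}_{a}$ with the window average of $u_{v}$. By Lemma~\ref{lem:affine}, on $I^{j}_{\ell}$ we have $u^{x}_{a} = \alpha(\jet z_{a}) + B(\jet z_{a})\, g_{\ell}$ and $u_{v} = \alpha(\jet z_{v}) + B(\jet z_{v})\, w_{v}$. Since $U$ is convex, $U^{\varepsilon}$ is convex, so $\bar{u}_{\ell} := j\int_{I^{j}_{\ell}} u_{v} \in U^{\varepsilon}$. Three quantities are then small:
\begin{itemize}
\item $\jet z_{a}$ is within $\delta$ of $\jet z_{v}$, by Step 2;
\item $\jet z_{v}$ varies by $O(M/j) = O(\delta)$ across a window, by $M$-smoothness;
\item $g_{\ell}$ is within $M/n$ of $\bar{w}_{\ell}$.
\end{itemize}
Freezing $\alpha$ and $B$ at a window reference jet, the Lipschitz bounds on the $M$-envelope give $|u^{x}_{a}(t) - \bar{u}_{\ell}|_{\infty} \le L_{\Psi}\,\delta$ up to the stated constants. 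I expect the delicate part to be getting the constant down to $2L_{\Psi}\delta$, where $L_{\Psi}$ is read as the Lipschitz constant of $(\jet z, w) \mapsto \Psi_{u}$ on the envelope. Given $\varepsilon \ge 2L_{\Psi}\delta$, we get $u^{x}_{a}(t) \in \bar{u}_{\ell} + [-\varepsilon, \varepsilon]^{m} \subseteq U$, so $a$ is accepted.

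\textbf{Step 4 (counting).} We have $|\A| = n^{dj}$, so $\log|\A| = dj\log n$. With $j = O(\sqrt{d}\,M/\delta)$ and $n = O(\sqrt{d}\,M/\delta)$, this gives $\log|\A| \le c_{d}\,(M/\delta)\log(1+M/\delta)$, using $\delta \le M/2$ to absorb constants.
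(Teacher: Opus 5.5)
Your proposal is correct and follows essentially the paper's route. You round the subwindow averages $\bar{w}_{\ell}$ to grid centers, bound the primitive of the input error and lift it to the lower jet entries (your integration by parts reproduces Lemma~\ref{lem:toporder}), prove acceptance from the affinity of $\Psi_{u}$ in $w$ plus convexity of $U^{\varepsilon}$, and count; in the acceptance step you compare directly with $\bar{u}_{\ell} \in U^{\varepsilon}$, whereas the paper's margin-transfer lemma applies Jensen to $\mathrm{dist}(\cdot, U^{\varepsilon})$, which is an equivalent step. The Step~3 constant is not delicate: freezing the jet at $\jet z_{v}(t)$ gives $|u^{x}_{a}(t) - \bar{u}_{\ell}|_{\infty} \le L_{\Psi}\,\delta + L_{\Psi} M/j \le \tfrac{5}{4} L_{\Psi}\,\delta < \varepsilon$, so your conclusion $u^{x}_{a}(t) \in \bar{u}_{\ell} + [-\varepsilon, \varepsilon]^{m} \subseteq U$ holds with room to spare.
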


We prove Theorem~\ref{thm:realization} in
Appendix~\ref{app:lattice}. The theorem provides a
system-agnostic constructive path to dispersive sampling in
the low-dispersion regime $\delta < r_{U} / (2 L_{\Psi})$,
with $r_{U}$ the inradius of the input set. The construction
uses command sets only mildly larger than the upper estimate
of Proposition~\ref{prop:floor}. In practice, system-agnostic
sampling as prescribed by the theorem can be very inefficient
due to low acceptance rates $|\A(x)| / |\A|$, wasting
computation as most samples at large $\delta$ violate the
admissibility requirement $\Psi_{u}( \jet z,\, w ) \in U$. As
a result, experimental implementations must leverage
additional system arithmetic to sample efficiently, as we
demonstrate in our platform catalog of
Section~\ref{sec:examples}.

\subsection{Coverage and Optimality}
\label{sec:correction}

In this subsection, we establish the coverage and optimality
guarantees of dispersive forward trees. We prove that the tree
generated by a $\delta_{\A}$-dispersive command set covers the
$M$-smooth eroded trajectory class at a deviation
$O(\delta_{\A})$ throughout the problem horizon. Our result rests
on a margin correction mechanism which shows that the additional
authority available to the command set can be leveraged to track
a uniformly wide tube around an eroded smooth reference.

The mechanism addresses a central obstruction to
finite-sample coverage. While dispersive sampling places a tree
node within $O(\delta_{\A})$ of a reference trajectory over a
single window, these deviations compound through the dynamics,
and the accumulated error grows exponentially in the horizon
under the classical Gr\"onwall estimate. An eroded reference,
however, leaves the command set spare control authority which is
utilized to repeatedly steer the deviated state
back onto the reference within one window
(Figure~\ref{fig:tube}), resetting the error iteratively across
each window.

Suppose that the continuously differentiable input map
$\Psi_{u}$ holds the Lipschitz constant $L_{\Psi}$ in the
supremum norm over the Sobolev envelope $\mathcal{E}_{2M}$
(recall Definition~\ref{def:msmooth}).

\begin{lemma}[Margin Correction]
\label{lem:corrector2}
For some $\varepsilon \in (0,\, L_{\Psi} M]$ and
$M' := M + \varepsilon / (2 L_{\Psi})$, let
$u \in \mathcal{U}^{\varepsilon}_{M}(x)$ be an
$\varepsilon$-eroded admissible control on an interval $I$ from a
state $x \in \X$, and let
$z_{u}(t) = h\bigl( \tra_{u}(t) \bigr)$ be its flat trajectory.
Let $x' \in \X$ be another state satisfying
\[
  \bigl| \Xi(x') - \jet z_{u}(t) \bigr|_{\infty}
  \;\le\; s
\]
at a time $t$ with $[t,\, t + 1] \subseteq I$. There exists a
constant $C_{B} > 0$ such that if
\[
  s \;\le\; \frac{\varepsilon}{2\, L_{\Psi}\, C_{B}} ,
\]
then there exists a corrector control
$\nu \in \mathcal{U}^{\varepsilon/2}_{M'}(x')$ on
$[t,\, t + 1]$ whose trajectory $\tra_{\nu}$ from $x'$
satisfies
\[
  d\bigl( \tra_{\nu},\, \tra_{u};\, [t,\, t + 1] \bigr)
  \;\le\; C_{B}\, s,
  \ \,
  \tra_{\nu}(t + 1) = \tra_{u}(t + 1) .
\]
\end{lemma}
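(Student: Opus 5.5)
Work entirely in the Brunovský coordinates of Theorem~\ref{thm:flmr}, where on the window $[t,t+1]$ each flat component obeys $z_i^{(r_i)} = w_i$. Set $e_i(\tau) := \jet$-error at the start, i.e.\ the vector $\bigl(\Xi_{i,k}(x') - z_{u,i}^{(k)}(t)\bigr)_{k=0}^{r_i-1}$, with $|e|_\infty \le s$. We construct the corrector in flat form: the flat curve $\tilde z_i := z_{u,i} + \eta_i$ on $[t,t+1]$, where $\eta_i$ is a fixed polynomial correction of degree $2r_i-1$ that matches the initial error jet $e_i$ at $\tau=t$ and has vanishing jet of order $r_i-1$ at $\tau = t+1$. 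This is the standard Hermite interpolation problem on a unit interval. Its solution is linear in $e_i$ with coefficients independent of $t$, $x$, $x'$, and $u$ because the window has unit length. Hence there is a numerical constant $C_B$, depending only on $\bar r$ (for example the maximum over $i$ and $0\le k\le r_i$ of $\sup_{[0,1]}$ of the $k$-th derivatives of the Hermite basis functions, taken at least $1$), with
\[
  \sup_{\tau\in[t,t+1]} \bigl|\eta_i^{(k)}(\tau)\bigr| \;\le\; C_B\, s, \qquad 0 \le k \le r_i .
\]

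The key steps then follow in order.

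\textbf{Realisability and endpoint.} By the flatness converse in Assumption~(ii), $\tilde z$ is the output of the trajectory from $x' = \Psi_x(\jet\tilde z(t))$ generated by $\nu(\tau) := \Psi_u(\jet\tilde z(\tau), \tilde z^{(R)}(\tau))$, provided $\nu \in U$ a.e. Since $\jet\tilde z(t+1) = \jet z_u(t+1)$, we get $\tra_\nu(t+1) = \Psi_x(\jet z_u(t+1)) = \tra_u(t+1)$. The distance bound $d(\tra_\nu,\tra_u;[t,t+1]) \le C_B s$ is immediate from the estimate on $\eta_i^{(k)}$ for $k\le r_i-1$.

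\textbf{Smoothness.} For $1\le k\le r_i$ we have $|\tilde z_i^{(k)}| \le M + C_B s \le M + \varepsilon/(2L_\Psi) = M'$, using $s \le \varepsilon/(2L_\Psi C_B)$. So $\tra_\nu \in \mathcal{E}_{M'}$. Because $\varepsilon \le L_\Psi M$, we have $M' \le 2M$, so both curves lie in $\mathcal{E}_{2M}$, where $L_\Psi$ applies.

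\textbf{Eroded admissibility.} This is the main point. Pointwise,
\[
  |\nu(\tau) - u(\tau)|_\infty \;\le\; L_\Psi \max\bigl(|\jet\eta(\tau)|_\infty,\, |\eta^{(R)}(\tau)|_\infty\bigr) \;\le\; L_\Psi C_B s \;\le\; \varepsilon/2 .
\]
Since $u(\tau) + [-\varepsilon,\varepsilon]^m \subseteq U$, it follows that $\nu(\tau) + [-\varepsilon/2,\varepsilon/2]^m \subseteq U$. Hence $\nu \in \mathcal{U}^{\varepsilon/2}_{M'}(x')$.

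I expect the main obstacle to be the Lipschitz step. We must ensure that the segment between the two jets $(\jet z_u, z_u^{(R)})$ and $(\jet\tilde z,\tilde z^{(R)})$ stays in the envelope and the regular domain on which $L_\Psi$ is defined. The envelope $\mathcal{E}_{2M}$ bounds only derivatives of order $k\ge 1$, not $z_i$ itself, but the zeroth-order perturbation is also $O(s)$. I would handle this by reading the Lipschitz hypothesis as holding on the convex hull of nearby points of $\mathcal{E}_{2M}$; convexity of the box envelope in the derivative coordinates makes the straight segment admissible. If needed, I would also shrink $s$ to keep the curve in the open regular domain; this can be absorbed into $C_B$.
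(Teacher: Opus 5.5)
Your proposal matches the paper's proof: the same two-point Hermite correction added to $z_u$ on the unit window, with $C_B=\max_i c_Q(r_i)$, the same $M'$-smoothness bound and erosion estimate $|\nu-u|_\infty\le L_\Psi C_B s\le\varepsilon/2$, and realization from $x'=\Psi_x(\Xi(x'))$ through the flatness converse, with the endpoint equality coming from the vanishing Hermite data at $t+1$. The Lipschitz-scope issue you flag is not treated separately in the paper, which takes $L_\Psi$ to hold over $\mathcal{E}_{2M}$ and applies it directly once both curves are shown to be $M'$-smooth with $M'\le 2M$.
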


The corrector therefore corrects the margin error
$s = \bigl| \Xi(x') - \jet z_{u}(t) \bigr|_{\infty}$ within one
window. As shown in the proof of Lemma~\ref{lem:corrector2} in
Appendix~\ref{app:corrector}, the constant $C_{B}$ is given by
\[
  C_{B} \;=\; \max_{1 \le i \le d} c_{Q}(r_{i}) ,
\]
where $c_{Q}(r) \ge 1$ is the norm of Hermite interpolation on
the unit window, bounding every derivative through order $r$ of
the polynomial that matches prescribed jets of order $r - 1$ at
both endpoints, in proportion to the largest prescribed value.
Both $C_{B}$ and the abbreviation
$c_{B} := (2\, L_{\Psi}\, C_{B})^{-1}$ are fixed henceforth.

\begin{figure}[t]
  \centering
  \includegraphics[width=\linewidth]{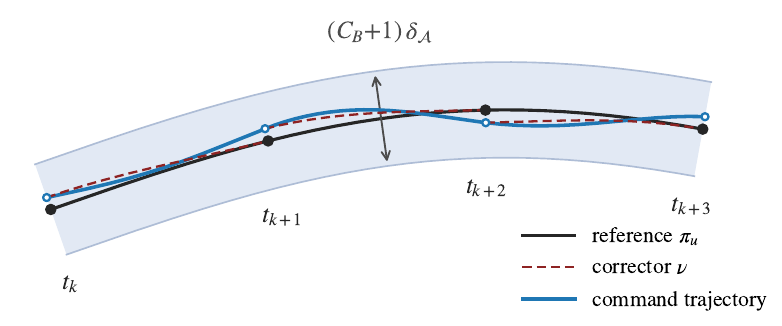}
  \caption{Corrective tracking of an eroded reference
  $\tra_{u}$ by a dispersive command trajectory, certified by
  correctors $\nu$.}
  \label{fig:tube}
\end{figure}

\label{sec:trees}

It then follows, as we show in the theorem below, that dispersive
commands anchor to the above correctors and track
$O(\delta_{\A})$ tubes around eroded $M$-smooth references
(Figure~\ref{fig:tube}). Note
that the correctors utilize some additional authority, namely
half the erosion $\varepsilon$ they can certify correction for,
as well as the additional smoothness headroom
$M' = M + \varepsilon / (2 L_{\Psi})$. As a result, the command
sets below demand local dispersion over the class
$\mathcal{U}^{\varepsilon/2}_{M'}$.

\begin{theorem}[Eroded coverage]
\label{thm:coverage}
Let $\varepsilon \in (0,\, L_{\Psi} M]$ with
$M' := M + \varepsilon / (2 L_{\Psi})$, let $K$ be a horizon,
and let $u \in \mathcal{U}^{\varepsilon}_{M}(x_{\mathrm{init}})$
be an $\varepsilon$-eroded admissible control on $[0, K]$. Suppose
$\A$ is $\delta_{\A}$-dispersive over
$\mathcal{U}^{\varepsilon/2}_{M'}(x)$ at every state $x$, with
$\delta_{\A} \le c_{B}\, \varepsilon$. Then there exists an
accepted command sequence
$(a_{1}, \dots, a_{K}) \in \A^{K}$ whose trajectory $\tra$ from
$x_{\mathrm{init}}$ satisfies
\[
  d\bigl(\tra,\, \tra_{u};\, [0, K]\bigr)
  \;\le\; ( C_{B} + 1 )\, \delta_{\A} .
\]
Hence the command tree tracks a tube of width
$( C_{B} + 1 )\, \delta_{\A}$ around the reference
trajectory $\tra_{u}$.
\end{theorem}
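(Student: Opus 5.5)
The plan is an induction over the unit windows $[k-1, k]$, $k = 1, \dots, K$, that alternates Lemma~\ref{lem:corrector2} with the dispersion of $\A$. The invariant to maintain is: after choosing accepted commands $a_{1}, \dots, a_{k}$, the reached state $x_{k}$ satisfies
\[
  \bigl| \Xi(x_{k}) - \jet z_{u}(k) \bigr|_{\infty} \;\le\; \delta_{\A},
\]
and the concatenated trajectory stays within $(C_{B}+1)\,\delta_{\A}$ of $\tra_{u}$ on $[0,k]$ in the Sobolev metric. At $k=0$ the invariant holds with zero error, since $x_{0} = x_{\mathrm{init}}$ and $\jet z_{u}(0) = \Xi(x_{\mathrm{init}})$.

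For the inductive step, suppose the invariant holds at $k-1$ with margin error $s \le \delta_{\A}$. Since $\delta_{\A} \le c_{B}\,\varepsilon = \varepsilon/(2 L_{\Psi} C_{B})$, the smallness hypothesis of Lemma~\ref{lem:corrector2} is met on the window $[k-1,k] \subseteq [0,K]$. The lemma then yields a corrector $\nu \in \mathcal{U}^{\varepsilon/2}_{M'}(x_{k-1})$ with two properties:
\[
  d\bigl(\tra_{\nu}, \tra_{u}; [k-1,k]\bigr) \le C_{B}\, s \le C_{B}\,\delta_{\A},
  \qquad
  \tra_{\nu}(k) = \tra_{u}(k).
\]
After a time shift to $[0,1]$, $\nu$ lies in the class over which $\A$ is $\delta_{\A}$-dispersive at $x_{k-1}$. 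By Definition~\ref{def:dispersive} there is therefore an accepted $a_{k} \in \A(x_{k-1})$ with
\[
  d\bigl(\roll_{x_{k-1}}(a_{k}), \tra_{\nu}; [0,1]\bigr) \le \delta_{\A}.
\]
The triangle inequality gives deviation at most $(C_{B}+1)\,\delta_{\A}$ from $\tra_{u}$ on this window. At the endpoint $t = 1$, the sup-norm bound on every jet component gives $|\Xi(x_{k}) - \jet z_{\nu}(1)|_{\infty} \le \delta_{\A}$. Because $\jet z_{\nu}(1) = \Xi(\tra_{u}(k)) = \jet z_{u}(k)$, this restores the invariant with error $\delta_{\A}$, not $(C_{B}+1)\,\delta_{\A}$. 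This reset is the point of the margin correction: the error does not accumulate across windows. Taking the supremum over the $K$ windows gives the bound on $[0,K]$.

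A few bookkeeping points need checking.
\begin{itemize}
\item The jet of the tree trajectory at $x_{k}$ is read through jet observability (iv). The rollout \eqref{eq:rollout} starts from $\Xi(x_{k-1})$, so consecutive rollouts concatenate continuously in the jet.
\item The sequence is accepted by construction, since each $a_{k} \in \A(x_{k-1})$.
\item The corrector's class must match the dispersion hypothesis exactly, namely $\mathcal{U}^{\varepsilon/2}_{M'}$ from the state $x_{k-1}$. Lemma~\ref{lem:corrector2} provides precisely this.
\item The lemma also requires $u \in \mathcal{U}^{\varepsilon}_{M}$ on an interval containing each window, which holds on $I = [0,K]$.
\end{itemize}

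The main obstacle I expect is the time-shift and state-dependence issue. The dispersion hypothesis is stated for controls on $[0,1]$ from a given state, whereas $\nu$ lives on $[k-1,k]$. I will argue that the system is time-invariant, so that the shifted corrector is a member of $\mathcal{U}^{\varepsilon/2}_{M'}(x_{k-1})$ and the Sobolev distances are shift-invariant. With that settled, the rest is the triangle inequality and the induction above.
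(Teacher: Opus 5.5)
Your proposal is correct and follows the paper's proof essentially step for step. It uses the same two invariants (node jet gap at most $\delta_{\A}$, tube width $(C_{B}+1)\,\delta_{\A}$), the same corrector-then-dispersion step, and the same jet reset at the window endpoint via $\tra_{\nu}(k)=\tra_{u}(k)$. The only cosmetic difference is the base case: you start at $k=0$ with zero error, which amounts to the corrector with $s=0$, whereas the paper seeds at $k=1$ by applying dispersion directly, using $\mathcal{U}^{\varepsilon}_{M} \subseteq \mathcal{U}^{\varepsilon/2}_{M'}$.
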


\label{sec:optimality}

Once tube proximity is established, the passage to cost closure is
standard following some assumed regularity on the cost functional as
well as the problem predicates
\citep{li2016asymptotically, karaman2011sampling}. Let
$L_{J}, L_{\preds} > 0$ be such that for all trajectories $\tra, \tra'$
defined on $[0, T]$,
\begin{equation}
\label{eq:lipschitz}
  \begin{aligned}
  \bigl| J(z_{\tra}) - J(z_{\tra'}) \bigr|
  &\;\le\; L_{J}\, d(\tra, \tra') , \\
  \bigl| g(z_{\tra}) - g(z_{\tra'}) \bigr|
  &\;\le\; L_{\preds}\, d(\tra, \tra')
  \quad \forall\, g \in \preds .
  \end{aligned}
\end{equation}
Since $x(t) = \Psi_{x}\bigl( \jet z (t) \bigr)$ with $\Psi_{x}$
locally Lipschitz, any Lipschitz functional of the state curve
qualifies,
\[
  \bigl| J\bigl( x(\cdot) \bigr) - J\bigl( x'(\cdot) \bigr) \bigr|
  \;\le\;
  L \, \sup_{t \in [0,T]} \bigl| x(t) - x'(t) \bigr| .
\]

We define the \emph{robust eroded optimum} of \eqref{eq:ocp} as
the optimum achieved by $M$-smooth trajectories of eroded inputs
on tightened predicates.
For a predicate margin $\gamma \ge 0$ and an input erosion
$\varepsilon > 0$, define
\[
  \begin{aligned}
  \mathcal{U}_{M, \gamma, \varepsilon}
  \;={}&
  \bigl\{\, u \in
  \mathcal{U}^{\varepsilon}_{M}(x_{\mathrm{init}}) \;:\;
  g\bigl(z_{u}\bigr) \ge \gamma
  \;\;\forall\, g \in \preds \,\bigr\}, \\
  J^{*}_{M,\gamma,\varepsilon}
  \;={}&
  \inf_{u \in \mathcal{U}_{M,\gamma,\varepsilon}}
  J\bigl(z_{u}\bigr) .
  \end{aligned}
\]

It is then immediate, as recorded by the theorem below, that
$O(\delta_{\A})$ dispersion achieves $O(\delta_{\A})$-eroded
optimality.

\begin{theorem}[Eroded optimality]
\label{thm:optimality}
Let the horizon $T \in \mathbb{N}$ and
$\varepsilon \in (0,\, L_{\Psi} M]$ with
$M' := M + \varepsilon / (2 L_{\Psi})$. Suppose
$\mathcal{U}_{M,\gamma,\varepsilon} \ne \emptyset$, and that $\A$
is $\delta_{\A}$-dispersive over
$\mathcal{U}^{\varepsilon/2}_{M'}(x)$ at every state $x$,
with $\delta_{\A} \le c_{B}\, \varepsilon$ and
$\gamma \ge L_{\preds}\, ( C_{B} + 1 )\, \delta_{\A}$. Then there exists an accepted command
sequence $(a_{1}, \dots, a_{T}) \in \A^{T}$ whose trajectory
$\tra$ from $x_{\mathrm{init}}$ is feasible for \eqref{eq:ocp},
\[
  g\bigl(z_{\tra}\bigr) \;\ge\; 0 \qquad \forall\, g \in \preds ,
\]
and whose cost satisfies
\[
  J\bigl(z_{\tra}\bigr)
  \;\le\;
  J^{*}_{M,\gamma,\varepsilon}
  + L_{J}\, ( C_{B} + 1 )\, \delta_{\A} .
\]
\end{theorem}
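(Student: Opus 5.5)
The argument is an $\eta$-approximation argument over the robust eroded class, with Theorem~\ref{thm:coverage} supplying the tube and the Lipschitz regularity \eqref{eq:lipschitz} converting tube width into predicate and cost slack. Since $\mathcal{U}_{M,\gamma,\varepsilon} \ne \emptyset$, the infimum $J^{*}_{M,\gamma,\varepsilon}$ is finite or $-\infty$. First I treat the finite case. For arbitrary $\eta > 0$ I pick $u \in \mathcal{U}_{M,\gamma,\varepsilon}$ with $J(z_{u}) \le J^{*}_{M,\gamma,\varepsilon} + \eta$.

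This $u$ lies in $\mathcal{U}^{\varepsilon}_{M}(x_{\mathrm{init}})$ on $[0,T]$. The hypotheses on $\A$ ($\delta_{\A}$-dispersive over $\mathcal{U}^{\varepsilon/2}_{M'}(x)$ at every state, $\delta_{\A} \le c_{B}\varepsilon$, $\varepsilon \le L_{\Psi}M$) are exactly those of Theorem~\ref{thm:coverage}. Applying it with $K = T$ gives an accepted sequence $(a_{1},\dots,a_{T}) \in \A^{T}$ whose trajectory $\tra$ from $x_{\mathrm{init}}$ satisfies $d(\tra,\tra_{u};[0,T]) \le (C_{B}+1)\delta_{\A}$. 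Because the sequence is accepted, each induced control lies in $U$ almost everywhere, so $\tra$ is a genuine admissible trajectory of \eqref{eq:sys} on $[0,T]$ starting at $x_{\mathrm{init}}$.

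Feasibility follows from \eqref{eq:lipschitz}. For each $g \in \preds$,
\[
g(z_{\tra}) \ge g(z_{u}) - L_{\preds}(C_{B}+1)\delta_{\A} \ge \gamma - L_{\preds}(C_{B}+1)\delta_{\A} \ge 0,
\]
using the hypothesis on $\gamma$. Similarly, the cost satisfies $J(z_{\tra}) \le J(z_{u}) + L_{J}(C_{B}+1)\delta_{\A} \le J^{*}_{M,\gamma,\varepsilon} + \eta + L_{J}(C_{B}+1)\delta_{\A}$.

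The main obstacle is removing $\eta$. The tree $\A^{T}$ is finite, so the set of feasible accepted sequences is finite and nonempty, and the bound holds for every $\eta>0$. Hence some fixed sequence satisfies it along a sequence $\eta \to 0$, by pigeonhole over finitely many candidates. Taking the limit gives the stated inequality for that sequence. Equivalently, the minimum cost over feasible accepted sequences in $\A^{T}$ is at most $J^{*}_{M,\gamma,\varepsilon} + L_{J}(C_{B}+1)\delta_{\A} + \eta$ for all $\eta>0$.

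In the case $J^{*}_{M,\gamma,\varepsilon} = -\infty$, the same argument gives finitely many tree trajectories with arbitrarily negative cost, which is impossible since each has finite cost. So this case cannot arise.

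A minor check is that Theorem~\ref{thm:coverage} is stated with integer horizon $K$. This matches $T \in \mathbb{N}$ and unit-duration commands. I also take the Lipschitz bounds \eqref{eq:lipschitz} to apply to the two trajectories on the common domain $[0,T]$.
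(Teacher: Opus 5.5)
Your proposal is correct and follows the paper's proof essentially verbatim: you pick a near-optimal reference in $\mathcal{U}_{M,\gamma,\varepsilon}$, invoke Theorem~\ref{thm:coverage} with $K = T$, convert the $(C_{B}+1)\delta_{\A}$ tube into predicate and cost slack via \eqref{eq:lipschitz}, and remove the approximation slack by pigeonholing over the finite set $\A^{T}$. You also separately rule out $J^{*}_{M,\gamma,\varepsilon} = -\infty$, a point the paper leaves implicit, and your argument for it is sound.
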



Coverage of the $M$-smooth trajectory class in the sense of
Theorem~\ref{thm:coverage} naively requires a command tree of size
$|\A|^{K}$ for trajectories of horizon $K$. With
$\log |\A| \sim c_{\star}\, d\, M / \delta_{\A}$ as estimated by
Proposition~\ref{prop:floor}, the tree size demanded by naive growth
is $O\bigl( e^{\, c_{\star} d K M / \delta_{\A}} \bigr)$,
exponentially exploding in the problem horizon at given
certification requirements $(\delta_{\A}, M)$. In the next
subsection, we show that such an exponential growth is an
unavoidable requirement if one demands coverage of the entire
$M$-smooth class.

\subsection{The Necessary Size of Covering Trees}
\label{sec:sparse}

A \emph{command tree}, or an \emph{$\A$-tree}, rooted at an
initial state $x_{\mathrm{init}}$ is a finite set of nodes in
$\X$ such that each node other than the root is the endpoint
state of an accepted command from a parent node. Denote by
$\Phi_{a}(x')$ the endpoint state of a command
$a \in \A(x')$ from the node $x'$, then a node at depth $k$ has
the form
\[
  x \;=\; \Phi_{a_{k}} \circ \Phi_{a_{k-1}} \circ \cdots \circ
  \Phi_{a_{1}}(x_{\mathrm{init}})
\]
for some command sequence $(a_{1}, \dots, a_{k})$ accepted from
$x_{\mathrm{init}}$. In particular,
Theorem~\ref{thm:coverage} states that the complete $\A$-tree of depth
$K$, the tree holding every accepted sequence, contains, for every
$u \in \mathcal{U}^{\varepsilon}_{M}(x_{\mathrm{init}})$, a
trajectory within $( C_{B} + 1 )\, \delta_{\A}$ of $\tra_{u}$
over $[0, K]$.

As we have argued, the complete $\A$-tree of depth $K$ holds
$O\bigl( e^{\, c_{\star} d K M / \delta_{\A}} \bigr)$ nodes,
rendering solution search upon it computationally intractable.
However, this in itself does not imply that dispersive
$M$-smooth coverage is out of reach, because the complete tree
is redundant and tubes of width
$E' := ( C_{B} + 1 )\, \delta_{\A}$ contain interchangeable
paths within the tree. A relatively sparse subtree may then
deliver our coverage guarantees and enable tractable solution
search.

In this subsection, we show that despite this
redundancy, the minimal tree that offers coverage over the
$M$-smooth class does not escape the exponential explosion in
the horizon. We show that the required size of the covering
tree is bounded below by the packing number of the target
trajectory class, and estimate the packing number of the target
using the standard metric entropy machinery introduced in
Section~\ref{sec:prelim-entropy}.

For a horizon $K$ and $\varepsilon > 0$, let $\mathcal{Z}_{K}$
be the $M$-smooth $\varepsilon$-eroded trajectory class from
$x_{\mathrm{init}}$ over the horizon $[0, K]$, carrying the
metric $d(\cdot, \cdot;\, [0, K])$, and write
$P_{s}(K) := P_{s}( \mathcal{Z}_{K} )$ for its packing number at
scale $s > 0$, following Definition~\ref{def:covpack}.

The following proposition, proven in Appendix~\ref{app:covsize},
estimates the necessary size of a command tree that maintains
dispersive coverage on $\mathcal{Z}_{K}$.

\begin{proposition}[Coverage complexity]
\label{prop:covsize}
Let $\varepsilon, E > 0$, let $K$ be a horizon, and let
$\mathcal{T}$ be a command tree of depth $K$.
\begin{itemize}
\item[(i)] If $\mathcal{T}$ contains a trajectory within $E$ of
$\tra_{u}$ over $[0, K]$ for every
$u \in \mathcal{U}^{\varepsilon}_{M}(x_{\mathrm{init}})$, then
it contains at least $P_{2E}(K)$ distinct trajectories of depth
$K$.
\item[(ii)] Suppose the eroded input set retains a ball,
$u_{0} + [-\eta,\, \eta]^{m} \subseteq U^{\varepsilon}$ for some
$u_{0} \in U$ whose constant control from $x_{\mathrm{init}}$
carries an $(M/2)$-smooth trajectory, and
$\eta \in (0,\, L_{\Psi} M/2]$. Then there is a
constant $\kappa = \kappa(R) > 0$ such that, for every
$0 < s \le \kappa\, \eta / L_{\Psi}$,
\[
  \log P_{s}(K)
  \;\ge\;
  \frac{\kappa\, \eta\, K \log 2}{2\, L_{\Psi}\, s} .
\]
It follows that the tree $\mathcal{T}$ of (i) must contain at
least $2^{\,\kappa \eta K / (4 L_{\Psi} E)}$ nodes whenever
$2 E \le \kappa\, \eta / L_{\Psi}$.
\end{itemize}
\end{proposition}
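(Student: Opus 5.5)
For part (i) I would use the standard packing-versus-covering injection. Fix a maximal $2E$-packing $Q \subseteq \mathcal{Z}_{K}$ of size $P_{2E}(K)$. By hypothesis each $\tra_{q}$, $q \in Q$, has some depth-$K$ trajectory $\tra(q)$ of $\mathcal{T}$ with $d(\tra(q), \tra_{q};\,[0,K]) \le E$. If $\tra(q) = \tra(q')$ for $q \ne q'$, the triangle inequality for the Sobolev metric of Definition~\ref{def:dist} gives $d(\tra_{q}, \tra_{q'}) \le 2E$. This contradicts the strict separation $> 2E$ in Definition~\ref{def:covpack}(ii). So $q \mapsto \tra(q)$ is injective, and $\mathcal{T}$ holds at least $P_{2E}(K)$ distinct depth-$K$ trajectories. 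Distinct trajectories of depth $K$ correspond to distinct root-to-leaf paths, hence to distinct depth-$K$ nodes up to coincidence of endpoints. To get the node count I would count leaves of the path tree, i.e. distinct accepted sequences realized in $\mathcal{T}$. This is at most the node count, since each path is determined by its final node in a tree.

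For part (ii) I would build an explicit $s$-separated family by a product (Varshamov--Gilbert-free, hypercube) construction over unit windows. Work around the constant control $u_{0}$, whose trajectory is $(M/2)$-smooth, with flat input $w_{0}$. On each window $[k-1,k]$, $k = 1,\dots,K$, and for each sub-window of length $1/n$ inside it, I would perturb the flat input of one chosen component (say the one with $r_{i} = \bar r$, or $i=1$) by a signed bump $\pm \beta\, \phi$. Here $\phi$ is a fixed smooth bump with vanishing moments up to order $r_{i}$, so that its $r_{i}$-fold integral is compactly supported in the sub-window. This makes sub-window perturbations decouple and leaves the jet state unchanged at sub-window endpoints. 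Two sign patterns $\sigma \ne \sigma'$ differ on some sub-window. There the difference of the $(r_{i}-1)$-th derivative equals $2\beta$ times the once-integrated bump, which has sup comparable to $\beta/n$ times a constant $c(R)$. Since the metric reads that derivative, the separation is $\gtrsim \beta / n$. The admissibility constraints are twofold. First, the induced control $\Psi_{u}(\jet z, w)$ must stay in $u_{0} + [-\eta,\eta]^{m} \subseteq U^{\varepsilon}$. By the Lipschitz bound $L_{\Psi}$ on $\mathcal{E}_{2M}$ this holds once $\beta\,\|\phi\|$ and the induced jet perturbations are $\le \eta / L_{\Psi}$ up to constants. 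Second, $M$-smoothness must hold, which follows from the $(M/2)$-smooth base plus perturbations $\le M/2$, using $\eta \le L_{\Psi} M/2$. Choosing $\beta \asymp \eta/L_{\Psi}$ and $n \asymp \kappa\eta/(L_{\Psi} s)$ gives a $s$-separated set of $2^{nK}$ controls. This yields $\log P_{s}(K) \ge nK\log 2 \ge \kappa\eta K\log 2/(2L_{\Psi}s)$, where the factor $2$ absorbs rounding of $n$ using $s \le \kappa\eta/L_{\Psi}$ (so $n\ge1$).

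The main obstacle is the nonlinearity of $\Psi_{u}$: the control perturbation depends on both the jet perturbation and $w$, so a stated $\eta$-box for $u$ is not directly a box for $w$. I would handle it by bounding $|\Psi_{u}(\jet z, w) - \Psi_{u}(\jet z_{0}, w_{0})|_{\infty} \le L_{\Psi}\max(|\jet z - \jet z_{0}|, |w-w_{0}|)$. Since all jet perturbations are bounded by $\beta\|\phi\|_{\infty}$ (integrals over length $\le 1/n \le 1$), it suffices to require $\beta\|\phi\|_{\infty} \le \eta/L_{\Psi}$, with $\kappa(R)$ collecting $\|\phi\|$ and the separation constant. Finally, combine (i) and (ii) at $s = 2E$: the tree has at least $P_{2E}(K) \ge 2^{\kappa\eta K/(4L_{\Psi}E)}$ distinct depth-$K$ trajectories, hence at least that many nodes.
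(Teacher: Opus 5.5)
Your proposal is correct and follows the paper's proof essentially step for step. For (i) you use the same packing-to-tree injection. For (ii) you use the same $2^{Kn}$ sign-pattern hypercube:
\begin{itemize}
\item bumps sit on the $Kn$ subintervals of length $1/n$ in a single flat component;
\item the amplitude $\eta/L_{\Psi}$ keeps the induced control inside $u_{0} + [-\eta,\eta]^{m}$ by the Lipschitz bound, and keeps the $(M/2)$-smooth base $M$-smooth;
\item the separation, of order $\kappa\eta/(L_{\Psi} n)$, is read off the $(\bar r - 1)$-th derivative;
\item $n = \lfloor \kappa\eta/(L_{\Psi}s)\rfloor \ge \kappa\eta/(2L_{\Psi}s)$.
\end{itemize}
The only difference is cosmetic. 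The paper perturbs the output $z_{i}$ directly by the rescaled polynomial $t^{\bar r}(1-t)^{\bar r}/B$, whose $\bar r$-th derivative is exactly a flat-input bump of your kind; vanishing moments of orders $0$ through $\bar r - 1$ already suffice for that.
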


Proposition~\ref{prop:covsize} shows that so long as the smooth
eroded trajectory class retains a non-trivial interior, its packing
number grows exponentially in the horizon $K$, following which
the size of every covering tree must explode exponentially.

The linear rate $O( \delta_{\A} )$ of the optimality gap
relative to the local dispersion of the sampler is trivially
tight if one considers the cost of tracking a reference at
$K = 1$. The following example uses the packing number
machinery to present the same result for a $K$-horizon
trajectory tracking problem for completeness.

\begin{example}
\label{ex:tight}
Consider the trajectory tracking OCP
\[
  \begin{aligned}
  \min_{u(\cdot)} \quad&
  d\bigl( \tra_{u},\, \tra_{u^{\circ}};\, [0, K] \bigr) \\
  \text{s.t.} \quad&
  \dot{x} = f( x, u ) , \quad
  x(0) = x_{\mathrm{init}} ,
  \end{aligned}
\]
where $u^{\circ} \in
\mathcal{U}^{\varepsilon}_{M}(x_{\mathrm{init}})$ is a reference
control, so that $J^{*}_{M, \gamma, \varepsilon} = 0$.
\begin{itemize}
\item[(i)] Equip each state $x$ with the $\delta_{\A}$-dispersive
command set $\A^{*}(x)$ of Proposition~\ref{prop:floor} at $M'$
with $\delta_{\A} \le c_{B}\, \varepsilon$, and let $\mathcal{T}$ be
the complete tree of depth $K$, carrying
$N \le \sup_{x} | \A^{*}(x) |^{K}$ trajectories.

Applying Theorem~\ref{thm:optimality}, for every reference
$u^{\circ}$,
\[
  \exists\, \tra \in \mathcal{T} :\quad
  J( \tra ) \;\le\; ( C_{B} + 1 )\, \delta_{\A} .
\]
Conversely, apply Proposition~\ref{prop:covsize}(ii) at the
scale $2s$ with
\[
  s \;:=\; \frac{\kappa\, \eta\, K \log 2}{8\, L_{\Psi} \log N}
\]
to furnish a packing $Q \subset \mathcal{Z}_{K}$ of $|Q| > N$
members pairwise over $2s$ apart. Then, for every
$\tra \in \mathcal{T}$,
\[
  \bigl| \bigl\{\, q \in Q \;:\;
  d( \tra, q;\, [0, K] ) \le s \,\bigr\} \bigr|
  \;\le\; 1 ,
\]
and $|Q| > N$ leaves a member $\tra_{u^{\circ}} \in Q$ with
\[
  J( \tra )
  \;=\;
  d\bigl( \tra,\, \tra_{u^{\circ}};\, [0, K] \bigr)
  \;>\; s
  \qquad \forall\, \tra \in \mathcal{T} .
\]
Substituting
$\log N \le K \sup_{x} \log |\A^{*}(x)|
\le c_{\star}\, d\, K M' / \delta_{\A}$
into $s$, the two sides sandwich the worst-case optimality gap
of the complete tree,
\[
  \begin{aligned}
  \frac{\kappa\, \eta \log 2}
       {8\, c_{\star}\, d\, L_{\Psi}\, M'}\; \delta_{\A}
  \;&\le\;
  \sup_{u^{\circ} \in\,
  \mathcal{U}^{\varepsilon}_{M}(x_{\mathrm{init}})} \,
  \min_{\tra \in \mathcal{T}} \,
  J( \tra ) - J^{*}_{M, \gamma, \varepsilon}
  \\
  &\le\;
  ( C_{B} + 1 )\, \delta_{\A} .
  \end{aligned}
\]
The optimality gap estimated by Theorem~\ref{thm:optimality} is
therefore tight in its rate $O( \delta_{\A} )$.
\item[(ii)] Let instead $\mathcal{T}$ be any set of at most
$2^{K}$ trajectories. Apply Proposition~\ref{prop:covsize}(ii)
with the scale $s := \kappa\, \eta / ( 4 L_{\Psi} )$ to furnish
a packing $Q \subset \mathcal{Z}_{K}$ of
$4^{K} > |\mathcal{T}|$ members pairwise over $s$ apart. Every
trajectory of $\mathcal{T}$ then lies within $s/2$ of at most
one member of $Q$, leaving a member $\tra_{u^{\circ}} \in Q$
with
\[
  J( \tra )
  \;=\;
  d\bigl( \tra,\, \tra_{u^{\circ}};\, [0, K] \bigr)
  \;>\; \frac{s}{2}
  \;=\; \frac{\kappa\, \eta}{8\, L_{\Psi}}
  \quad \forall\, \tra \in \mathcal{T} .
\]
Subexponential trees submit to the above cost floor regardless
of the local dispersion $\delta_{\A}$ of their sampler.
\item[(iii)] We numerically instantiate the trajectory tracking
problem on the scalar double integrator $\ddot{x} = u$, with
$z = x$, $d = 1$, $U = [-1, 1]$, and $\varepsilon = 0.25$.
Theorem~\ref{thm:realization} makes dispersive command sets in
the form $\A_{j,n}$ with $j$ subwindows and $n$ grid levels. We
enumerate the complete tree of each $\A_{j,n}$ up to $K = 6$ and
record the tracking cost
\[
  e_{k} \;=\; \min_{\tra \,\in\, \A^{k}}
  d\bigl(\tra,\, \tra_{u};\, [0, k]\bigr) ,
  \qquad k = 1, \dots, K ,
\]
against random $\varepsilon$-eroded references and an extremal
reference $|u| = 1$.
\begin{itemize}
\item[--] In Figure~\ref{fig:coverage}(a), we observe that the
eroded error stays nearly flat in $k$ while the extremal error
grows steadily, as it must, since $|u| = 1$ exceeds every level
of the lattice.
\item[--] In Figure~\ref{fig:coverage}(b), the terminal error
falls linearly with the dispersion,
$e_{6} \approx 0.65\, \delta_{\A}$, below the worst-case
certified $(C_{B} + 1)\, \delta_{\A}$ with
$C_{B} = c_{Q}(2) = 10$ (gray points carry sampled lower bounds
on $\delta_{\A}$ and are excluded from the fit).
\end{itemize}

Complete trees
become quickly intractable as they hold
$|\A|^{K} \approx 10^{11}$ trajectories at $|\A| \sim 64$ at the
finest family we experiment with, which motivates the dominance
pruning procedure of the next subsection as a necessary
precursor to the deployment of dispersive forward trees for
sampling-based planning.
\end{itemize}
\end{example}

\begin{figure}[t]
  \centering
  \begin{subfigure}{0.49\linewidth}
    \includegraphics[width=\linewidth]{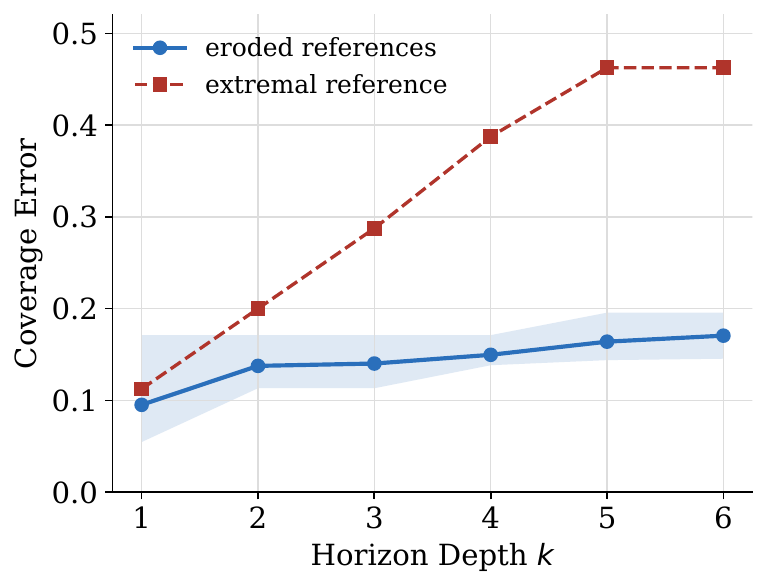}
    \caption{}
    \label{fig:coverage-a}
  \end{subfigure}\hfill
  \begin{subfigure}{0.49\linewidth}
    \includegraphics[width=\linewidth]{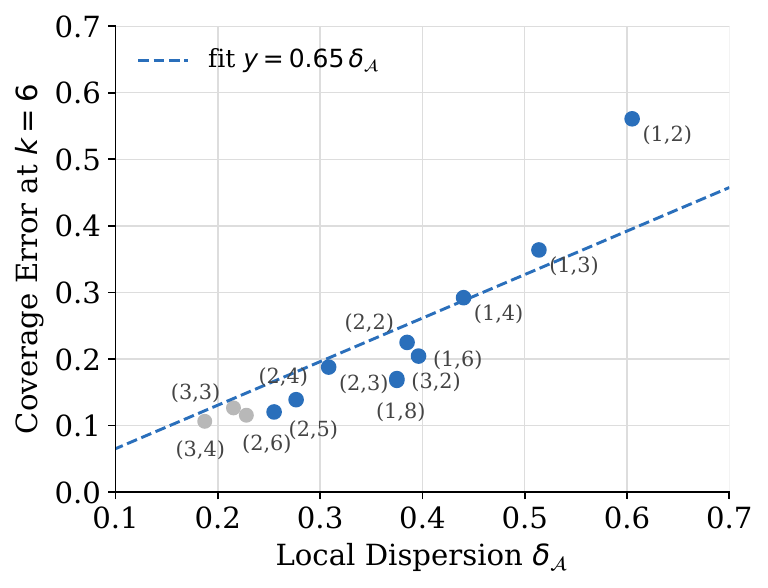}
    \caption{}
    \label{fig:coverage-b}
  \end{subfigure}
  \caption{Coverage on the double integrator
  ($\varepsilon = 0.25$). (a) Coverage error
  $e_{k} = \min_{\tra \in \A^{k}}
  d\bigl(\tra, \tra_{u}; [0, k]\bigr)$ for $(j,n) = (1,8)$, four
  eroded references (band) and the extremal reference. (b)
  $e_{6}$ vs the local dispersion $\delta_{\A}$ over
  $\mathcal{U}^{\varepsilon/2}$, one point per family.}
  \label{fig:coverage}
\end{figure}

\subsection{Dominance Pruning}
\label{sec:genpruning}

Despite the analysis of the previous subsection rendering
trajectory class coverage intractable, it is possible to
maintain dispersive coverage over a subclass of near-optimal
trajectories and tractably realize the optimality certificate
of Theorem~\ref{thm:optimality}. In this subsection, we present
a spatio-temporal dominance pruning procedure that leverages
structural properties of the underlying OCP to iteratively
prune the forward tree synchronously with its growth and allows
us to maintain near-optimality while curtailing the necessary
tree size, and with it the deployment compute, to a polynomial
order in the horizon length. The procedure we present requires
some restriction on the structure of the OCP~\eqref{eq:ocp},
namely we attend to temporally aggregating costs and separating
predicates. This setting generalizes popular structure in
kinodynamic planning such as additive Lipschitz costs and
static collision constraints considered by
SST~\citep{li2016asymptotically}.

\begin{definition}[Aggregating costs, separating predicates]
\label{as:sumstruct}
\leavevmode
\begin{itemize}
\item[(i)] The cost $J(\cdot)$ of \eqref{eq:ocp} is
\emph{recursively aggregated} if there exist an initial value
$\Sigma_{0} \in \R$, window updates
$\varrho_{k} : \R \times C\bigl( [k,\, k+1] \bigr) \to \R$, and
a nondecreasing $L_{\varphi}$-Lipschitz
$\varphi : \R \to \R$ such that, for every trajectory $\tra$ on
$[0, K]$,
\begin{equation}
\label{eq:sumcost}
  J\bigl( z_{\tra} \bigr) \;=\; \varphi\bigl( \Sigma_{K}
  \bigr) ,
  \
  \Sigma_{k+1}
  \;=\;
  \varrho_{k}\Bigl( \Sigma_{k},\;
  z_{\tra}\big|_{[k,\, k+1]} \Bigr) ,
\end{equation}
and the window updates satisfy, for all aggregates
$\Sigma, \Sigma' \in \R$ and trajectories $\tra, \tra'$,
\begin{equation*}
  \varrho_{k}\Bigl( \Sigma,\,
  z_{\tra}\big|_{[k,\, k+1]} \Bigr)
  \;\le\;
  \varrho_{k}\Bigl( \Sigma',\,
  z_{\tra}\big|_{[k,\, k+1]} \Bigr)
  \ \text{if}\ \Sigma \le \Sigma' ,
\end{equation*}
\begin{equation*}
  \Bigl| \varrho_{k}\Bigl( \Sigma,\,
  z_{\tra}\big|_{[k,\, k+1]} \Bigr)
  - \varrho_{k}\Bigl( \Sigma',\,
  z_{\tra}\big|_{[k,\, k+1]} \Bigr) \Bigr|
  \le
  \bigl| \Sigma - \Sigma' \bigr| ,
\end{equation*}
\begin{multline*}
  \Bigl| \varrho_{k}\Bigl( \Sigma,\,
  z_{\tra}\big|_{[k,\, k+1]} \Bigr)
  - \varrho_{k}\Bigl( \Sigma,\,
  z_{\tra'}\big|_{[k,\, k+1]} \Bigr) \Bigr|
  \\
  \;\le\;
  L\, d\bigl( \tra,\, \tra';\, [k,\, k+1] \bigr) .
\end{multline*}
That is, in order, the window updates are nondecreasing in the
aggregate, non-expansive relative to aggregate perturbations,
and $L$-Lipschitz in the window trajectory. The aggregate
Lipschitz constant of the cost in the sense of
\eqref{eq:lipschitz} is recovered as
$L_{J} = L_{\varphi}\, K\, L$.
\item[(ii)] The predicates $g \in \preds$ of \eqref{eq:ocp}
\emph{separate over the time windows} if there exist
$L_{\preds}$-Lipschitz window predicates $g_{k}$ such that, for
every trajectory $\tra$ on $[0, K]$,
\begin{equation}
\label{eq:predsep}
  g\bigl( z_{\tra} \bigr)
  \;=\;
  \min_{0 \le k < K}
  g_{k}\Bigl( z_{\tra}\big|_{[k,\, k+1]} \Bigr) .
\end{equation}
\end{itemize}
\end{definition}

The slightly uncomfortable abstract structure imposed by
Definition~\ref{as:sumstruct} allows the admission of some
practically useful planning instances beyond Lipschitz
additives and static obstacles, as listed by the example below.

\begin{example}
\label{ex:instances}
The following costs and predicates carry the structure of
Definition~\ref{as:sumstruct}, as verifiable by direct
inspection.
\begin{enumerate}
\item \emph{Discounted sums.}
$\varrho_{k}( \Sigma, w ) = \Sigma + \beta^{k} \ell_{k}( w )$
with $\beta \le 1$ and $\varphi = \mathrm{id}$. At $\beta = 1$
we recover the usual window-additive costs like the arc length
and the integral of a Lipschitz running cost.
\item \emph{Running extrema.}
$\varrho_{k}( \Sigma, w ) = \min\{ \Sigma,\, c_{k}( w ) \}$, and
likewise the maximum, such as the closest approach to a goal
region, the peak tracking error, and the least clearance over
the horizon.
\item \emph{Spatio-temporal robustness.} For a margin
$\mu : \R^{d} \to \R$, the robustness of the safety
specification, $\mu\bigl( z( t ) \bigr) \ge 0$ throughout
$[0, K]$, is
\[
  J\bigl( z_{\tra} \bigr)
  \;=\;
  \min_{t \in [0, K]} \mu\bigl( z_{\tra}( t ) \bigr) ,
\]
the running minimum of the window margins
$c_{k}( w ) = \min_{t \in [k,\, k+1]} \mu\bigl( w( t ) \bigr)$,
and the robustness of the reach specification,
$\mu\bigl( z( t ) \bigr) \ge 0$ at some $t \in [0, K]$, is the
corresponding running maximum \citep{fainekos2009robustness}.
\item \emph{Dynamic collision avoidance.} The predicate
$g_{k}( w )
= \min_{t \in [k,\, k+1]}
\mathrm{dist}\bigl( w( t ),\, O( t ) \bigr) - r$
maintains the clearance $r$ from the reachable set $O( t )$ of a
moving obstacle.
\item \emph{Visibility maintenance.} The predicate
$g_{k}( w )
= \min_{t \in [k,\, k+1]}
\mathrm{dist}\bigl( [ w( t ),\, p( t ) ],\, O \bigr) - r$
keeps the line of sight $[ w( t ),\, p( t ) ]$ to a moving
target $p( t )$ away from known occluders $O$.
\item \emph{Envelope margins.} The predicate
$g_{k}( w )
= \min_{t \in [k,\, k+1]}
\bigl( v_{\max} - | \dot{w}( t ) | \bigr)$
bounds the speed along the trajectory.
\end{enumerate}
\end{example}

We now describe the dominance pruning procedure.

\begin{procedure}[Dominance pruning]
\label{def:genpruning}
Let $x$ be a depth-$k$ node of the command sequence
$(a_{1}, \dots, a_{k})$ and let $\tra_{x}$ be the corresponding
trajectory on $[0, k]$, carrying the running summary
$\Sigma( x )$ along the recursion \eqref{eq:sumcost}.

Fix a radius $s > 0$ and let
\[
  q_{s}(\xi)
  \;:=\;
  s \bigl\lfloor \xi / s \bigr\rfloor ,
  \qquad \xi \in \R^{|R|} ,
\]
assign each jet the corner of its cell in the grid
$s \Z^{|R|}$, where $\lfloor \cdot \rfloor$ denotes the
componentwise floor.

The \emph{dominance-pruned tree} of depth $K$ is grown by the
recursion $V_{0} := \{ x_{\mathrm{init}} \}$ and, for
$0 \le k < K$,
\[
  \widehat{V}_{k+1}
  \;:=\;
  \left\{\,
    \Phi_{a}(x)
    \;\middle|\;
    \begin{gathered}
    x \in V_{k} ,\quad a \in \A(x) , \\
    g_{k}\bigl(
    z_{\tra_{\Phi_{a}(x)}}\big|_{[k,\, k+1]} \bigr)
    \ge 0
    \quad \forall\, g \in \preds \\
    z_{\tra_{\Phi_{a}(x)}}\big|_{[k,\, k+1]}
    \ \text{$2M$-smooth}
    \end{gathered}
  \,\right\} ,
\]
\[
  V_{k+1}
  \;:=\;
  \left\{\, x \in \widehat{V}_{k+1}
  \;\middle|\;
  \begin{gathered}
  \Sigma( x ) \le \Sigma( x' )
  \quad \forall\, x' \in \widehat{V}_{k+1} \\
  \text{with}\
  q_{s}\bigl( \Xi( x' ) \bigr) = q_{s}\bigl( \Xi( x ) \bigr)
  \end{gathered}
  \,\right\} .
\]
With ties resolved arbitrarily, dominance pruning retains
exactly one node per occupied cell of the jet-space grid at
each depth $k$. The pruned tree is a command tree with the node
set $\bigcup_{k=0}^{K} V_{k}$.
\end{procedure}

Note that jet space quantization for dominance pruning is
equivalent to state space quantization and pruning following
\[
  \mathrm{dist}( x,\, x' ) \;\le\; \rho
  \;\;\Longrightarrow\;\;
  \bigl|\, \Xi( x ) - \Xi( x' ) \,\bigr|_{\infty}
  \;\le\; L_{\mathrm{x}}\, \rho
\]
for a metric $\mathrm{dist}$ on the states.

\begin{theorem}[Sparse tree optimality]
\label{prop:genpruning}
Let $\varepsilon \in (0,\, L_{\Psi} M]$ with
$M' := M + \varepsilon / (2 L_{\Psi})$, and let the cost and
the predicates of the OCP~\eqref{eq:ocp} obey the structure of
Definition~\ref{as:sumstruct}. Suppose $\A$ is
$\delta_{\A}$-dispersive over
$\mathcal{U}^{\varepsilon/2}_{M'}(x)$ at every state $x$, with
every command input bounded by $2M$,
$\| w_{a} \|_{\infty} \le 2M$ for all $a \in \A(x)$. Pick
a pruning radius $s > 0$ and a predicate margin $\gamma \ge 0$
with
\[
  \delta_{\A} + s \;\le\; c_{B}\, \varepsilon ,
  \qquad
  \gamma \;\ge\; L_{\preds}\, (C_{B} + 1)\,
  \bigl( \delta_{\A} + s \bigr) ,
\]
and suppose the reference class
$\mathcal{U}_{M, \gamma, \varepsilon}$ is nonempty. Then the
pruned tree of Procedure~\ref{def:genpruning} at radius $s$ of
depth $K$ holds at most
\[
  K\,
  \Bigl( \frac{8 K M}{s} \Bigr)^{d}
  \Bigl( \frac{8 M}{s} \Bigr)^{|R| - d}
  + 1
\]
nodes, and contains an accepted command sequence
$(a_{1}, \dots, a_{K})$ whose trajectory $\tra$ from
$x_{\mathrm{init}}$ is feasible for \eqref{eq:ocp},
\[
  g\bigl( z_{\tra} \bigr) \;\ge\; 0
  \qquad \forall\, g \in \preds ,
\]
and whose cost satisfies
\[
  J\bigl( z_{\tra} \bigr)
  \;\le\;
  J^{*}_{M, \gamma, \varepsilon}
  + L_{J}\, (C_{B} + 1)\,
  \bigl( \delta_{\A} + s \bigr) .
\]
\end{theorem}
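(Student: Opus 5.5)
The proof splits into a counting bound for the node sets $V_{k}$ and an inductive tracking argument. The tracking argument follows Theorem~\ref{thm:coverage}, with the pruning error $s$ absorbed into the correction budget alongside $\delta_{\A}$.

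\emph{Counting.} Dominance pruning keeps one node per occupied grid cell at each depth $k$, so it suffices to bound the range of $\Xi(x)$ for $x\in\widehat V_{k}$. Every admitted window is $2M$-smooth by construction. For derivative orders $1\le j\le r_i-1$ this gives $|z^{(j)}_i|\le 2M$, so each such coordinate lies in an interval of length $4M$ and meets at most $4M/s+1\le 8M/s$ cells, assuming $s\le 4M$. For the zeroth-order coordinates $z_i$, integrating $|\dot z_i|\le 2M$ over $[0,k]$ confines $z_i$ to an interval of length $4kM\le 4KM$ around $z_i(0)$, so it meets at most $8KM/s$ cells. Hence
\[
|V_k|\le (8KM/s)^{d}(8M/s)^{|R|-d}.
\]
Summing over $k=1,\dots,K$ and adding the root gives the stated count.

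\emph{Tracking.} Fix $u\in\mathcal U_{M,\gamma,\varepsilon}$ whose cost is within $\eta$ of $J^*_{M,\gamma,\varepsilon}$. The claim, proved by induction on $k$, is that some $x_k\in V_k$ satisfies:
\begin{itemize}
\item[(a)] $|\Xi(x_k)-\jet z_u(k)|_\infty\le \delta_{\A}+s$;
\item[(b)] $\Sigma(x_k)\le\Sigma_k(\tra')$ for some trajectory $\tra'$ on $[0,k]$ that is feasible for the window predicates and satisfies $d(\tra',\tra_u;[0,k])\le (C_B+1)(\delta_{\A}+s)$.
\end{itemize}
The base case $k=0$ is trivial.

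\emph{Inductive step.} Since $\delta_{\A}+s\le c_B\varepsilon$, Lemma~\ref{lem:corrector2} applies at $x_k$ and yields a corrector $\nu\in\mathcal U^{\varepsilon/2}_{M'}(x_k)$ on $[k,k+1]$. It lies within $C_B(\delta_{\A}+s)$ of $\tra_u$ and lands exactly on $\tra_u(k+1)$. Dispersivity gives an accepted $a\in\A(x_k)$ with $d(\roll_{x_k}(a),\tra_\nu)\le\delta_{\A}$. So the child $y=\Phi_a(x_k)$ lies within $(C_B+1)(\delta_{\A}+s)$ of $\tra_u$ on the window, and $|\Xi(y)-\jet z_u(k+1)|_\infty\le\delta_{\A}$.

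Next, $y$ survives into $\widehat V_{k+1}$. Its window predicates are at least $\gamma-L_{\preds}(C_B+1)(\delta_{\A}+s)\ge0$, and its window is $2M$-smooth because $\|w_a\|_\infty\le2M$ and its lower jet stays within $(C_B+1)(\delta_{\A}+s)\le M$ of an $M$-smooth reference. This last inequality should follow from $c_B\varepsilon\le M/(2C_B)$, but it needs checking.

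If $y$ is pruned, the survivor $x_{k+1}$ in its cell has jet within $s$ of $\Xi(y)$, which gives (a) with error $\delta_{\A}+s$. It also satisfies $\Sigma(x_{k+1})\le\Sigma(y)$, and monotonicity of $\varrho_k$ in the aggregate carries (b) forward.

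\emph{Main obstacle: the cost bound for a spliced path.} The survivor's history differs from the tracked path $\tra'$. Its aggregate is dominated by the aggregate of $\tra'$, but $\tra'$ itself is not a single tree path. What is needed is the cost comparison
\[
\varphi(\Sigma_K(x_K))\le\varphi(\Sigma_K(\tra'))\le J(z_u)+L_J(C_B+1)(\delta_{\A}+s).
\]
The plan is to avoid any splicing and compare aggregates directly, maintaining
\[
\Sigma(x_k)\le\Sigma_k(z_u)+kL(C_B+1)(\delta_{\A}+s).
\]
Each step adds at most $L(C_B+1)(\delta_{\A}+s)$: non-expansiveness of $\varrho_k$ in the aggregate controls the carried difference, and the $L$-Lipschitz bound in the window controls the new window. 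Pruning only decreases $\Sigma$. Then $J\le J(z_u)+L_\varphi KL(C_B+1)(\delta_{\A}+s)$ with $L_J=L_\varphi KL$.

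Feasibility of the final path $x_K$ needs no comparison: every node in $\widehat V$ already passed its own window predicates, and the predicates separate over windows by \eqref{eq:predsep}. Letting $\eta\to0$ completes the proof.
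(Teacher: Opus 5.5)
Your proposal is correct and is essentially the paper's proof. It uses the same per-depth cell count under the $2M$-smoothness screen, and the same induction that carries the jet error $\delta_{\A}+s$ into Lemma~\ref{lem:corrector2}. It also keeps the aggregate invariant $\Sigma(x_k)\le\Sigma^{*}_{k}+k\,L\,(C_B+1)(\delta_{\A}+s)$, which you rightly adopt instead of the spliced-path comparison.

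Four small repairs are needed:
\begin{itemize}
\item An interval of length $\ell$ can meet $\lceil \ell/s\rceil+1\le \ell/s+2$ cells, not $\ell/s+1$. So you need $s\le 2M$ rather than $s\le 4M$. The hypotheses give this via $s\le c_B\varepsilon\le M/(2C_B)\le M/2$.
\item The same chain with $C_B\ge 1$ settles your unchecked bound $(C_B+1)(\delta_{\A}+s)\le M$.
\item The aggregate step must apply monotonicity of $\varrho_k$ before non-expansiveness. Non-expansiveness alone does not handle the case $\Sigma(x_k)<\Sigma^{*}_{k}$.
\item The limit $\eta\to 0$ should go through the finiteness of the fixed set $V_K$, as the paper does with its recurrence argument.
\end{itemize}
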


The theorem is proven in Appendix~\ref{app:genpruning}.
Dominance pruning thus curtails our search tree to a polynomial
size in the problem horizon while retaining the optimality
certificate of Theorem~\ref{thm:optimality} up to an increment
$\delta_{\A} \to \delta_{\A} + s = \delta_{\A} +
L_{\mathrm{x}}\, \rho$ in the certified gap.
Dominance-pruned tree search costs roughly
\[
  O\Bigl(\, |\A|\, K\,
  \Bigl( \frac{8 K M}{s} \Bigr)^{d}
  \Bigl( \frac{8 M}{s} \Bigr)^{|R| - d}
  \,\Bigr)
\]
compute iterations, with a massive parallelization scope across
each depth. This renders dispersive forward tree search
implementable for kinodynamic planning on embedded-tier
parallel processors, as we demonstrate in the next section.

\begin{figure}[t]
  \centering
  \includegraphics[width=\columnwidth]{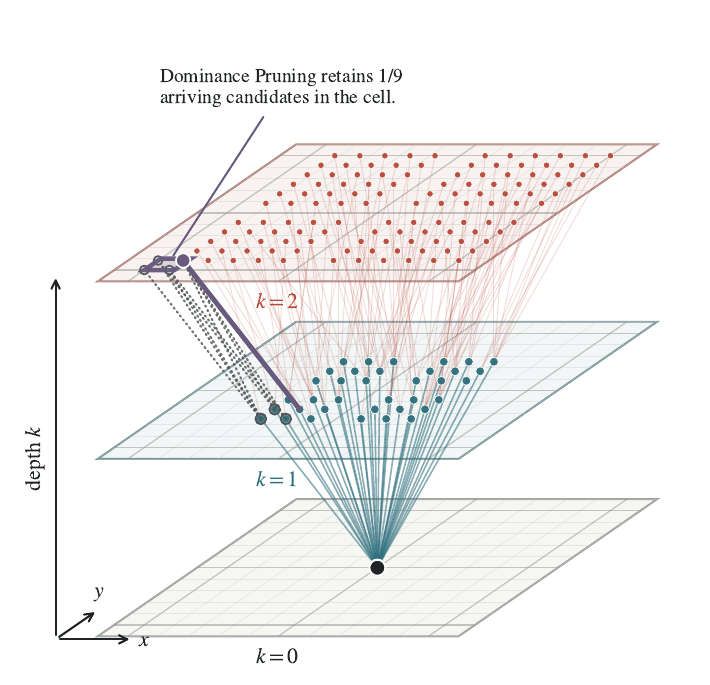}
  \caption{Dominance pruning illustrated for the 2D single
  integrator $\dot{p} = u$. At each depth $k$, one representative
  is retained per occupied state-space cell.}
  \label{fig:dominance_pruning}
\end{figure}

\section{Dispersive Tree Search: Algorithm and Experiments}
\label{sec:experiments}

In this section we present DFT\textsuperscript{*}, a massively
parallel implementation of dispersive forward tree search with
depth-synchronous dominance pruning. We evaluate this
implementation on Dynobench~\citep{ortizharo2024idbastar}, a
kinodynamic planning benchmark, against state-of-the-art
planners based on sampling, motion-primitive search, and
optimization. We
further present a receding horizon implementation of
DFT\textsuperscript{*}, WarpWalker
(WWDFT\textsuperscript{*}), and evaluate its realtime planning
performance in dynamic environments against the state of the
art. Finally, we study the near-optimality of
DFT\textsuperscript{*} on a simple $K$-integrator system where
strong nonlinear optimization frameworks are available for
relative comparison.

\subsection{Kinodynamic Planning with Forward Dispersive Search}
\label{sec:dft-bench}

To recall notation, $z = h(x) \in \R^{d}$ is the flat output
with orders $R = (r_{1}, \dots, r_{d})$, $\Xi(x)$ is the
output jet at a state $x$, and a command $a$ from a command
set $\A$ is a measurable flat input
$w_{a} : [0, \Delta] \to \W$. A command produces the flat
rollout $z^{x}_{a} = \roll_{x}(a)$ from a state $x$ through
the integrator chain, along with the endpoint state
$\Phi_{a}(x)$.
Finally, let $\mathrm{dist}$ be a metric on states and
$\mathrm{cell}(x)$ the quantization of $x$ at a pruning radius
$\rho$ in this metric.

We present DFT\textsuperscript{*} as Algorithm~\ref{alg:bfs}.

\begin{algorithm}[!htb]
\caption{DFT\textsuperscript{*} --- breadth-first forward
dispersive search}
\label{alg:bfs}
\begin{algorithmic}
\definecolor{kwfor}{HTML}{1F77B4}
\definecolor{kwif}{HTML}{E8710A}
\definecolor{kwrep}{HTML}{8E44AD}
\definecolor{kwret}{HTML}{D62728}
\definecolor{kwend}{HTML}{7F7F7F}
\algrenewcommand\algorithmicforall{{\color{kwfor}\textbf{for all}}}
\algrenewcommand\algorithmicfor{{\color{kwfor}\textbf{for}}}
\algrenewcommand\algorithmicif{{\color{kwif}\textbf{if}}}
\algrenewcommand\algorithmicthen{}
\algrenewcommand\algorithmicelse{{\color{kwif}\textbf{else}}}
\algrenewcommand\algorithmicend{{\color{kwend}\textbf{end}}}
\algrenewcommand\algorithmicrepeat{{\color{kwrep}\textbf{repeat}}}
\algrenewcommand\algorithmicuntil{{\color{kwrep}\textbf{until}}}
\algrenewcommand\algorithmicreturn{{\color{kwret}\textbf{return}}}
\State $k \gets 0$; \quad
  $V_{0} \gets \{x_{\mathrm{start}}\}$; \quad
  $\Sigma(x_{\mathrm{start}}) \gets \Sigma_{0}$
\Repeat
  \State $\widehat{V}_{k+1} \gets \emptyset$
  \ForAll{$x \in V_{k}$}
    \ForAll{$a \in \A(x)$}
    \State $z^{x}_{a} \gets \roll_{x}(a)$ on $[0, \Delta]$;
      \quad $x' \gets \Phi_{a}(x)$
    \If{$g_{k}\bigl( z^{x}_{a} \bigr) \ge 0$ for all
      $g \in \preds$:}
      \State $\Sigma(x') \gets \varrho_{k}\bigl( \Sigma(x),\,
        z^{x}_{a} \bigr)$
      \State $\mathrm{parent}(x') \gets (x, a)$
      \State add $x'$ to $\widehat{V}_{k+1}$
    \EndIf
    \EndFor
  \EndFor
  \State $V_{k+1} \gets
    \left\{ x' \in \widehat{V}_{k+1}
    \middle|
    \begin{gathered}
    \Sigma( x' ) \le \Sigma( x'' )
    \ \forall x'' \in \widehat{V}_{k+1} \\
    \text{with}\
    \mathrm{cell}( x'' ) = \mathrm{cell}( x' )
    \end{gathered}
    \right\}$
  \State $k \gets k + 1$
\Until{termination condition}
\State \Return $\tra_{x}$ minimizing $\Sigma(x)$ over
  $x \in V_{k}$
\end{algorithmic}
\end{algorithm}

Note that at each depth $k$, the pairs $(x, a)$ with
$x \in V_{k}$ and $a \in \A(x)$ evaluate independently of one
another, so the at most $|V_{k}| \cdot |\A|$ pairs expand
concurrently, followed by the pruning step. The nested loops
parallelize across the available threads, and the loop over
$k$ is the only sequential dimension of the search. The
construction of $V_{k+1}$ from $V_{k}$ is thus implemented as
an invocation of a fused CUDA kernel. Each thread integrates
the window of its pair $(x, a)$ in substeps and checks the
predicates.

The performance of DFT\textsuperscript{*} freely improves
within a given wall-clock budget as the available parallel
compute scales. As a result, we emulate different tiers of
deployment by capping the kernel grid to the
streaming-multiprocessor (SM) count of representative
devices. We use $8$ SMs to emulate the Jetson Orin Nano and
$16$ for the AGX Orin, a popular set of onboard processors,
while the full desktop GPU evaluations use all $128$ SMs of an
RTX 4090.

\subsubsection{Algorithmic Variants}
\label{sec:variants}

Our headline algorithm DFT\textsuperscript{*}
(Algorithm~\ref{alg:bfs}) performs an unbiased breadth-first
search on the dispersive tree and thus carries the
near-optimality certificate of Theorem~\ref{prop:genpruning}
for the general class of optimal control
problems~\eqref{eq:ocp}. Kinodynamic planning problems admit
additional problem structure which planning
algorithms~\citep{ortizharo2024idbastar, king2026akinopdf}
routinely leverage. Such structural leverage is also admissible
within the DFT\textsuperscript{*} framework, which can bias the
search on the dispersive tree while converging to the
\emph{same} near-optimal solutions and maintaining the
\emph{same} certificate. We therefore include two
specializations of Algorithm~\ref{alg:bfs} in our evaluation
for completeness.

\noindent\textbf{Spatial Dominance Pruning:} When the problem predicates are static,
$g_{k} = g$ for every depth $k$, such as in static obstacle
evasion, the pruning procedure keeps one dominant node per
cell, as opposed to one per cell for each depth, so that with
$V_{0:k} := \bigcup_{j \le k} V_{j}$,
\[
  V_{k+1}
  :=
  \left\{ x \in \widehat{V}_{k+1}
  \,\middle|\,
  \begin{gathered}
  \Sigma( x ) \le \Sigma( x' )
  \ \forall\, x' \in \widehat{V}_{k+1} \cup V_{0:k} \\
  \text{with}\ \mathrm{cell}( x' ) = \mathrm{cell}( x )
  \end{gathered}
  \right\} .
\]

\noindent\textbf{A\textsuperscript{*} Beam Search:} When the
cost-to-go $J^{*}( x, t )$ admits a consistent estimate $h$,
the running summary $\Sigma$ of Algorithm~\ref{alg:bfs}
estimates the total cost as $f := \Sigma + h$, and
A\textsuperscript{*} beam
search~\citep{hart1968astar} is optimal on the dispersive tree
at a lower sample count. Expansion proceeds over batches of
estimated minimum cost nodes instead of the lowest depth
ordering of Algorithm~\ref{alg:bfs}.

In addition to the headline algorithm DFT\textsuperscript{*},
we evaluate DFT\textsuperscript{*}-Static,
DFT-A\textsuperscript{*} and DFT-A\textsuperscript{*}-Static,
which apply either or both of the above specializations and
recover the same solutions at lower computation wall times.
The Dynobench instances of Section~\ref{sec:dynobench} are
static kinodynamic planning problems, so that both
specializations apply and save up to $44\times$ of the
computation time. We report these results in
Appendix~\ref{app:variants}.

\subsection{Kinodynamic Planning on Dynobench}
\label{sec:dynobench}

We now evaluate DFT\textsuperscript{*} on kinodynamic planning
problems from Dynobench~\citep{ortizharo2024idbastar}, across
platforms spanning the first- and second-order unicycles, the
car with a trailer, and quadrotors under two parameterizations.

\subsubsection{Pushforward Construction of Dispersive Commands}
\label{sec:pushforward}

In our evaluation we deploy dispersive command sets
constructed by a \emph{pushforward} mechanism. For each
platform, we select a discrete set $G \subset \mathcal{U}$ of
admissible control signals and push it through the system flow,
\[
  \A \;=\; \A(x) \;=\; \bigl\{\, \tra^{x}_{u} \;:\; u \in G \,\bigr\}.
\]
The command set at a state $x$ is therefore an image of $G$
under the dynamical flow of the system. For each of our
constructions, we provide dispersion estimates of the form
\[
  \delta_{\A} \;\le\; \varrho(G,\, M)
\]
over the $M$-smooth trajectory class, where the estimate
$\varrho$ vanishes as $G$ is refined.

This construction mechanism is notably in contrast to the
`universal' pullback construction mechanism suggested by
Theorem~\ref{thm:realization}. The flat input lattice sampled by
Theorem~\ref{thm:realization} remains agnostic to the underlying
dynamical system and enforces control admissibility by rejecting
the commands
\[
  \bigl\{\, w \in \A \;:\;
  \Psi_{u}\bigl( \jet z(t),\, w(t) \bigr) \notin U
  \ \text{for some } t \in [0, 1] \,\bigr\} .
\]
Such pullback rejections can only be processed after a forward
pass through the system dynamics. We find that the acceptance
rates $|\A(x)| / |\A|$ can be quite low at deployment
resolutions and massively waste compute resources, limiting the
system-agnostic pullback mechanism to an object of theoretical
completeness. Theorem~\ref{thm:realization} therefore serves as
a certificate of existence and constructibility of dispersive
command sets across the entire flat class, while effective
deployment of DFT\textsuperscript{*} leverages additional
platform structure via the specific pushforward designs of the
catalog below.

\subsubsection{Command Sets for the Catalog}
\label{sec:examples}

We present the catalog of differentially flat platforms we
use for our evaluation. We briefly describe the dynamics of
each platform, followed by its flat reduction and the command
sets we use for our deployment. A compact summary of these
constructions is presented in this section, and we refer the
reader to Appendix~\ref{app:platform_catalog} for the detailed
calculus. Figure~\ref{fig:platform_fans} visualizes the
dispersive trees generated by our construction.

\medskip\noindent\textbf{Notation:} $I_{1}, \dots, I_{j}$
denote the $j$ equal subwindows of the window $[0, \Delta]$,
indexed by $\ell$, and a command holds its sampled channels
constant on each subwindow. $G_{n}[l, u]$ denotes the
$n$-level grid
\[
  G_{n}[l, u]
  \;=\;
  \Bigl\{\, l + \tfrac{k}{n-1}\,(u - l) :
  k = 0, \dots, n-1 \,\Bigr\}
\]
of spacing $(u - l)/(n - 1)$, and
$d_{S^{1}}(\theta, \theta') = \min\bigl\{ |\theta - \theta'|,
\; 2\pi - |\theta - \theta'| \bigr\}$ the angular distance on
$S^{1}$. The dispersion bounds provided over the $M$-smooth
trajectory class are derived in
Appendix~\ref{app:platform_catalog}.

\begin{figure*}[t]
  \centering
  \includegraphics[width=\textwidth]{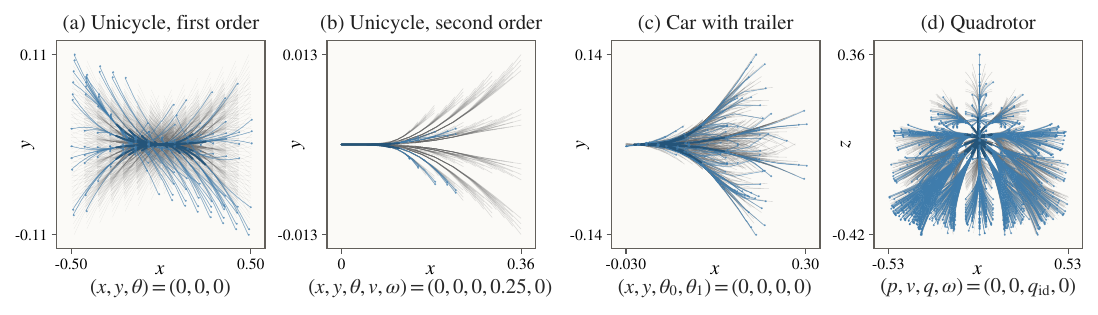}
  \caption{Depth-two forward trees under the constructed
  command sets. Gray trajectories represent those pruned off.}
  \label{fig:platform_fans}
\end{figure*}

\paragraph{1.\ Unicycle, first order.}
The first-order unicycle operates with the state
$(x, y, \theta) \in \R^{2} \times S^{1}$, the position and
orientation, the control input
$u = (v, \omega) \in [-\tfrac{1}{2}, \tfrac{1}{2}]^{2}$, the
forward speed and angular velocity, and the dynamics
$$
  \dot x = v \cos\theta, \qquad
  \dot y = v \sin\theta, \qquad
  \dot\theta = \omega .
$$
The system is flat with the output $z = (x, y)$ and orders
$R = (2, 2)$, the defining relations
$$
  v = \pm \|\dot z\|, \qquad
  \theta = \operatorname{atan2}\bigl( \pm\dot y,\, \pm\dot x
  \bigr), \qquad
  \omega = \frac{\dot z \wedge \ddot z}{\|\dot z\|^{2}} .
$$

Let
$g = (v_{\ell}, \omega_{\ell})_{\ell=1}^{j} \in
G_{n}\bigl[-\tfrac{1}{2}, \tfrac{1}{2}\bigr]^{2j}$. A command
$w_{g}$ holds the control constant at
$(v, \omega) = (v_{\ell}, \omega_{\ell})$ on
$I_{\ell}$, yielding
\[
  \begin{aligned}
  \A(x)
  &\;=\;
  \Bigl\{\, w_{g} \;:\; g \in
  G_{n}\bigl[-\tfrac{1}{2}, \tfrac{1}{2}\bigr]^{2j} \,\Bigr\}, \\
  w_{g}(t)
  &\;=\;
  v_{\ell}\,\omega_{\ell}\,
  \bigl( -\sin\theta(t),\; \cos\theta(t) \bigr),
  \quad t \in I_{\ell},
  \end{aligned}
\]
with $\theta(t) = \theta_{\ell-1} + \omega_{\ell}\,
(t - t_{\ell-1})$, and the dispersion
$$
  \delta_{n, j}
  \;\le\;
  \frac{1 + \Delta/2}{2\,(n-1)}
  + \Bigl( \sqrt{2}\, M + \tfrac{1}{8} \Bigr) \frac{\Delta}{j}
  \;\xrightarrow[\; n \uparrow,\ j \uparrow \;]{}\; 0 .
$$

We use the state-space quantization metric
$$
  \mathrm{dist} \;=\; \bigl\|(x - x',\, y - y')\bigr\|
  + \tfrac{1}{2}\, d_{S^{1}}(\theta, \theta') .
$$
For evaluation, we use the window
$\Delta = 0.5$\,s, $j = 1$, $n = 8$, the pruning radius
$\rho = 0.08$, and the goal tolerance $\delta_{0} = 0.1$.

\paragraph{2.\ Unicycle, second order.}
The second-order unicycle operates with the state
$(x, y, \theta, v, \omega)$, the position, orientation, forward
speed $|v| \le \tfrac{1}{2}$, and angular velocity
$|\omega| \le \tfrac{1}{2}$, the control input
$u = (a, \alpha) \in [-\tfrac{1}{4}, \tfrac{1}{4}]^{2}$, the
forward and the angular acceleration, and the dynamics
\[
  \dot x = v \cos\theta, \quad
  \dot y = v \sin\theta, \quad
  \dot\theta = \omega, \quad
  \dot v = a, \quad
  \dot\omega = \alpha .
\]
The system is flat with the output $z = (x, y)$ and orders
$R = (3, 3)$, the first-order relations defining
$(v, \theta, \omega)$ as before, and
\[
  a = \frac{\langle \dot z, \ddot z \rangle}{\|\dot z\|},
  \qquad
  \alpha = \frac{\dot z \wedge \dddot z}{\|\dot z\|^{2}}
  - 2\, \frac{\langle \dot z, \ddot z \rangle\,
  (\dot z \wedge \ddot z)}{\|\dot z\|^{4}} .
\]

Let
$g = (a_{\ell}, \alpha_{\ell})_{\ell=1}^{j} \in
G_{n}\bigl[-\tfrac{1}{4}, \tfrac{1}{4}\bigr]^{2j}$. A command
$w_{g}$ holds the control constant at
$(a, \alpha) = (a_{\ell}, \alpha_{\ell})$ on
$I_{\ell}$, yielding
\[
  \begin{aligned}
  \A(x)
  &\;=\;
  \Bigl\{\, w_{g} \;:\; g \in
  G_{n}\bigl[-\tfrac{1}{4}, \tfrac{1}{4}\bigr]^{2j} \,\Bigr\}, \\
  w_{g}(t)
  &\;=\;
  \begin{pmatrix}
  - v \omega^{2} \cos\theta
  - \bigl( 2 a_{\ell} \omega + v \alpha_{\ell} \bigr)
  \sin\theta \\
  - v \omega^{2} \sin\theta
  + \bigl( 2 a_{\ell} \omega + v \alpha_{\ell} \bigr)
  \cos\theta
  \end{pmatrix},
  \quad t \in I_{\ell},
  \end{aligned}
\]
where $(v, \omega, \theta)$ obey $\dot v = a_{\ell}$,
$\dot\omega = \alpha_{\ell}$, $\dot\theta = \omega$ on
$I_{\ell}$, and the dispersion
$$
  \delta_{n, j}
  \;\le\;
  \frac{2 + 3 \Delta}{8\,(n-1)}
  + \Bigl( \sqrt{2}\, M + \tfrac{5}{16} \Bigr) \frac{\Delta}{j}
  \;\xrightarrow[\; n \uparrow,\ j \uparrow \;]{}\; 0 .
$$

We use the state-space quantization metric
\[
  \mathrm{dist} =
  \|(x - x', y - y')\|
  + \tfrac{d_{S^{1}}(\theta, \theta')}{2}
  + \tfrac{|v - v'| + |\omega - \omega'|}{4} .
\]
For evaluation, we use the window
$\Delta = 0.5$\,s, $j = 1$, $n = 4$, the pruning radius
$\rho = 0.05$, and the goal tolerance $\delta_{0} = 0.1$.

\paragraph{3.\ Car with trailer.}
The car with trailer operates with the state
$(x, y, \theta_{0}, \theta_{1})$, the car position, car
heading, and trailer heading, with wheelbase
$L = \tfrac{1}{4}$, hitch length $\ell = \tfrac{1}{2}$, and
hitch bound
$|\theta_{0} - \theta_{1}|_{S^{1}} < \tfrac{\pi}{4}$. The
control input
$u = (v, \phi) \in [-\tfrac{1}{10}, \tfrac{1}{2}] \times
[-\tfrac{\pi}{3}, \tfrac{\pi}{3}]$ is the rolling speed and
the steering angle. The dynamics obey
\[
  \begin{aligned}
  \dot x &= v \cos\theta_{0}, &
  \dot y &= v \sin\theta_{0}, \\
  \dot\theta_{0} &= \frac{v}{L} \tan\phi, &
  \dot\theta_{1} &= \frac{v}{\ell}
  \sin(\theta_{0} - \theta_{1}) .
  \end{aligned}
\]
The system is flat with the output the trailer position
$z = (x, y) - \ell\, (\cos\theta_{1}, \sin\theta_{1})$ and
orders $R = (3, 3)$, the defining relations
\[
  \begin{aligned}
  \theta_{1}
  &= \operatorname{atan2}\bigl( \pm\dot z_{y},\,
  \pm\dot z_{x} \bigr), &
  \theta_{0}
  &= \theta_{1} + \arctan
  \frac{\ell\, (\dot z \wedge \ddot z)}{\|\dot z\|^{3}}, \\
  v
  &= \frac{\pm\|\dot z\|}{\cos(\theta_{0} - \theta_{1})},
  &
  \phi
  &= \arctan \frac{L\, \dot\theta_{0}}{v} .
  \end{aligned}
\]

Let
$g = \bigl( v_{0},\, (\dot v_{\ell},
\phi_{\ell})_{\ell=1}^{j} \bigr)$. A command $w_{g}$ applies
the control signals $v(t), \phi(t)$ over the window
$[t_{0}, t_{j}]$, where
\[
  v(t) = v(t_{\ell-1}) + \dot v_{\ell}\, (t - t_{\ell-1}),
  \qquad
  \phi(t) = \phi_{\ell},
  \quad t \in I_{\ell},
\]
with
\[
  \begin{aligned}
  &v(t_{0}) \in
  G_{n_{0}}\bigl[-\tfrac{1}{10}, \tfrac{1}{2}\bigr], \quad
  \dot v_{\ell} \in G_{n_{v}}\bigl[-\tfrac{1}{2},
  \tfrac{1}{2}\bigr], \\
  &\phi_{\ell} \in G_{n_{\phi}}\bigl[-\tfrac{\pi}{3},
  \tfrac{\pi}{3}\bigr],
  \end{aligned}
\]
gated by the cheap rejection of commands with
$v(t) \notin \bigl[-\tfrac{1}{10}, \tfrac{1}{2}\bigr]$.
The flat input follows through the reduction,
\[
  \begin{aligned}
  \A(x)
  &\;=\;
  \Bigl\{\, w_{g} \;:\; g \in G_{n_{0}} \times
  \bigl( G_{n_{v}} \times G_{n_{\phi}} \bigr)^{j} \,\Bigr\}, \\
  w_{g}(t)
  &\;=\;
  \begin{pmatrix}
  \bigl( \ddot s - s \omega_{1}^{2} \bigr) \cos\theta_{1}
  - \bigl( 2 \dot s\, \omega_{1} + s \dot\omega_{1} \bigr)
  \sin\theta_{1} \\
  \bigl( \ddot s - s \omega_{1}^{2} \bigr) \sin\theta_{1}
  + \bigl( 2 \dot s\, \omega_{1} + s \dot\omega_{1} \bigr)
  \cos\theta_{1}
  \end{pmatrix},
  \end{aligned}
\]
where the trailer speed
$s = v \cos(\theta_{0} - \theta_{1})$, the trailer rate
$\omega_{1} = \dot\theta_{1}$, and their derivatives follow
from the dynamics under the command.

The command set provides the dispersion
$$
  \delta_{n, j}
  \;\le\;
  c(\Delta) \Bigl( \tfrac{1}{n_{0}} + \tfrac{1}{n_{v}}
  + \tfrac{1}{n_{\phi}} + \tfrac{M \Delta}{j} \Bigr)
  \;\xrightarrow[\; n \uparrow,\ j \uparrow \;]{}\; 0
$$
over $M$-smooth references with mild regularity
restrictions on the control signals.

We use the state-space quantization metric
\[
  \mathrm{dist} =
  \|(x - x', y - y')\|
  + \tfrac{d_{S^{1}}(\theta_{0}, \theta_{0}')
  + d_{S^{1}}(\theta_{1}, \theta_{1}')}{2} .
\]
For evaluation, we use the window
$\Delta = 0.3$\,s, $j = 1$,
$(n_{0}, n_{v}, n_{\phi}) = (6, 3, 4)$ amounting to
$|\A| = 52$ after the rejection, the pruning radius
$\rho = 0.06$, and the goal tolerance $\delta_{0} = 0.1$.

\paragraph{4.\ Quadrotor.}
The quadrotor operates with the state $(p, q, v, \omega)$, the
position $p \in \R^{3}$, attitude $q \in S^{3}$, velocity, and
body rate, with the mass $m$, inertia $J$, and mixer $B_{0}$
mapping the rotor forces to the total thrust and the torques
$(c, \tau)$, the control input $u \in [0, 1.3]^{4}$, the four
rotor thrusts in hover units, and the dynamics
\[
  \begin{aligned}
  \dot p &= v, &
  m \dot v &= c\, b_{3}(q) - m g e_{3}, \\
  \dot q &= \tfrac12\, q \otimes \omega, &
  J \dot\omega &= \tau - \omega \times J \omega .
  \end{aligned}
\]
The system is flat with the output $z = (p, \psi)$, the
position and the yaw, and orders $R = (4, 4, 4, 2)$. The
thrust, the attitude, the body rates, and the torque, which
in turn specify the controls, are recovered as
\[
  \begin{aligned}
  c &= m\, \|\ddot p + g e_{3}\|, \quad
  b_{3} = \frac{\ddot p + g e_{3}}{\|\ddot p + g e_{3}\|}, \\
  R &= \bigl[\, b_{2} \times b_{3},\; b_{2},\; b_{3}
  \,\bigr], \quad
  b_{2} = \frac{b_{3} \times e_{\psi}}{\| b_{3} \times
  e_{\psi} \|}, \\
  \omega &= R^{\top} \bigl( b_{3} \times \dot b_{3}
  + s\, b_{3} \bigr), \quad
  \dot b_{3} = \frac{m \bigl( I - b_{3} b_{3}^{\top} \bigr)\,
  \dddot p}{c}, \\
  s &= \frac{\dot\psi\, \langle b_{3}, e_{3} \rangle}
  {\| b_{3} \times e_{\psi} \|^{2}}
  - \frac{\langle \dot b_{3} \times e_{\psi},\,
  b_{2} \times b_{3} \rangle}{\| b_{3} \times e_{\psi} \|},
  \\
  \tau &= J\, \dot\omega + \omega \times J \omega,
  \end{aligned}
\]
with $e_{\psi} = (\cos\psi, \sin\psi, 0)$ the yaw direction,
$s$ the axial body rate, and $\dot\omega$ the derivative of
the third row, a function of the snap $p^{(4)}$ and
$\ddot\psi$.

The command set specification for the quadrotor is slightly
more involved and uses a geometric $SE(3)$
controller~\citep{lee2010geometric}. The controller tracks a
reference signal that specifies the thrust vector
$f = \ddot p + g e_{3}$ and the axial body rate
$s = \langle \omega,\, e_{3} \rangle$.

Let
$g_{\mathrm{ref}} = \bigl( f_{0},\, \dot f_{0},\,
\ddot f,\, r \bigr)$ and the
corresponding thrust signal be the quadratic ramp
\[
  f^{\mathrm{ref}}(t) = f_{0}
  + \dot f_{0}\, t
  + \tfrac{\ddot f}{2}\, t^{2},
  \qquad
  \dot f^{\mathrm{ref}}(t) = \dot f_{0}
  + \ddot f\, t,
\]
along with the axial body rate $r$. The reference
signal immediately determines the collective thrust by
projecting the reference on the current axis
$b_{3} = R(q)\, e_{3}$,
\begin{equation}
\label{eq:quad-thrust}
  c = m\, \mathrm{clamp}\bigl( \langle f^{\mathrm{ref}},\, b_{3}
  \rangle,\, F_{-},\, F_{+} \bigr).
\end{equation}
It remains to determine
the torque to specify the control signal
$u(t) = \mathrm{clip}\bigl( B_{0}^{-1}(c, \tau),\,
[0, 1.3]^{4} \bigr)$, the mixer demand projected to the rotor
box, corresponding to a command $w_{\mathrm{ref}}$. We follow the standard stabilizing cascade of the
PX4 autopilot~\citep{meier2015px4}, where the torque tracks
a demanded body rate as
\begin{equation}
\label{eq:quad-torque}
  \tau = J \bigl( k_{\omega} ( \omega^{\mathrm{ref}}
  - \omega )
  + \dot\omega^{\mathrm{ref}} \bigr)
  + \omega \times J \omega,
\end{equation}
where the body-rate reference is set to align the body axis
along the thrust direction
$b_{3}^{\mathrm{ref}} = f^{\mathrm{ref}} /
\| f^{\mathrm{ref}} \|$, clipped to the body-rate box:
\begin{equation}
\label{eq:quad-rate}
  \begin{aligned}
  \tilde\omega &= R(q)^{\top} \Bigl( k_{b}\, \theta\,
  \tfrac{b_{3} \times b_{3}^{\mathrm{ref}}}
  {\| b_{3} \times b_{3}^{\mathrm{ref}} \|}
  + b_{3}^{\mathrm{ref}} \times \dot b_{3}^{\mathrm{ref}}
  + r\, b_{3}^{\mathrm{ref}} \Bigr), \\
  \omega^{\mathrm{ref}} &= \mathrm{clip}\bigl(
  \tilde\omega,\, [-\bar\omega, \bar\omega]^{3} \bigr),
  \end{aligned}
\end{equation}
with $\theta = \arccos \bigl\langle b_{3},\,
b_{3}^{\mathrm{ref}} \bigr\rangle$, and the normalized axis
term set to $0$ whenever
$b_{3} \times b_{3}^{\mathrm{ref}} = 0$. Each $g_{\mathrm{ref}}$
is picked from
\[
  \begin{aligned}
  &f_{0} \in g e_{3}
  {+} G_{n_{0}}[-\bar a,\bar a]^{3}, \quad
  \dot f_{0} \in
  G_{n_{1}}[-\bar\jmath,\bar\jmath]^{3}, \\
  &\ddot f \in
  G_{n_{2}}[-\bar s,\bar s]^{3}, \quad
  r \in G_{n_{s}}[-\bar r,\bar r],
  \end{aligned}
\]
with $\bar a$, $\bar\jmath$, $\bar s$, and $\bar r$
respectively constraining the acceleration, jerk, snap, and
the axial body rate. The flat input follows along the
rollout as the snap and the yaw acceleration,
\[
  \A(x)
  =
  \bigl\{ w_{\mathrm{ref}} = ( p^{(4)}, \ddot\psi )
  :
  g_{\mathrm{ref}} = ( f_{0}, \dot f_{0},
  \ddot f, r ) \bigr\}.
\]

The command set is $\delta$-dispersive with
$$
  \delta_{n}
  \;\le\;
  c(M, \Delta) \Bigl( \tfrac{1}{n_{0}} + \tfrac{1}{n_{1}}
  + \tfrac{1}{n_{2}} + \tfrac{1}{n_{s}}
  + M \Delta
  \Bigr)
  \xrightarrow[n \uparrow,\ \Delta \downarrow]{} 0 .
$$

We use the state-space quantization metric
\[
  \mathrm{dist} = \|p - p'\|
  + \tfrac{\arccos | \langle q, q' \rangle |}{2}
  + \tfrac{\|v - v'\|}{10}
  + \tfrac{\|\omega - \omega'\|}{20} .
\]
For evaluation, we use the window
$\Delta = 0.25$\,s, the lateral grids
$G_{5}[-6, 6]$ and the vertical grid $G_{5}[-3, 2.9]$, the
gains $(k_{b}, k_{\omega}) = (12, 30)$, the pruning radius
$\rho = 0.35$, and the goal tolerance $\delta_{0} = 0.7$,
which amounts to $|\A| = 81$ after the thrust is constrained
to $(F_{-}, F_{+}) = (0.05,\, 1.3)\, g$.

\subsubsection{Evaluation}
\label{sec:benchmark}

Dynobench is a collection of kinodynamic motion planning
instances for each of the above platforms, specified by a tuple
$(x_{\mathrm{start}}, \mathcal{O}, x_{\mathrm{goal}},
\delta_{0})$ with $\mathcal{O} \subset \X$ the obstacle set.
Each task is an instance of
\eqref{eq:ocp} whose cost is the trajectory
duration and predicates are the obstacle clearance and terminal
goal proximity
\[
  \begin{aligned}
  g_{\mathrm{obs}}\bigl( x(\cdot) \bigr)
  &\;=\;
  \inf_{t \in [0, T]}\,
  \mathrm{dist}\bigl( x(t),\, \mathcal{O} \bigr) - r, \\
  g_{\mathrm{goal}}\bigl( x(\cdot) \bigr)
  &\;=\;
  \delta_{0} - \mathrm{dist}\bigl( x(T),\, x_{\mathrm{goal}}
  \bigr),
  \end{aligned}
\]
with $r \ge 0$ a clearance radius. We run
Algorithm~\ref{alg:bfs} along with the introduced command sets
and the termination condition
$$
  \min_{x' \in V_{k}}\,
  \mathrm{dist}\bigl( x',\, x_{\mathrm{goal}} \bigr)
  \;\le\; \delta_{0} .
$$

We benchmark DFT\textsuperscript{*} on the Dynobench against
iDb-A\textsuperscript{*}~\citep{ortizharo2024idbastar} with its
released motion primitives, as well as the two OMPL baselines
SST\textsuperscript{*} and RRT\textsuperscript{*}+TO with
their official dynoplan integrations. We run
iDb-A\textsuperscript{*} at its released configuration for
each platform, including its own discontinuity bound
$\delta_{0}$, and take its wall time to the first
post-optimized feasible trajectory. We report
the first solution costs and times for each of the planners in
Table~\ref{tab:dynobench}(a), the DFT\textsuperscript{*} wall
times at the embedded tier and, in blue, on the full RTX~4090.
Additionally, for
SST\textsuperscript{*} we report an anytime solution at the
$60$\,s budget. Extension~2 shows the DFT\textsuperscript{*}
first solutions for the trailer car and the quadrotor.

\begin{table*}[t]
\caption{Offline planning in static environments.
(a) First solutions on the Dynobench instances,
averaged over $5$ runs for each of the evaluated planners and
DFT\textsuperscript{*}. DFT\textsuperscript{*} wall times
reported use the Orin Nano equivalent compute tier ($8$ SMs),
with the bracketed blue walls scaling the same computation to
the full desktop RTX~4090;
the CPU-based planners run on an Intel i9-14900KF.
(b) Post-optimization, which both improves the trajectory and
repairs its terminal discontinuity, closing the goal tolerance
from $\delta_{0}$ to ${\sim}0$. For iDb-A\textsuperscript{*}
the optimizer additionally repairs the discontinuities its
search leaves between motion primitives, so its first solution
is already an optimizer output and we report that cost alone.
(c) DFT\textsuperscript{*} against the results reported by
FLASK~\citep{king2026akinopdf}, the blue walls again the full
RTX~4090, our two rows averaged over $5$ runs as in (a). The
reported cost is the path
length but we continue to solve for minimum time, consistent
with FLASK's LQMT connections.
The search variants of Section~\ref{sec:variants} are
evaluated on the same instances as presented in
Table~\ref{tab:variants} of Appendix~\ref{app:variants}. The
cost therein remains the same but the computation times
improve due to the narrower search trees.}
\label{tab:dynobench}
\centering
\footnotesize
\setlength{\tabcolsep}{3pt}
\begin{tabular*}{\textwidth}{|@{\extracolsep{\fill}\hspace{\tabcolsep}}ll|cc|cc|ccc|cc|}
\hline
\textbf{(a)} & & \multicolumn{2}{c|}{DFT\textsuperscript{*} (ours)} &
\multicolumn{2}{c|}{iDb-A\textsuperscript{*}} &
\multicolumn{3}{c|}{SST\textsuperscript{*}} &
\multicolumn{2}{c|}{RRT\textsuperscript{*}+TO} \\
\hline
system & task &
cost (s) & wall (ms) &
cost (s) & wall (ms) &
cost (s) & wall (ms) & cost@60s &
cost (s) & wall (ms) \\
\hline
\multirow{3}{*}{\shortstack[l]{Unicycle\\1st order}}
& bugtrap & \cgood{21.1} & \wmsb{202} \textcolor{grayblue}{($32$)}
& $22.5$ & \wms{269}
& \cbad{72.7} & \wms{325} & $24.4$
& $27.8$ & \wms{5171} \\
 & kink & \cgood{14.0} & \wmsb{61} \textcolor{grayblue}{($18$)}
& $21.4$ & \wms{147}
& \cbad{49.7} & \wms{106} & $15.7$
& $24.5$ & \wms{418} \\
 & parallelpark & \cgood{3.5} & \wmsb{4} \textcolor{grayblue}{($5$)}
& $3.9$ & \wms{5}
& \cbad{12.8} & \wms{17} & $3.2$
& $3.7$ & \wms{6} \\
\hline
\multirow{3}{*}{\shortstack[l]{Unicycle\\2nd order}}
& bugtrap & \cgood{24.5} & \wms{3218} \textcolor{grayblue}{($310$)}
& $25.3$ & \wmsb{687}
& $65.9$ & \wms{5696} & $58.5$
& $40.7$ & \wms{8156} \\
 & kink & $18.9$ & \wms{1116} \textcolor{grayblue}{($107$)}
& \cgood{18.1} & \wmsb{235}
& \cbad{95.6} & \wms{6856} & $88.2$
& $29.1$ & \wms{1399} \\
 & parallelpark & $6.7$ & \wms{11} \textcolor{grayblue}{($4$)}
& \cgood{5.8} & \wmsb{8}
& \cbad{20.2} & \wms{991} & $13.0$
& $6.2$ & \wms{29} \\
\hline
\multirow{3}{*}{\shortstack[l]{Car with\\a trailer}}
& bugtrap & $19.8$ & \wms{2713} \textcolor{grayblue}{($195$)}
& \cgood{19.5} & \wmsb{425}
& $51.1$ & \wms{11921} & $49.7$
& $40.1$ \textcolor{dkred}{($2/5$)} & \wms{4637} \\
 & kink & \cgood{15.9} & \wms{1326} \textcolor{grayblue}{($95$)}
& $22.9$ & \wmsb{634}
& \cbad{64.2} \textcolor{dkred}{($4/5$)} & \wms{2666} & $59.9$
& $33.5$ & \wms{3247} \\
 & parallelpark & $3.7$ & \wmsb{8} \textcolor{grayblue}{($3$)}
& $8.1$ & \wms{33}
& \cbad{13.1} \textcolor{dkred}{($1/5$)} & \wms{15053} & $13.1$
& \cgood{3.5} & \wms{20} \\
\hline
\multirow{4}{*}{\shortstack[l]{Quadrotor\\
  $u = (u_{i})_{i=1}^{4}$}}
& block & \cgood{2.50} & \wms{2443} \textcolor{grayblue}{($168$)}
& $6.58$ & \wms{753}
& -- \textcolor{dkred}{($0/5$)} & -- & --
& $4.82$ & \wmsb{720} \\
 & window & \cgood{2.00} & \wmsb{393} \textcolor{grayblue}{($30$)}
& $4.62$ & \wms{1543}
& -- \textcolor{dkred}{($0/5$)} & -- & --
& $2.76$ \textcolor{dkred}{($1/5$)} & \wms{4024} \\
 & inversion & \cgood{3.25} & \wms{420} \textcolor{grayblue}{($30$)}
& $6.14$ & \wmsb{365}
& -- \textcolor{dkred}{($0/5$)} & -- & --
& -- \textcolor{dkred}{($0/5$)} & -- \\
& inversion obs. & \cgood{3.80} & \wms{2978} \textcolor{grayblue}{($250$)}
& $5.23$ & \wmsb{697}
& -- \textcolor{dkred}{($0/5$)} & -- & --
& -- \textcolor{dkred}{($0/5$)} & -- \\
\hline
\end{tabular*}

\vspace{0.9em}

\begin{minipage}[t]{0.40\textwidth}
\centering
\setlength{\tabcolsep}{2pt}
\renewcommand{\arraystretch}{1.24}
\begin{tabular}{|ll|cc|c|}
\hline
\textbf{(b)} & & \multicolumn{2}{c|}{DFT\textsuperscript{*} (ours)} &
iDb-A\textsuperscript{*} \\
\hline
system & task &
opt.\ (s) & wall (ms) &
opt.\ (s) \\
\hline
\multirow{3}{*}{\shortstack[l]{Unicycle\\1st order}}
& bugtrap & \cgood{20.9} & \wms{38} & $22.5$ \\
 & kink & \cgood{13.3} & \wms{30} & $21.4$ \\
 & parallelpark & \cgood{3.2} & \wms{6} & $3.9$ \\
\hline
\multirow{3}{*}{\shortstack[l]{Unicycle\\2nd order}}
& bugtrap & \cgood{25.1} & \wms{48} & $25.3$ \\
 & kink & \cgood{17.8} & \wms{57} & $18.1$ \\
 & parallelpark & $5.9$ & \wms{13} & \cgood{5.8} \\
\hline
\multirow{3}{*}{\shortstack[l]{Car with\\a trailer}}
& bugtrap & \cgood{19.0} & \wms{30} & $19.5$ \\
 & kink & \cgood{13.5} & \wms{68} & $22.9$ \\
 & parallelpark & \cgood{3.6} & \wms{12} & $8.1$ \\
\hline
\multirow{4}{*}{Quadrotor}
& block & \cgood{2.88} & \wms{245} & $6.58$ \\
 & window & \cgood{2.29} & \wms{205} & $4.62$ \\
 & inversion & \cgood{2.60} & \wms{206} & $6.14$ \\
 & inversion obs. & \cgood{3.55} & \wms{416} & $5.23$ \\
\hline
\end{tabular}
\end{minipage}\hfill
\begin{minipage}[t]{0.60\textwidth}
\centering
\setlength{\tabcolsep}{1.5pt}
\renewcommand{\arraystretch}{1.25}
\begin{tabular}{|l|cc|cc|cc|cc|}
\hline
\textbf{(c)} & \multicolumn{4}{c|}{Unicycle 1st order} &
\multicolumn{4}{c|}{Quadrotor $u{=}(c,\tau_{1},\tau_{2},\tau_{3})$} \\
\hline
 & \multicolumn{2}{c|}{bugtrap} &
\multicolumn{2}{c|}{wall} &
\multicolumn{2}{c|}{window} &
\multicolumn{2}{c|}{block} \\
\hline
planner & len (m) & wall (ms) & len (m) & wall (ms)
& len (m) & wall (ms) & len (m) & wall (ms) \\
\hline
\shortstack[l]{\rule{0pt}{2.6ex}FLASK\\RRTConnect} & $12.39$ & \wmsb{6.0}
& $4.82$ & \wmsb{0.6}
& $5.49$ & \wmsb{0.5}
& $8.08$ & \wmsb{0.9} \\
\shortstack[l]{\rule{0pt}{2.6ex}FLASK SST\textsuperscript{*}\\(BVP-aug.)} & $12.51$ & \wms{48.6}
& $5.24$ & \wms{11.0}
& $6.28$ & \wms{22.6}
& $8.61$ & \wms{1.4} \\
\shortstack[l]{\rule{0pt}{2.6ex}FLASK SST\textsuperscript{*}\\(SIMD-only)} & $12.27$ & \wms{407}
& $5.27$ & \wms{29.5}
& $6.96$ & \wms{629}
& $9.56$ & \wms{4752} \\
\shortstack[l]{\rule{0pt}{2.6ex}FLASK RRT\\(DP-based)} & $12.25$ & \wms{565}
& $5.61$ & \wms{24.8}
& $7.98$ & \wms{2631}
& $8.75$ & \wms{7602} \\
\hline
iDb-A\textsuperscript{*} & $11.51$ & \wms{400}
& $3.89$ & \wms{100}
& $5.12$ & \wms{2700}
& $7.94$ & \wms{4600} \\
SST\textsuperscript{*} & $12.38$ & \wms{200}
& $4.97$ & \wms{100}
& $6.72$ & \wms{46300}
& $9.32$ & \wms{45800} \\
Kino-PAX & $13.77$ & \wms{5.8}
& $5.28$ & \wms{2.6}
& \cbad{16.41} & \wms{178.6}
& $10.36$ & \wms{21.4} \\
\hline
DFT\textsuperscript{*} (ours) & \cgood{9.77} & \wms{73} \textcolor{grayblue}{($7$)}
& \cgood{3.21} & \wms{3.5} \textcolor{grayblue}{($1$)}
& \cgood{5.06} & \wms{406} \textcolor{grayblue}{($30$)}
& $7.75$ & \wms{4335} \textcolor{grayblue}{($270$)} \\
DFT\textsuperscript{*}-Static & $9.97$ & \wms{11} \textcolor{grayblue}{($3$)}
& \cgood{3.21} & \wms{2.6} \textcolor{grayblue}{($1$)}
& $5.18$ & \wms{199} \textcolor{grayblue}{($16$)}
& \cgood{7.61} & \wms{1564} \textcolor{grayblue}{($104$)} \\
\hline
\end{tabular}
\end{minipage}
\end{table*}

Dynobench provides a post-hoc optimization step run via
Crocoddyl~\citep{mastalli2020crocoddyl}, necessary to iron out
the discontinuities in the initial solution of
iDb-A\textsuperscript{*}. As a result we also report
post-optimization costs, seeding the optimizer with each
planner's first solution, in Table~\ref{tab:dynobench}(b).
DFT\textsuperscript{*} solutions do not contain
discontinuities, but the seeded post-hoc optimization does
help compress the terminal goal tolerance nearly to $0$ in
addition to some performance benefits. Notably, the
optimization does not erase the gap at the initial solution
seeds and the respective advantage is often maintained.

\begin{figure}[t]
  \centering
  \includegraphics[width=0.48\linewidth]{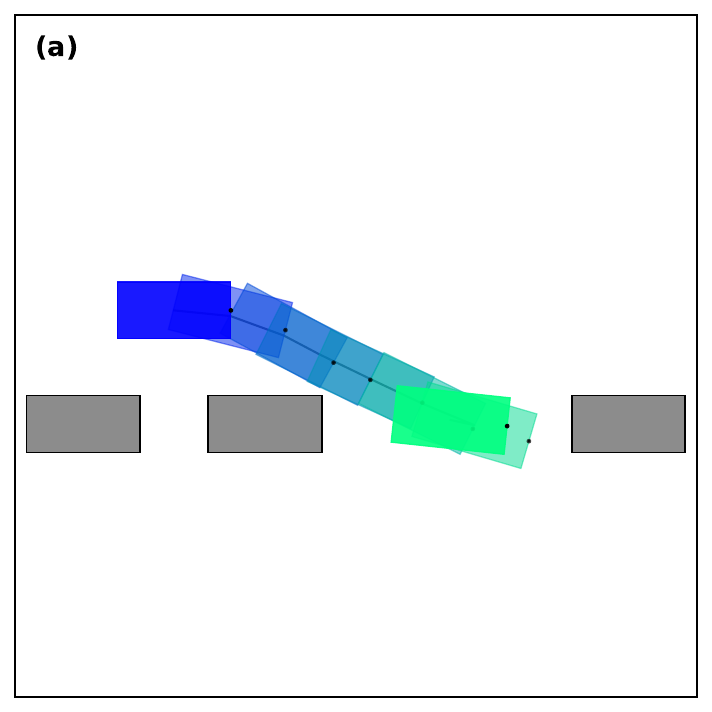}
  \hfill
  \includegraphics[width=0.48\linewidth]{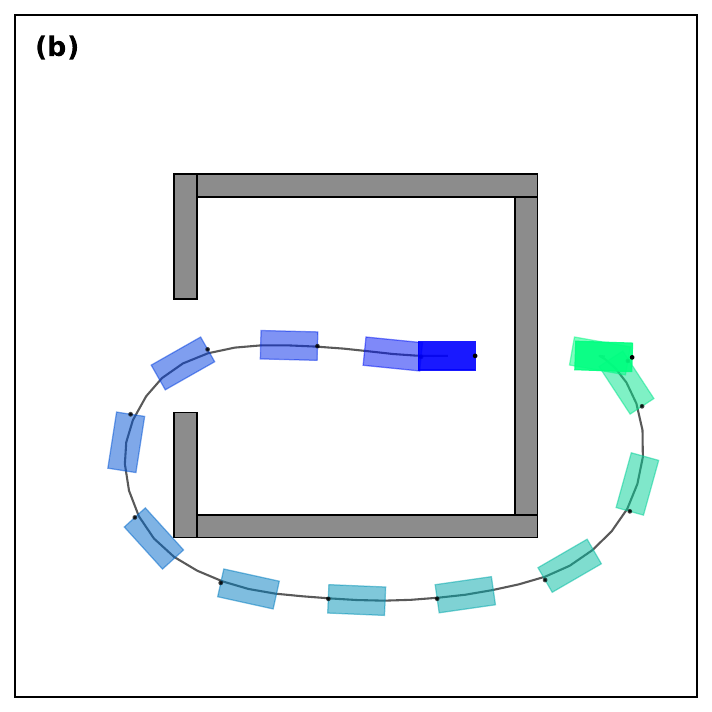}
  \\[0.4em]
  \includegraphics[width=0.48\linewidth]{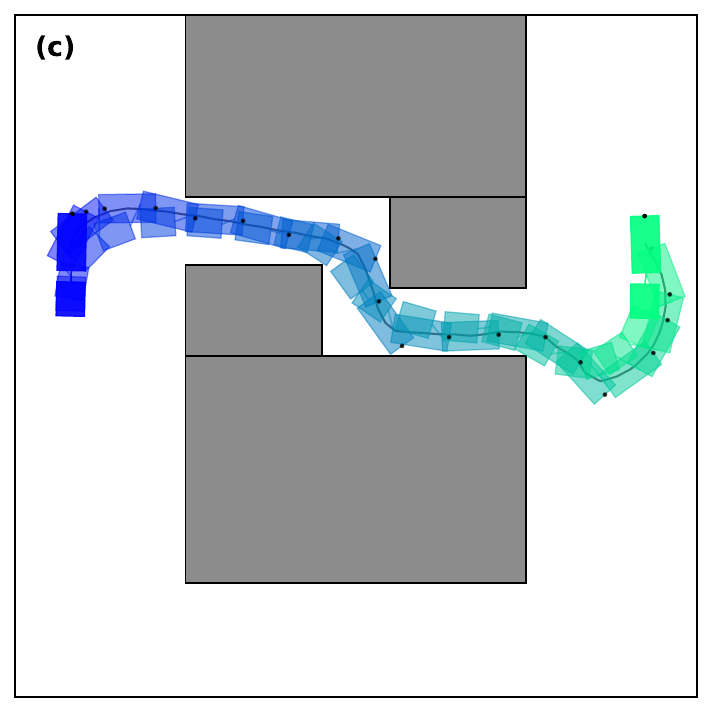}
  \hfill
  \includegraphics[width=0.48\linewidth]{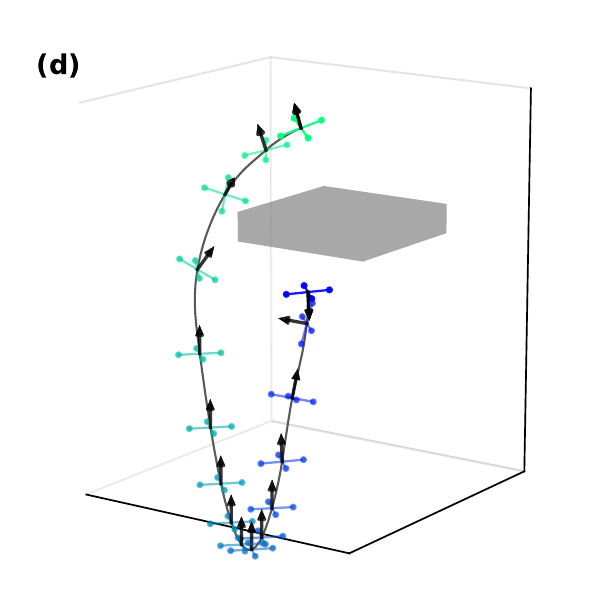}
  \caption{DFT\textsuperscript{*} first solutions, pose
  snapshots drawn along the trajectory from start (blue) to
  goal (green): (a) first-order unicycle on parallel park, (b)
  second-order unicycle on bugtrap, (c) car with trailer
  through the kink passage, (d) quadrotor recovery with an
  obstacle.}
  \label{fig:dynobench_solutions}
\end{figure}

In Table~\ref{tab:dynobench}(c), we report the performance of
DFT\textsuperscript{*} in comparison with the results reported
in FLASK~\citep{king2026akinopdf}. For this table, we match the
platform configurations of FLASK, the unicycle constrained to
the nonnegative speed in $[0,1]$ and the turn rate in
$[-1.5,1.5]$, while the quadrotor is commanded in total thrust
and torque rather than the per-rotor thrusts, the catalog
tracker clamping its torque demand to their torque box in
place of the per-rotor clamp. In this parametrization random
control propagation is considerably more well behaved, which
is why SST\textsuperscript{*} solves some of the quadrotor
tasks.

In Table~\ref{tab:variants} of Appendix~\ref{app:variants}, we
report the search variants of Section~\ref{sec:variants} on
the same instances. The Dynobench instances pose static
kinodynamic planning problems, so that both the
A\textsuperscript{*} beam ordering and the static dominance
pruning apply, and together they reduce the planning wall time
by up to $44\times$ at the same solution costs.

To summarize our findings across
Tables~\ref{tab:dynobench}(a), (b), and (c),
DFT\textsuperscript{*} finds solutions closest to the optimum
in largely competent wall time budgets while deployed on
embedded-tier parallel processing. These wall times shrink by
up to $15\times$ when we allow dedicated computation to scale
all the way to what is available on the desktop RTX~4090.

\subsection{Online Planning in Dynamic Environments}
\label{sec:wwdft}

In this section, we present an online implementation of
DFT\textsuperscript{*} to demonstrate real-time performance of
dispersive forward tree search. We address the problem of
safety-assured quadrotor flight in the presence of obstacles
with unknown dynamics. Our implementation involves a receding-horizon
setup where DFT\textsuperscript{*} is deployed to iteratively
solve the fixed-horizon OCP~\eqref{eq:online_ocp} at a regular
interval $\Delta$. The online algorithm we call
WarpWalker-DFT\textsuperscript{*} (WWDFT\textsuperscript{*}) is
presented below as Algorithm~\ref{alg:online}.

\begin{figure*}[t]
  \centering
  \includegraphics[width=\textwidth]{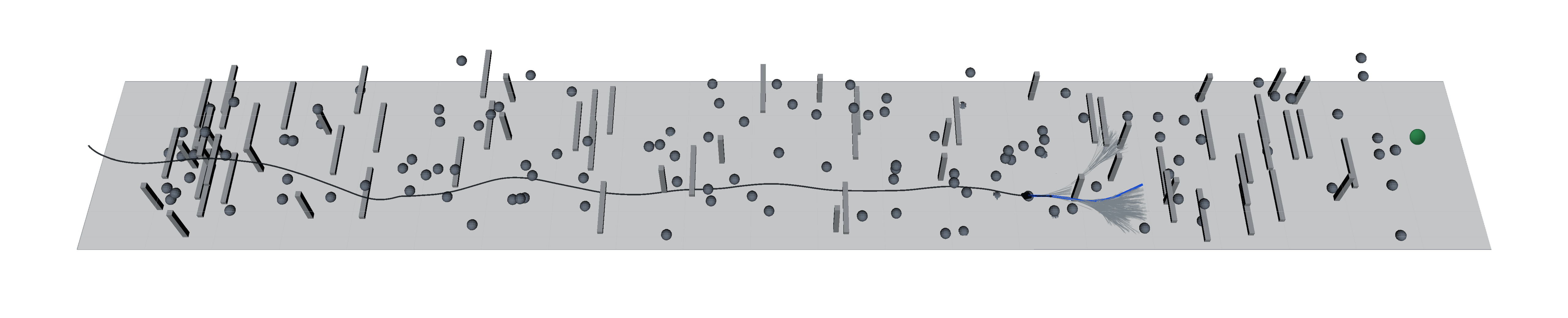}
  \caption{WWDFT\textsuperscript{*} mid-flight in the hard
  dynamic environment ($n_{\mathrm{st}} = 70$,
  $n_{\mathrm{dyn}} = 130$). The quadrotor flies from the start
  to the goal (green, right) through static
  cuboids and moving spheres, trailing its executed trajectory
  in black.}
  \label{fig:wwdft_snapshot}
\end{figure*}

\begin{table*}[t]
\centering
\caption{Real-time planning in dynamic environments.
Collision-free success rates and travel times are
reported as mean over 5 runs. Relative worst-case violation
of each of the imposed $v$/$a$/$j$ constraints is also
reported, structurally absent for WWDFT\textsuperscript{*}.}
\label{tab:wwdft}
\footnotesize
\setlength{\tabcolsep}{3.4pt}
\begin{tabular}{l l l cc cc cc ccc}
\hline
 & & & \multicolumn{2}{c}{$n_{\mathrm{st}} {=} 18$,
   $n_{\mathrm{dyn}} {=} 32$}
 & \multicolumn{2}{c}{$n_{\mathrm{st}} {=} 35$,
   $n_{\mathrm{dyn}} {=} 65$}
 & \multicolumn{2}{c}{$n_{\mathrm{st}} {=} 70$,
   $n_{\mathrm{dyn}} {=} 130$}
 & \multicolumn{3}{c}{Max violation (\%)} \\
\cline{4-5}\cline{6-7}\cline{8-9}\cline{10-12}
Method & Platform & Constraints
 & Succ. & $T_{\mathrm{trav}}$ (s)
 & Succ. & $T_{\mathrm{trav}}$ (s)
 & Succ. & $T_{\mathrm{trav}}$ (s)
 & $v$ & $a$ & $j$ \\
\hline
FAPP & Triple integrator
 & $\| v \|_{2}{\le}8$, $\| a \|_{2}{\le}12.2$,
   $\| j \|_{2}{\le}60$
 & \cmid{60\%} & 21.06 & \cmid{60\%} & 22.47
 & \cbad{40\%} & 20.02 & 39.3 & 17.0 & 2.1 \\
MIGHTY & Triple integrator
 & $\| v \|_{2}{\le}8$, $\| a \|_{2}{\le}12.2$,
   $\| j \|_{2}{\le}60$
 & \cgood{100\%} & 18.15 & \cgood{100\%} & 18.61
 & \cgood{100\%} & 19.31 & 0 & 0 & 0.03 \\
WWDFT\textsuperscript{*} & Triple integrator
 & $\| v \|_{2}{\le}8$, $\| a \|_{2}{\le}12.2$,
   $\| j \|_{2}{\le}60$
 & \cgood{100\%} & \cgood{15.10}
 & \cgood{100\%} & \cgood{15.35}
 & \cgood{100\%} & \cgood{15.20}
 & 0 & 0 & 0 \\
\hline
WWDFT\textsuperscript{*} & 3D quadrotor
 & $\| v \|_{2}{\le}8$, $| \omega |{\le}\omega_{\max}$,
   $\theta{\le}45^{\circ}$
 & \cgood{100\%} & \cgood{14.25} & \cgood{100\%} & \cgood{14.50}
 & \cgood{100\%} & \cgood{14.60} & 0 & -- & -- \\
\hline
\end{tabular}
\end{table*}

At each replanning step, we solve the following OCP from the
current state:
\begin{equation}
\label{eq:online_ocp}
  \begin{aligned}
  \min_{u(\cdot)} \quad
  & \mathrm{dist}\bigl( x( K \Delta ),\, x_{\mathrm{goal}}
    \bigr)
    \\[0.2em]
  \text{s.t.} \quad
  & \dot{x}(t) = f\bigl(x(t), u(t)\bigr), \\[0.2em]
  & \mathrm{dist}\bigl( x(t),\, o_{j} \bigr)
    \ge r + v_{\max}^{j}\, t
    \quad \forall\, j,\ \forall\, t \in [0, K \Delta],
    \\[0.2em]
  & x(0) = x_{\mathrm{now}},
  \end{aligned}
\end{equation}
with
$o_{j}$ the current position of the $j$th
obstacle, $v_{\max}^{j}$ its speed bound, zero for a static
obstacle, and $r$ the clearance radius. We denote
by $\preds$ the set of predicates of OCP~\eqref{eq:online_ocp},
and by $\preds_{k}$ its restriction to the window
$[k\Delta, (k+1)\Delta]$.
Further, for nodes $x$, $y$ of the forward tree, we write
$\tra_{x} \subseteq \tra_{y}$ to denote that the trajectory
of $y$ extends the trajectory of $x$, that is, $y$ is a tree
descendant of $x$.

\begin{algorithm}[!htb]
\caption{WWDFT\textsuperscript{*} --- receding-horizon
DFT\textsuperscript{*}}
\label{alg:online}
\begin{algorithmic}
\definecolor{kwfor}{HTML}{1F77B4}
\definecolor{kwif}{HTML}{E8710A}
\definecolor{kwrep}{HTML}{8E44AD}
\definecolor{kwend}{HTML}{7F7F7F}
\algrenewcommand\algorithmicforall{{\color{kwfor}\textbf{for all}}}
\algrenewcommand\algorithmicif{{\color{kwif}\textbf{if}}}
\algrenewcommand\algorithmicthen{}
\algrenewcommand\algorithmicend{{\color{kwend}\textbf{end}}}
\algrenewcommand\algorithmicrepeat{{\color{kwrep}\textbf{repeat}}}
\algrenewcommand\algorithmicuntil{{\color{kwrep}\textbf{until}}}
\algrenewcommand\algorithmicfor{{\color{kwfor}\textbf{for}}}
\algrenewcommand\algorithmicdo{}
\Repeat\ every $\Delta$:
  \State $V_{0} \gets \{ x_{\mathrm{now}} \}$
  \State {\color{kwend}$\triangleright$ grow a certified tree
    to depth $K$ (Algorithm~\ref{alg:bfs})}
  \For{$k = 0, \dots, K - 1$:}
    \State $\widehat{V} \gets
      \left\{ \Phi_{a}( x )
      \middle|
      \begin{gathered}
      x \in V_{k},\ a \in \A( x ), \\
      g\bigl( z^{x}_{a} \bigr) \ge 0
      \ \ \forall\, g \in \preds_{k}
      \end{gathered}
      \right\}$
    \State $V_{k+1} \gets
      \left\{ x' \in \widehat{V}
      \middle|
      \begin{gathered}
      \Sigma( x' ) \le \Sigma( x'' )
      \ \forall x'' \in \widehat{V} \\
      \text{with}\
      \mathrm{cell}( x'' ) = \mathrm{cell}( x' )
      \end{gathered}
      \right\}$
  \EndFor
  \If{$V_{1} = \emptyset$:}
    \State declare failure
  \EndIf
  \State {\color{kwend}$\triangleright$ commit to the
    deepest best trajectory}
  \State $k^{*} \gets \max\,\{\, k : V_{k} \neq \emptyset \,\}$
  \State $y^{+} \gets y \in V_{k^{*}} :
    \mathrm{dist}\bigl( y,\, x_{\mathrm{goal}} \bigr)
    \le \mathrm{dist}\bigl( V_{k^{*}},\, x_{\mathrm{goal}} \bigr)$
  \State $x_{\mathrm{now}} \gets x_{1} \in V_{1} :
    \tra_{x_{1}} \subseteq \tra_{y^{+}}$
\Until{$\mathrm{dist}\bigl( x_{\mathrm{now}},\,
  x_{\mathrm{goal}} \bigr) \le \delta_{0}$}
\end{algorithmic}
\end{algorithm}

We formalize the safety guarantee for the committed
trajectories of Algorithm~\ref{alg:online}.

\begin{proposition}[Safety assurance]
\label{prop:safety}
Suppose that \\
(i) each obstacle moves no faster than its bound
$v_{\max}^{j}$, \\
(ii) the static map and the centers $o_{j}(t)$ are known at
time $t$, \\
(iii) each committed trajectory is executed exactly. \\
Then the executed flight by Algorithm~\ref{alg:online}
maintains the clearance $r$ from every obstacle until the goal
is reached or a failure is declared.
\end{proposition}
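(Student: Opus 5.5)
The proof is an induction over replanning steps. The invariant is that at each replanning instant $t_{n} = n\Delta$, the state $x_{\mathrm{now}}$ has clearance at least $r$ from every obstacle, and the trajectory segment executed over $[t_{n}, t_{n+1}]$ keeps that clearance. The first step is a reachability bound. By (i), obstacle $j$ satisfies $|o_{j}(t_{n}+\tau) - o_{j}(t_{n})| \le v^{j}_{\max}\,\tau$ for $\tau \ge 0$. For static obstacles $v^{j}_{\max}=0$, so they do not move. Combining this with the triangle inequality for $\mathrm{dist}$ gives, for any point $x$,
\[
  \mathrm{dist}\bigl(x,\, o_{j}(t_{n}+\tau)\bigr)
  \;\ge\;
  \mathrm{dist}\bigl(x,\, o_{j}(t_{n})\bigr) - v^{j}_{\max}\,\tau .
\]
Hence a rollout satisfying the predicate $\mathrm{dist}(x(\tau), o_{j}) \ge r + v^{j}_{\max}\tau$, evaluated with the centers $o_{j}=o_{j}(t_{n})$ known at $t_{n}$ by (ii), also satisfies $\mathrm{dist}(x(\tau), o_{j}(t_{n}+\tau)) \ge r$ for the true moving obstacle. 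The same holds for the static map.

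The second step fixes one replanning iteration at $t_{n}$ with $V_{1} \neq \emptyset$, so that no failure is declared. The algorithm commits to $x_{1} \in V_{1}$, which is the depth-one ancestor of $y^{+}$. By construction of $\widehat{V}$ at $k=0$, the committed command $a$ from $x_{\mathrm{now}}$ passed every predicate in $\preds_{0}$ on the window $[0,\Delta]$. By (iii), the executed motion over $[t_{n}, t_{n}+\Delta]$ coincides with the rollout $z^{x_{\mathrm{now}}}_{a}$. The first step then yields clearance $r$ from every obstacle throughout this window, and in particular at $x_{\mathrm{now}}$ at time $t_{n+1}$. Dominance pruning only removes nodes, so membership in $V_{1}$ implies membership in $\widehat{V}$. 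Parent consistency also holds: $x_{1}$ is the unique depth-one node on the branch to $y^{+}$, and it exists because every retained node descends from a retained depth-one node.

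Induction on $n$ then covers every executed window until either termination ($\mathrm{dist}(x_{\mathrm{now}}, x_{\mathrm{goal}}) \le \delta_{0}$) or a failure declaration. The segments concatenate, so the whole executed flight keeps clearance $r$.

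The main point needing care is the time-indexing in the first step. The predicate uses $t$ measured from the replanning instant, with $o_{j}$ frozen at the current measurement. I would state explicitly that at each replan the clock in OCP~\eqref{eq:online_ocp} restarts at $0$ and the centers are refreshed via (ii). Then the margin $v^{j}_{\max} t$ accounts exactly for the obstacle displacement within the committed window.

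One subtlety is that only the first window is ever executed before replanning. Safety of the deeper tree windows is therefore not needed, although they are certified anyway.
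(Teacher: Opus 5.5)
Your proposal is correct and takes essentially the paper's approach. On each committed window, you combine the predicate certified against the centers $o_{j}(t_{0})$ frozen at the commit with the displacement bound $\|o_{j}(t)-o_{j}(t_{0})\| \le v^{j}_{\max}(t-t_{0})$ and the triangle inequality to get clearance $r$, then induct over the commits. Your extra bookkeeping (the committed depth-one node lies in $V_{1} \subseteq \widehat{V}$ and so passed $\preds_{0}$, and the clock in OCP~\eqref{eq:online_ocp} restarts at each replan) makes explicit what the paper leaves implicit.
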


Note that WWDFT\textsuperscript{*} does \emph{not} guarantee
recursively feasible planning, and failure is possible whenever
the certified tree empties, $V_{1} = \emptyset$, in which case
WWDFT\textsuperscript{*} declares failure. Recursive
feasibility is in general impossible to guarantee for an
unbounded-horizon problem with unknown obstacle dynamics. A
larger planning horizon $K$ adds robustness to the
implementation at the cost of some added conservatism.

\subsubsection{Benchmarking in Dynamic Environments}

We compare the real-time performance of
WWDFT\textsuperscript{*} in simulation against two
state-of-the-art trajectory planners for dynamic environments,
FAPP~\citep{lu2025fapp} and MIGHTY~\citep{kondo2026mighty}.
Trajectory planners often adopt a reduced-order
$k$-integrator model for quadrotor flight and rely on
downstream tracking for physical execution. We therefore run
two versions of WWDFT\textsuperscript{*}. In the first, we use
a reduced-order model with jerk input under kinodynamic
constraints matched to FAPP and MIGHTY. In the second, we plan
directly on the full quadrotor. The quadrotor model's
acceleration authority is significantly lesser than the
reduced-order model's, while its jerk authority is limited
only by rotor lag, an electromechanical time constant of tens
of milliseconds, and as a result it produces jerk well beyond
the reduced-order bound.

The planning task is to fly a point robot from
$p_{\mathrm{start}} = (0, 0, 3)$ to the goal
$p_{\mathrm{goal}} = (100, 0, 3)$ m through a
$110 \times 12 \times 5$ m corridor populated by $50$, $100$,
and $200$ obstacles on the easy, medium, and hard
environments. Of the obstacles, $35\%$ are static axis-aligned
cuboids, pillars and bars of $0.4$ m thickness and $4$ m
length, placed randomly through the environment. The remaining
are spheres of radius $0.4$ m following smooth nonlinear
periodic trajectories at speeds up to $v_{\max}^{j} = 0.7$
m/s. The map at any instant is known to each planner but the
trajectories of the dynamic obstacles remain unknown. MIGHTY
and FAPP receive exact current obstacle centers and velocities
online and extrapolate at constant velocity, while
WWDFT\textsuperscript{*} receives centers only and certifies
against the isotropic reachable ball of the OCP above. We run
five repetitions per environment for each planner.

\medskip\noindent
\textbf{Reduced-Order Model and Jerk Sampling:}
The reduced-order dynamical system is the trivially flat
triple integrator with the state $x = (p, v, a)$ and the
three-dimensional jerk input,
\[
  \dot p = v, \qquad \dot v = a, \qquad \dot a = j,
\]
flat with the output $z = p$ and orders $R = (3, 3, 3)$, the
jerk being itself the flat input. For this system the lattice
construction of Theorem~\ref{thm:realization} deploys
conveniently, up to pullbacks for the acceleration and
velocity constraints. To this end, at each state $x$ we use
the command set
\[
  \A( x )
  \;=\;
  \bigl\{\, j \in G_{5}\bigl( J( x ) \bigr) :
  \| j \|_{2} \le \bar\jmath \,\bigr\},
\]
where $G_{5}( J( x ) )$ places the five-level lattice
$G_{5}[l, u]$ of Section~\ref{sec:examples} on each axis
interval of the box $J( x )$, the state-conditioned window
\[
  J( x )
  \;=\;
  [-\bar\jmath, \bar\jmath]^{3}
  \,\cap\,
  \frac{[-\bar a, \bar a]^{3} - a}{\Delta}
  \,\cap\,
  \frac{[-\bar v, \bar v]^{3} - v - a \Delta}
  {\Delta^{2} / 2} .
\]
We use the state quantization metric
$\| p - p' \| + \tfrac{1}{2}\, \| v - v' \|
+ \tfrac{1}{8}\, \| a - a' \|$ and prune at the radius
$\rho = 1$.

\medskip\noindent
\textbf{Sampling for the 3D Quadrotor:}
The full-order version plans on the quadrotor of
Section~\ref{sec:examples}, with the same reference
construction and geometric tracking loop, re-instantiated on
the Holybro X500 platform at a thrust-to-weight ratio of
$1.69$ with asymmetric first-order motor lag at time constants
of $12.5$ and $25$ ms. The held force references are sampled
over the acceleration box of the platform envelope, laterally
$G_{5}[-9.8, 9.8]$ and vertically $G_{5}[-8.2, 3.7]$
m/s$^{2}$, and filtered to the combined thrust annulus
$[0.37, 16.56]$ m/s$^{2}$, which amounts to $|\A| = 101$. We
prune at the radius $\rho = 1$ under the state-space metric of
Section~\ref{sec:examples}.

Throughout we use the constraints
$\| v \|_{2} \le 8$ m/s, $\| a \|_{2} \le 12.2$ m/s$^{2}$, and
$\| j \|_{2} \le 60$ m/s$^{3}$ across the two baselines and
the $k$-integrator WWDFT\textsuperscript{*}, which enforces
them as hard rollout rejections. Whereas the baselines include
them as penalties in their trajectory optimizers. For the 3D
quadrotor we use the per-axis body-rate constraints
$(3.84, 3.84, 3.49)$ rad/s and cap the body tilt at
$45^{\circ}$, as recommended by the autopilot, which limits
the achievable acceleration to $12.2$ m/s$^{2}$ at full
collective thrust in the maximizing direction and lesser
otherwise.

We report our experimental results in
Table~\ref{tab:wwdft}. A collision-free arrival within $0.75$
m of the goal counts as success. Travel time reports the
interval between goal issue and the arrival time, so that
initial planning time is included for every method.
WWDFT\textsuperscript{*} is run at $\Delta = 250$ ms, $K = 5$
across all runs. We run
MIGHTY with its default structure and optimizer settings;
only static/dynamic collision weights are increased
$10\times$.
We evaluate FAPP at its supplied algorithmic defaults,
changing only the task-defining quantities, namely the
prescribed $v$/$a$/$j$ bounds, the clearances representing
the benchmark sphere and the point robot, and its native
ideal-execution interface. Additional parameter tuning did
not conclusively help. All of the evaluated methods make it
to the goal region each trial. FAPP's executed trajectories
however penetrate a moving sphere in $7$ of $15$ trials and
exceed at least one of the requested differential bounds in
$10$ of $15$. WWDFT\textsuperscript{*} demonstrates
safety-assured real-time flight
(Proposition~\ref{prop:safety}) across all $30$ runs, and
reports the fastest aggregate travel times. Extension~1
shows a complete WWDFT\textsuperscript{*} episode in the
hard dynamic environment.

\section{Conclusion}
\label{sec:conclusion}

We have developed a propagation-based kinodynamic planner with
deterministic finite-sample near-optimality guarantees for the
class of differentially flat nonlinear systems. Our theoretical
results show that a forward tree of locally dispersive control
commands contains a near-optimal trajectory which
asymptotically converges to the optimum at a rate certified by
the local dispersion. We also developed an efficient search
algorithm DFT\textsuperscript{*} that iteratively prunes the forward tree
alongside its growth to avoid the exponential cost of naive
dispersive search while maintaining near-optimality. We
evaluated DFT\textsuperscript{*} on offline planning problems across a diversity
of platforms using our catalog of dispersive command sets
designed for sample efficiency at low sample counts and show
that it significantly improves upon the performance of other
state-of-the-art algorithms. We additionally developed a
receding-horizon implementation WWDFT\textsuperscript{*} and demonstrated similar
performance improvements for online navigation in dynamic,
cluttered environments.

A key limitation of our approach is that although dispersive
sampling is provably realizable by the mechanism of
Theorem~\ref{thm:realization}, this mechanism is inefficient at
low sample counts due to the nonlinear relationship between the
control input, the system state, and the flat input. As a
result, a careful utilization of additional structure of the
control system to design efficient command sets is necessary
for deployment. However, as we have shown across a wide range
of platforms cataloged in this work, this is doable without
much analytic trouble. Additionally, DFT\textsuperscript{*} has been designed with a
breadth-first search structure suitable for deployment on
parallel processors such as GPUs but unsuitable for a naive
deployment on CPUs without additional low-level programming
that may plausibly leverage vectorization and fine-grained
parallelism.

\bibliographystyle{SageH}
\bibliography{refs}

\appendix

\section{Proofs}
\label{app:proofs}

The following lemma, useful in our proofs, shows that
dispersion in the supremum norm at the top-order flat derivatives
bounds the dispersion in the canonical distance of
Definition~\ref{def:dist} on the trajectory space.

\begin{lemma}[Top-order reduction]
\label{lem:toporder}
Let $z, z'$ be flat trajectories on $[0, 1]$ with
$\jet z(0) = \jet z'(0)$. Then
\begin{equation}
\label{eq:toporder}
  \begin{aligned}
  &\sup_{t \in [0,1]}
  \bigl| \jet z(t) - \jet z'(t) \bigr|_{\infty}
  \\
  &\hspace{2em}\le
  \max_{1 \le i \le d}\; \sup_{t \in [0,1]}
  \bigl| z_{i}^{(r_{i}-1)}(t)
  - z'^{(r_{i}-1)}_{i}(t) \bigr| .
  \end{aligned}
\end{equation}
\end{lemma}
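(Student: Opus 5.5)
Fix a component $i$ and set $e_{i}(t) := z_{i}(t) - z'_{i}(t)$. The plan is to use only the matching initial jets and repeated integration on the unit window. Because $\jet z(0) = \jet z'(0)$, we have $e_{i}^{(k)}(0) = 0$ for every $0 \le k \le r_{i}-1$. Write
\[
  D_{i} := \sup_{t \in [0,1]} \bigl| e_{i}^{(r_{i}-1)}(t) \bigr| ,
\]
which is the quantity the right-hand side of \eqref{eq:toporder} maximizes over $i$. We will show by downward induction on $k$ that $\sup_{t \in [0,1]} |e_{i}^{(k)}(t)| \le D_{i}$ for each $k = r_{i}-1, r_{i}-2, \dots, 0$.

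The base case $k = r_{i}-1$ holds by definition of $D_{i}$. For the inductive step, take $k < r_{i}-1$. The function $e_{i}^{(k)}$ is absolutely continuous, since $z_{i}, z'_{i} \in W^{r_{i},\infty}$ and so their derivatives up to order $r_{i}-1$ are Lipschitz. It also vanishes at $0$. The fundamental theorem of calculus then gives
\[
  \bigl| e_{i}^{(k)}(t) \bigr|
  = \Bigl| \int_{0}^{t} e_{i}^{(k+1)}(\tau)\, \mathrm{d}\tau \Bigr|
  \le t \sup_{[0,1]} \bigl| e_{i}^{(k+1)} \bigr|
  \le D_{i},
  \qquad t \in [0,1],
\]
where the last step uses $t \le 1$. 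This is the only place the unit window length matters. Iterating the step yields the sharper bound $|e_{i}^{(k)}(t)| \le t^{\,r_{i}-1-k} D_{i}/(r_{i}-1-k)!$, but we only need the crude bound $D_{i}$.

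Finally, the left-hand side of \eqref{eq:toporder} is $\sup_{t}\max_{i}\max_{0\le k\le r_{i}-1}|e_{i}^{(k)}(t)|$. This is bounded by $\max_{i} D_{i}$, which is the claimed inequality. There is no real obstacle in this argument. The only point needing care is regularity: the components of flat trajectories in $W^{r_{i},\infty}$ have absolutely continuous derivatives through order $r_{i}-1$, so the integral identity is justified, and this is exactly the regularity class fixed in Assumption~(ii).
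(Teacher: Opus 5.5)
Your proof is correct and is essentially the paper's argument. The paper writes $e_i^{(k)}$ in one step as the Cauchy repeated integral of $e_i^{(r_i-1)}$, using the vanishing initial jet, and gets the bound $t^{m}/m!\,\sup|e_i^{(r_i-1)}|$ with $m = r_i-1-k$; your one-derivative-at-a-time induction is the same idea, and you note this sharper form yourself.
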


\begin{proof}
Fix $i$ and $0 \le k < r_{i} - 1$; write $g := z_{i} - z'_{i}$ and
$m := r_{i} - 1 - k$. Then
\[
  g^{(k)}(0) = g^{(k+1)}(0) = \cdots = g^{(r_{i}-1)}(0) = 0 ,
\]
so
\[
  g^{(k)}(t)
  \;=\;
  \frac{1}{(m-1)!}
  \int_{0}^{t} (t - s)^{m-1}\, g^{(r_{i}-1)}(s)\, \mathrm{d}s ,
\]
whence
\[
  \begin{aligned}
  \bigl| g^{(k)}(t) \bigr|
  &\;\le\;
  \frac{t^{m}}{m!}\,
  \sup_{[0,1]} \bigl| g^{(r_{i}-1)} \bigr| \\
  &\;\le\;
  \sup_{[0,1]} \bigl| g^{(r_{i}-1)} \bigr| ,
  \qquad t \in [0, 1] . \qedhere
  \end{aligned}
\]
\end{proof}

\subsection{Proof of Proposition~\ref{prop:floor} (Existence and
Complexity)}
\label{app:realization}

Fix an initial state $x \in \X$ and a dispersion
$\delta \in (0, M]$. Let $v$ be an admissible control on $[0, 1]$
with $M$-smooth trajectory
and $z_{v}$ be its flat trace starting at $x$. Denote the tuple of
the top-order derivatives of such traces and their collection as
\[
  \begin{aligned}
  y_{v}
  &\;:=\;
  \bigl( z_{v,i}^{(r_{i}-1)} \bigr)_{i=1}^{d} , \\
  \mathcal{Y}(x)
  &\;:=\;
  \bigl\{\, y_{v} \;:\;
  v \text{ admissible on } [0, 1] \text{ from } x , \\
  &\hspace{4.5em}
  \text{trajectory $M$-smooth} \,\bigr\} .
  \end{aligned}
\]
By jet observability (Assumption (iv),
Section~\ref{sec:assumptions}), given the $M$-smoothness of the
trajectory (Definition~\ref{def:msmooth}), each component of the
tuple $y_{v}$ obeys
\[
  \begin{aligned}
  y_{v,i}(0)
  &\;=\; \Xi_{i, r_{i}-1}(x) , \\
  | \dot{y}_{v,i} |
  &\;=\; | w_{v,i} | \;\le\; M
  \quad \text{a.e. on } [0, 1] .
  \end{aligned}
\]
Therefore the translated collection is contained within a product
of $d$ classes of $M$-Lipschitz scalar curves vanishing at $0$,
\[
  \begin{aligned}
  &\mathcal{Y}(x)
  - \bigl( \Xi_{i, r_{i}-1}(x) \bigr)_{i=1}^{d}
  \\
  &\quad\subseteq
  \prod_{i=1}^{d}
  \bigl\{\, y \in C([0,1]) \;:\; y(0) = 0 ,
  \\
  &\hspace{8em}
  | y(t) - y(s) | \le M\, | t - s | \,\bigr\} .
  \end{aligned}
\]
The right-hand side is the $d$-fold product of the class
$\mathrm{Lip}_{M}$ of Theorem~\ref{fact:lipentropy}, and the
theorem applied componentwise provides an external $s$-cover
of $\mathcal{Y}(x)$ with covering number
\[
  \log N(s) \;\le\; c_{\star}\, d\, \frac{M}{s} ,
  \qquad 0 < s \le M ,
\]
with $c_{\star} := c_{0}$ the constant of
Theorem~\ref{fact:lipentropy}.

Let $g_{1}, \dots, g_{N}$ be such that
\[
  \mathcal{Y}(x)
  \;\subseteq\;
  \bigcup_{\iota=1}^{N} B_{\delta/2}( g_{\iota} ) ,
  \qquad
  \log N \;\le\; 2\, c_{\star}\, d\, \frac{M}{\delta} ,
\]
where
$B_{s}(g) := \{\, y : \max_{i} \sup_{[0,1]} | y_{i} - g_{i} |
\le s \,\}$. For each $\iota$ with
$B_{\delta/2}(g_{\iota}) \cap \mathcal{Y}(x) \ne \emptyset$, pick
a tuple
$y_{v_{\iota}} \in B_{\delta/2}(g_{\iota}) \cap \mathcal{Y}(x)$
and assemble the command set
\[
  \begin{aligned}
  \A^{*} \;:=\;
  \bigl\{\, a_{\iota} :={}&
  \bigl( w_{v_{\iota}, i} \bigr)_{i=1}^{d}
  = \bigl( \dot{y}_{v_{\iota}, i} \bigr)_{i=1}^{d}
  \\
  &:\;
  B_{\delta/2}(g_{\iota}) \cap \mathcal{Y}(x) \ne \emptyset
  \,\bigr\} .
  \end{aligned}
\]

Pick $a_{\iota} \in \A^{*}$ and solve the initial value problem
$z^{(R)} = w_{a_{\iota}}$, $\jet z(0) = \Xi(x)$, by integrating
the chain \eqref{eq:rollout}. We obtain the flat trajectory under
$a_{\iota}$ starting at $x$,
\[
  \begin{aligned}
  z^{x}_{a_{\iota},i}(t)
  \;={}&
  \sum_{k=0}^{r_{i}-1} \Xi_{i,k}(x)\, \frac{t^{k}}{k!}
  \\
  &+
  \frac{1}{(r_{i}-1)!} \int_{0}^{t} (t - s)^{r_{i}-1}\,
  w_{v_{\iota},i}(s)\, \mathrm{d}s
  \\
  ={}& z_{v_{\iota},i}(t) ,
  \end{aligned}
\]
the last equality being Taylor's formula with integral remainder
for $z_{v_{\iota}}$. Following Assumption (ii),
Section~\ref{sec:assumptions},
\[
  u^{x}_{a_{\iota}}(t)
  \;=\;
  \Psi_{u}\bigl( \jet z_{v_{\iota}}(t),\, w_{v_{\iota}}(t) \bigr)
  \;=\;
  v_{\iota}(t) \;\in\; U
  \quad \text{a.e.} ,
\]
and hence each command in $\A^{*}$ is accepted by construction,
with $\roll_{x}(a_{\iota})$ the flat trace of
$\tra^{x}_{v_{\iota}}$.

Finally, for any admissible control $v$ from $x$,
$y_{v} \in B_{\delta/2}(g_{\iota})$ for some $\iota$, and there
exists a picked tuple
$y_{v_{\iota}} \in B_{\delta/2}(g_{\iota}) \cap \mathcal{Y}(x)$.
Using the triangle inequality,
\[
  \max_{1 \le i \le d}\; \sup_{[0,1]}
  \bigl| z_{v,i}^{(r_{i}-1)} - z_{v_{\iota},i}^{(r_{i}-1)} \bigr|
  \;\le\; \delta ,
\]
and $\jet z_{v}(0) = \jet z_{v_{\iota}}(0) = \Xi(x)$. Following
Lemma~\ref{lem:toporder}, we have
\[
  \min_{a \in \A^{*}}
  d\bigl( \roll_{x}(a),\, \tra^{x}_{v};\, [0,1] \bigr)
  \;\le\; \delta .
\]
Absorbing the factor $2$ in
$\log N \le 2\, c_{\star}\, d\, M/\delta$ into $c_{\star}$ completes our proof.

\subsection{Proof of Lemma~\ref{lem:affine} (Affinity of the
Input Map)}
\label{app:affine}

Fix a flat state $\xi$ in the domain of Assumption (ii),
Section~\ref{sec:assumptions}, and $w \in \R^{d}$. Following
Theorem~\ref{thm:flmr}, the closed-loop system under the
endogenous dynamic feedback with the constant new input $w$
produces, from the flat state $\xi$, a trajectory $x(\cdot)$ of
\eqref{eq:sys} whose flat trace $z$ obeys
\[
  \jet z(0) \;=\; \xi ,
  \qquad
  z^{(R)} \;\equiv\; w ,
\]
and along which, following Assumption (ii),
\[
  x(t) \;=\; \Psi_{x}\bigl( \jet z(t) \bigr) ,
  \qquad
  u(t) \;=\; \Psi_{u}\bigl( \jet z(t),\, w \bigr) .
\]
The jet propagates by the integrator chain,
\[
  \frac{\mathrm{d}}{\mathrm{d}t}\, \jet z(t)
  \;=\;
  S_{0}\, \jet z(t) + \Sigma\, w ,
\]
where the constant matrices $S_{0}$ and $\Sigma$ act entrywise,
for $1 \le i \le d$, as
\[
  \begin{aligned}
  \bigl( S_{0}\, \xi \bigr)_{i,k}
  &\;=\;
  \begin{cases}
    \xi_{i, k+1} , & 0 \le k \le r_{i} - 2 , \\
    0 , & k = r_{i} - 1 ,
  \end{cases}
  \\
  \bigl( \Sigma\, w \bigr)_{i,k}
  &\;=\;
  \begin{cases}
    0 , & 0 \le k \le r_{i} - 2 , \\
    w_{i} , & k = r_{i} - 1 .
  \end{cases}
  \end{aligned}
\]
The map $\Psi_{x}$ is continuously differentiable per Assumption
(ii), so the chain rule gives, for every $t$,
\[
  \dot{x}(t)
  \;=\;
  D\Psi_{x}\bigl( \jet z(t) \bigr)
  \bigl( S_{0}\, \jet z(t) + \Sigma\, w \bigr) ,
\]
while the control-affine dynamics of \eqref{eq:sys} give, for
almost every $t$,
\[
  \dot{x}(t)
  \;=\;
  f_{0}\bigl( x(t) \bigr) + g\bigl( x(t) \bigr)\, u(t) .
\]
Since $g$ has full column rank,
$g^{+} := ( g^{\top} g )^{-1} g^{\top}$ left-inverts $g$;
equating the two expressions for $\dot{x}(t)$ and solving for
$u(t)$ yields, for almost every $t$,
\[
  \Psi_{u}\bigl( \jet z(t),\, w \bigr)
  \;=\;
  u(t)
  \;=\;
  \alpha\bigl( \jet z(t) \bigr)
  + B\bigl( \jet z(t) \bigr)\, w ,
\]
where
\begin{align*}
  \alpha( \xi )
  &\;:=\;
  g\bigl( \Psi_{x}(\xi) \bigr)^{+}
  \bigl( D\Psi_{x}(\xi)\, S_{0}\, \xi
  - f_{0}( \Psi_{x}(\xi) ) \bigr) ,
  \\
  B( \xi )
  &\;:=\;
  g\bigl( \Psi_{x}(\xi) \bigr)^{+} D\Psi_{x}(\xi)\, \Sigma .
\end{align*}
Both sides of the identity are continuous in $t$, following the
continuity of $\Psi_{u}$, $\Psi_{x}$, $D\Psi_{x}$, $f_{0}$, and
$g$, so it extends from almost every $t$ to every $t$; at
$t = 0$, where $\jet z(0) = \xi$,
\[
  \Psi_{u}( \xi,\, w )
  \;=\;
  \alpha( \xi ) + B( \xi )\, w .
\]
Finally, $\alpha(\xi) = \Psi_{u}(\xi, 0)$ and
$B(\xi)\, e_{k} = \Psi_{u}(\xi, e_{k}) - \Psi_{u}(\xi, 0)$ for
the coordinate vectors $e_{k}$ of $\R^{d}$, both continuously
differentiable per Assumption (ii), so $\alpha$ and $B$ are
locally Lipschitz.

\subsection{Proof of Theorem~\ref{thm:realization} (Dispersion
Realization)}
\label{app:lattice}

We first prove the following lemma, which shows that the
acceptance test (Definition~\ref{def:accepted}) retains every
command that approximates the flat input of an eroded admissible
control in the specific sense introduced below. The realization
proof thereafter rests on this lemma.

\begin{lemma}[Margin transfer]
\label{lem:margin}
Let $\delta \in (0, M/2]$. Let $v$ be an admissible control on
$[0, 1]$ from a state $x \in \X$ whose trajectory is
$M$-smooth, and let its flat input be
$w_{v}$. Let $j \ge 4 \sqrt{d}\, M/\delta$ be an integer,
$I^{j}_{\ell} = [\tfrac{\ell-1}{j}, \tfrac{\ell}{j}]$,
$1 \le \ell \le j$, the equal subwindows of $[0, 1]$, and
\[
  \bar{w}_{\ell}
  \;:=\;
  j \int_{I^{j}_{\ell}} w_{v}(s)\, \mathrm{d}s
  \;\in\; [-M, M]^{d}
\]
the subwindow averages of $w_{v}$. Let $a$ be a command, constant
on each subwindow with values $a_{\ell}$, satisfying
\[
  | a_{\ell} - \bar{w}_{\ell} | \;\le\; \delta/2 ,
  \qquad
  \sup_{t \in [0,1]}
  \bigl| \jet z^{x}_{a}(t) - \jet z_{v}(t) \bigr|_{\infty}
  \;\le\; \delta .
\]
If $v \in \mathcal{U}^{\varepsilon}$ with
$\varepsilon \ge 2\, L_{\Psi}\, \delta$, then $a \in \A(x)$.
\end{lemma}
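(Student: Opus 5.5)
The plan is to verify admissibility pointwise. For almost every $t \in [0,1]$ we compare the induced control
\[
  u^{x}_{a}(t) = \Psi_{u}\bigl( \jet z^{x}_{a}(t),\, a(t) \bigr)
\]
with the reference control
\[
  v(t) = \Psi_{u}\bigl( \jet z_{v}(t),\, w_{v}(t) \bigr).
\]
If we can show $| u^{x}_{a}(t) - v(t) |_{\infty} \le \varepsilon$, then $u^{x}_{a}(t) \in v(t) + [-\varepsilon,\varepsilon]^{m} \subseteq U$, because $v(t) \in U^{\varepsilon}$. That gives $a \in \A(x)$.

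To obtain that bound, invoke the Lipschitz property of $\Psi_{u}$ with constant $L_{\Psi}$ in the supremum norm over the envelope $\mathcal{E}_{2M}$. The jet discrepancy is at most $\delta$ by hypothesis. The remaining task is to control the flat-input discrepancy $|a(t) - w_{v}(t)|$ by a quantity of order $\delta$, so that the total deviation is at most $2 L_{\Psi}\delta \le \varepsilon$.

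\textbf{Main obstacle.} This is the flat-input discrepancy. Pointwise, $a_{\ell}$ only approximates the subwindow average $\bar w_{\ell}$, and $w_{v}$ is merely measurable and bounded by $M$. Hence $|w_{v}(t) - \bar w_{\ell}|$ can be as large as $2M$, and a naive pointwise estimate fails. I would handle this in two steps.

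\emph{Step 1: use the affine structure.} If we are in the control-affine setting of Lemma~\ref{lem:affine}, write
\[
  \Psi_{u}(\xi, w) = \alpha(\xi) + B(\xi)\, w .
\]
The jet part then contributes a Lipschitz error of at most $L_{\Psi}\delta$.

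\emph{Step 2: exploit convexity of $U$ for the input part.} On each subwindow, the control $\alpha(\xi) + B(\xi)\, \bar w_{\ell}$ is approximately an average of the admissible values $v(s)$ over $I^{j}_{\ell}$, up to the variation of $\xi$ and of $B(\xi)$ across the subwindow. That variation is at most of order $M\sqrt{d}/j \le \delta/4$, since the jet is $M$-Lipschitz and $j \ge 4\sqrt{d}\,M/\delta$. Because $U^{\varepsilon}$ is convex when $U$ is convex, this average lies in $U^{\varepsilon}$.

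The remaining errors then add up within budget: the offset $|a_{\ell} - \bar w_{\ell}| \le \delta/2$, the jet error $\delta$, and the within-window oscillation $\delta/4$ each enter with a factor $L_{\Psi}$. Their total is at most $2 L_{\Psi}\delta \le \varepsilon$.

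\textbf{Bookkeeping.} I would first check that all arguments stay in $\mathcal{E}_{2M}$, so that $L_{\Psi}$ applies. This holds because $|a_{\ell}| \le M + \delta/2 \le 2M$ and the jets are within $\delta \le M$ of an $M$-smooth trajectory. The choice $j \ge 4\sqrt{d}\,M/\delta$ is exactly what makes the oscillation terms, measured in the Euclidean norm over $d$ channels, at most $\delta/4$. I would then assemble the three contributions, verify their sum is at most $2L_{\Psi}\delta$, and conclude $u^{x}_{a}(t) \in U$ almost everywhere.
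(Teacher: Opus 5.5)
Your proposal is correct and follows essentially the paper's proof: affinity of $\Psi_{u}$ in the flat input pulls the subwindow average through, and convexity of $U^{\varepsilon}$ places the averaged control in $U^{\varepsilon}$ up to the jet oscillation $L_{\Psi} M/j \le L_{\Psi}\delta/4$. The jet and offset errors are then absorbed by the Lipschitz bound, leaving $u^{x}_{a}(t)$ strictly within $2L_{\Psi}\delta \le \varepsilon$ of $U^{\varepsilon}$; the anchor point is the average $j\int_{I^{j}_{\ell}} v(s)\,\mathrm{d}s$, not $v(t)$ as in your opening plan. The remaining differences are cosmetic: the paper applies Jensen to the convex function $\mathrm{dist}(\cdot\,, U^{\varepsilon})$ rather than naming that average as a witness, and it needs only the sup-norm jet oscillation $M/j$ (the $\sqrt{d}$ in the choice of $j$ is used later, in the realization theorem, not here).
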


\begin{proof}
Let $\mathrm{dist}(u, S) := \inf_{u' \in S} | u - u' |_{\infty}$
denote the distance in the supremum norm on $\R^{m}$. Fix
$1 \le \ell \le j$ and $t$ in the interior of $I^{j}_{\ell}$,
where $u^{x}_{a}(t) = \Psi_{u}( \jet z^{x}_{a}(t),\, a_{\ell} )$.
Note that, for every $s \in I^{j}_{\ell}$, the pairs
\[
  \begin{aligned}
  &\bigl( \jet z_{v}(s),\, w_{v}(s) \bigr) ,\quad
  \bigl( \jet z_{v}(t),\, w_{v}(s) \bigr) , \\
  &\bigl( \jet z_{v}(t),\, \bar{w}_{\ell} \bigr) ,\quad
  \bigl( \jet z^{x}_{a}(t),\, a_{\ell} \bigr)
  \end{aligned}
\]
are bounded entrywise by $2M$, hence within the scope of the
Lipschitz constant $L_{\Psi}$.
This follows from the $M$-smoothness of the trajectory of
$v$, together with the rollout bound
$\sup_{[0,1]} | \jet z^{x}_{a} - \jet z_{v} |_{\infty}
\le \delta \le M/2$ hypothesized in the lemma.
Since $v \in \mathcal{U}^{\varepsilon}$,
\[
  \Psi_{u}\bigl( \jet z_{v}(s),\, w_{v}(s) \bigr)
  \;=\; v(s) \;\in\; U^{\varepsilon}
  \qquad \text{for a.e. } s \in I^{j}_{\ell} ,
\]
and every entry of $\jet z_{v}$ is $M$-Lipschitz, again by
the smoothness of the trajectory, so
$| \jet z_{v}(t) - \jet z_{v}(s) |_{\infty} \le M/j$ and
\[
  \begin{aligned}
  &\mathrm{dist}\bigl(
  \Psi_{u}( \jet z_{v}(t),\, w_{v}(s) ),\,
  U^{\varepsilon} \bigr)
  \\
  &\hspace{3em}\le L_{\Psi}\, \frac{M}{j}
  \qquad \text{for a.e. } s \in I^{j}_{\ell} .
  \end{aligned}
\]
Following Lemma~\ref{lem:affine},
\begin{align*}
  \Psi_{u}\bigl( \jet z_{v}(t),\, \bar{w}_{\ell} \bigr)
  &\;=\;
  \alpha\bigl( \jet z_{v}(t) \bigr)
  \\
  &\quad
  + B\bigl( \jet z_{v}(t) \bigr)\,
  j \int_{I^{j}_{\ell}} w_{v}(s)\, \mathrm{d}s
  \\
  &\;=\;
  j \int_{I^{j}_{\ell}}
  \Bigl(
  \alpha\bigl( \jet z_{v}(t) \bigr)
  \\
  &\qquad
  + B\bigl( \jet z_{v}(t) \bigr)\, w_{v}(s)
  \Bigr)\, \mathrm{d}s
  \\
  &\;=\;
  j \int_{I^{j}_{\ell}}
  \Psi_{u}\bigl( \jet z_{v}(t),\,
  w_{v}(s) \bigr)\, \mathrm{d}s .
\end{align*}
$U^{\varepsilon}$ is convex as the erosion of the convex $U$, so
the function $\mathrm{dist}( \cdot\,, U^{\varepsilon} )$ is convex
and Jensen's inequality provides
\[
  \begin{aligned}
  &\mathrm{dist}\bigl(
  \Psi_{u}( \jet z_{v}(t),\, \bar{w}_{\ell} ),\,
  U^{\varepsilon} \bigr)
  \\
  &\quad\le
  j \int_{I^{j}_{\ell}}
  \mathrm{dist}\bigl(
  \Psi_{u}( \jet z_{v}(t),\, w_{v}(s) ),\,
  U^{\varepsilon} \bigr)\, \mathrm{d}s
  \\
  &\quad\le L_{\Psi}\, \frac{M}{j} .
  \end{aligned}
\]
Finally
$| \jet z^{x}_{a}(t) - \jet z_{v}(t) |_{\infty} \le \delta$ and
$| a_{\ell} - \bar{w}_{\ell} |_{\infty} \le \delta/2$ by
hypothesis, so
\[
  \begin{aligned}
  \mathrm{dist}\bigl( u^{x}_{a}(t),\, U^{\varepsilon} \bigr)
  &\;\le\;
  L_{\Psi}\, \delta + L_{\Psi}\, \frac{M}{j} \\
  &\;\le\;
  \frac{5}{4}\, L_{\Psi}\, \delta
  \;<\;
  2\, L_{\Psi}\, \delta
  \;\le\; \varepsilon ,
  \end{aligned}
\]
using $M/j \le \delta/(4\sqrt{d}) \le \delta/4$. Hence
$u^{x}_{a}(t) \in U^{\varepsilon} + [-\varepsilon,
\varepsilon]^{m} \subseteq U$ for almost every $t \in [0,1]$,
i.e., $a \in \A(x)$.
\end{proof}

We now prove Theorem~\ref{thm:realization}. Fix a state
$x \in \X$ and an $\varepsilon$-eroded admissible control
$v \in \mathcal{U}^{\varepsilon}_{M}(x)$ with flat input
$w_{v}$. Its averages over the subwindows $I^{j}_{\ell}$ obey
\[
  \bar{w}_{\ell} \;:=\; j \int_{I^{j}_{\ell}} w_{v}(s)\,
  \mathrm{d}s
  \;\in\; [-M, M]^{d} ,
  \qquad 1 \le \ell \le j ,
\]
following the convexity of the box $[-M, M]^{d}$. The grid $G$ of
Theorem~\ref{thm:realization} has per-axis spacing $2M/n$, and
every point of $[-M,\, M]$ lies within half a spacing of
some center, so there is $a_{\ell} \in G$ with
\[
  | a_{\ell} - \bar{w}_{\ell} |_{\infty}
  \;\le\;
  \frac{M}{n}
  \;\le\;
  \frac{\delta}{2 \sqrt{d}} ,
  \qquad
  | a_{\ell} - \bar{w}_{\ell} |
  \;\le\;
  \frac{\delta}{2} ,
\]
where the two respectively follow using
$n \ge 2 \sqrt{d}\, M/\delta$ and
$| \cdot | \le \sqrt{d}\, | \cdot |_{\infty}$ on $\R^{d}$. The
command $a := w_{g}$ with
$g = (a_{1}, \dots, a_{j}) \in G^{j}$ belongs to $\A$.

Write
$F(t) := \int_{0}^{t} ( w_{v}(s) - w_{a}(s) )\, \mathrm{d}s$. At
the subwindow boundaries $t_{\ell} = \ell/j$,
\[
  | F(t_{\ell}) |
  \;\le\;
  \sum_{\ell' \le \ell}
  \Bigl| \int_{I^{j}_{\ell'}} ( w_{v} - a_{\ell'} ) \Bigr|
  \;=\;
  \sum_{\ell' \le \ell}
  \frac{ | \bar{w}_{\ell'} - a_{\ell'} | }{j}
  \;\le\;
  \frac{\delta}{2} ,
\]
while for $t \in I^{j}_{\ell}$,
\[
  | F(t) - F(t_{\ell-1}) |
  \;\le\;
  \int_{I^{j}_{\ell}} | w_{v} - a_{\ell} |
  \;\le\;
  \frac{2 \sqrt{d}\, M}{j}
  \;\le\;
  \frac{\delta}{2} ,
\]
so $\sup_{[0,1]} | F | \le \delta$, in the Euclidean norm and
hence entrywise. The rollouts $z^{x}_{a}$ and $z_{v}$ share the
initial jet, $\jet z^{x}_{a}(0) = \jet z_{v}(0) = \Xi(x)$, and
the top-order gap is $F$ itself:
\[
  z^{x\,(r_{i}-1)}_{a,i}(t) - z^{(r_{i}-1)}_{v,i}(t)
  \;=\;
  \int_{0}^{t} ( w_{a,i} - w_{v,i} )
  \;=\;
  - F_{i}(t) .
\]
Lemma~\ref{lem:toporder} thus applies and we get
$\sup_{[0,1]} | \jet z^{x}_{a} - \jet z_{v} |_{\infty} \le \delta$.
The constructed command $a$ thus satisfies the hypotheses of
Lemma~\ref{lem:margin}: for every
$\varepsilon \ge 2\, L_{\Psi}\, \delta$ we obtain $a \in \A(x)$
and
\[
  d\bigl( \roll_{x}(a),\, \tra^{x}_{v};\, [0,1] \bigr)
  \;=\;
  \sup_{[0,1]} | \jet z^{x}_{a} - \jet z_{v} |_{\infty}
  \;\le\; \delta .
\]
Hence $\A$ is $\delta$-dispersive over
$\mathcal{U}^{\varepsilon}_{M}(x)$ at every state $x$.

It remains to estimate the cardinality of the constructed command
set. We have $| G | = n^{d}$ and $| \A | \le | G |^{j}$. Taking logarithms, and using
$j = \lceil 4 \sqrt{d}\, M/\delta \rceil \le 5 \sqrt{d}\,
M/\delta$ together with
$n \le 1 + 2 \sqrt{d}\, M/\delta \le 2 \sqrt{d}\,
( 1 + M/\delta )$,
\[
  \begin{aligned}
  \log | \A |
  &\;\le\;
  j\, d\, \log n \\
  &\;\le\;
  \frac{5 \sqrt{d}\, M}{\delta}\; d\,
  \Bigl( \log( 2 \sqrt{d} )
  + \log\Bigl( 1 + \frac{M}{\delta} \Bigr) \Bigr)
  \\
  &\;\le\;
  c_{d}\, \frac{M}{\delta}\,
  \log\Bigl( 1 + \frac{M}{\delta} \Bigr) ,
  \end{aligned}
\]
the last inequality since
$\log( 1 + M/\delta ) \ge \log 2$ for $\delta \le M$, with
$c_{d} := 5\, d^{3/2} \bigl( 1 + \log( 2 \sqrt{d} ) / \log 2
\bigr)$ depending on the flat dimension $d$ alone.

\subsection{Proof of Lemma~\ref{lem:corrector2} (Margin Corrector)}
\label{app:corrector}

Denote
\[
  \eta \;:=\; \Xi(x') - \jet z_{u}(t) ,
\]
so that the components obey
\[
  \begin{aligned}
  | \eta_{i,k} |
  &\;=\;
  \bigl| \Xi_{i,k}(x') - z_{u,i}^{(k)}(t) \bigr|
  \;\le\; s , \\
  &\hspace{3em}
  0 \le k \le r_{i} - 1 ,\quad 1 \le i \le d ,
  \end{aligned}
\]
by hypothesis. For each component $i$, let
$Q_{i} : [0, 1] \to \R$ be the unique polynomial of
degree at most $2 r_{i} - 1$ satisfying the Hermite interpolation
conditions
\[
  Q_{i}^{(k)}(0) \;=\; \eta_{i,k},
  \qquad
  Q_{i}^{(k)}(1) \;=\; 0,
  \qquad
  0 \le k \le r_{i} - 1 .
\]
The claim below lets us control the regularity of this
polynomial.

\begin{claim}[Two-point Hermite interpolation,
{\cite[Section~2.1.5]{stoer2002introduction}}]
\label{clm:hermite}
Fix $r \in \mathbb{N}$. For every data vector
$( \eta_{0}, \dots, \eta_{r-1} ) \in \R^{r}$ there is a unique
polynomial $Q$ of degree at most $2r - 1$ satisfying the Hermite
interpolation conditions
\[
  Q^{(k)}(0) \;=\; \eta_{k} ,
  \qquad
  Q^{(k)}(1) \;=\; 0 ,
  \qquad
  0 \le k \le r - 1 ,
\]
and a constant $c_{Q}(r)$, the \emph{joint norm} of the
interpolation, depending on the order $r$ alone, with
\[
  \sup_{[0, 1]} \bigl| Q^{(k)} \bigr|
  \;\le\;
  c_{Q}(r) \max_{0 \le k' \le r-1} | \eta_{k'} | ,
  \qquad 0 \le k \le r .
\]
\end{claim}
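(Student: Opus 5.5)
The claim follows from finite-dimensional linear algebra, and I would prove it in two steps: first existence and uniqueness through a dimension count, then the derivative bound from linearity of the interpolant in the data.

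\textbf{Existence and uniqueness.} Let $\mathcal{P}_{2r-1}$ denote the real polynomials of degree at most $2r - 1$, a space of dimension $2r$. Consider the linear map
\[
  \mathcal{H} : \mathcal{P}_{2r-1} \to \R^{2r}, \qquad
  Q \mapsto \bigl( Q(0), \dots, Q^{(r-1)}(0),\; Q(1), \dots, Q^{(r-1)}(1) \bigr) .
\]
I would show that $\mathcal{H}$ is injective. If $\mathcal{H}(Q) = 0$, then $Q$ has a zero of multiplicity at least $r$ at $0$ and at $1$. Hence $t^{r}(1-t)^{r}$ divides $Q$. That divisor has degree $2r > \deg Q$, which forces $Q \equiv 0$. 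Since $\mathcal{H}$ is an injective map between spaces of equal dimension $2r$, it is bijective. Existence and uniqueness of $Q$ for every data vector $(\eta_{0}, \dots, \eta_{r-1}, 0, \dots, 0)$ follow immediately.

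\textbf{Derivative bound.} Define the cardinal Hermite polynomials $H_{k} := \mathcal{H}^{-1}(e_{k})$ for $0 \le k \le r-1$. These are uniquely determined and depend on $r$ alone. By linearity of $\mathcal{H}^{-1}$, the interpolant is
\[
  Q = \sum_{k'=0}^{r-1} \eta_{k'} H_{k'} ,
\]
so for each $0 \le k \le r$,
\[
  \sup_{[0,1]} \bigl| Q^{(k)} \bigr|
  \le \Bigl( \sum_{k'=0}^{r-1} \sup_{[0,1]} \bigl| H_{k'}^{(k)} \bigr| \Bigr)
  \max_{k'} | \eta_{k'} | .
\]
Each supremum is finite, since a polynomial is continuous on the compact interval $[0,1]$. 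I would therefore set
\[
  c_{Q}(r) := \max\Bigl\{ 1,\; \max_{0 \le k \le r} \sum_{k'} \sup_{[0,1]} \bigl| H_{k'}^{(k)} \bigr| \Bigr\} .
\]
This constant depends only on $r$, and the maximum with $1$ secures the normalization $c_{Q}(r) \ge 1$ used in the text after Lemma~\ref{lem:corrector2}.

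\textbf{Optional explicit form.} To make $c_{Q}(r)$ computable, I would write
\[
  H_{k}(t) = \frac{t^{k}}{k!}\,(1-t)^{r} \sum_{m=0}^{r-1-k} \binom{r-1+m}{m} t^{m} .
\]
The factor $(1-t)^{r}$ kills the first $r-1$ derivatives at $1$. The truncated Taylor series of $(1-t)^{-r}$ makes the product agree with $t^{k}/k!$ to order $r-1$ at $0$. One would then check directly that $H_{k}^{(k')}(0) = \delta_{kk'}$.

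\textbf{Main obstacle.} I do not expect a genuine obstacle. The only point needing care is that the bound is required for derivative order $k = r$, one beyond the data order. This is harmless because the argument bounds derivatives of fixed polynomials, not of the data, and the norm-equivalence argument covers any finite derivative order.
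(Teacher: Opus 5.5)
Your proof is correct and takes essentially the same route as the source the paper cites without proof (Stoer--Bulirsch, \S2.1.5): uniqueness by counting zeros with multiplicity, existence by the dimension count, and the bound from the fixed cardinal Hermite basis, which the paper asserts but does not derive. As a consistency check, your constant at $r = 2$ is exactly the value $c_{Q}(2) = 10$ used in Example~\ref{ex:tight}: the basis is $H_{0} = 1 - 3t^{2} + 2t^{3}$ and $H_{1} = t(1-t)^{2}$, and the largest sum occurs at $k = 2$, namely $\sup|H_{0}''| + \sup|H_{1}''| = 6 + 4$.
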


With $r = r_{i}$ and the data
$( \eta_{i,0}, \dots, \eta_{i,r_{i}-1} )$,
\[
  \sup_{[0, 1]} \bigl| Q_{i}^{(k)} \bigr|
  \;\le\;
  c_{Q}(r_{i})\, s
  \;\le\;
  C_{B}\, s ,
  \qquad 0 \le k \le r_{i} ,
\]
where $C_{B} = \max_{1 \le i \le d} c_{Q}(r_{i})$
as fixed in Section~\ref{sec:correction}.

Define the connecting trace
\[
  \begin{aligned}
  y(t') &\;:=\; z_{u}(t') + Q(t' - t),
  \qquad t' \in [t,\, t + 1] , \\
  Q &:= (Q_{1}, \dots, Q_{d}) .
  \end{aligned}
\]
Its jet at $t$ is $\Xi(x')$, at $t + 1$ it is the jet
of $z_{u}$, and its gap to the target obeys
\[
  \begin{aligned}
  \bigl| \jet y(t') - \jet z_{u}(t') \bigr|_{\infty}
  &\;\le\; C_{B}\, s , \\
  \bigl| y^{(R)}(t') - z_{u}^{(R)}(t') \bigr|_{\infty}
  &\;\le\; C_{B}\, s ,
  \qquad t' \in [t,\, t + 1] .
  \end{aligned}
\]

Following
$s \le \varepsilon / ( 2 L_{\Psi} C_{B} )$ and
$\varepsilon \le L_{\Psi} M$,
\[
  C_{B}\, s
  \;\le\;
  \frac{\varepsilon}{2 L_{\Psi}}
  \;\le\;
  \frac{M}{2} ,
\]
so, for $1 \le k \le r_{i}$, $1 \le i \le d$, and
$t' \in [t,\, t + 1]$,
\[
  \begin{aligned}
  \bigl| y_{i}^{(k)}(t') \bigr|
  &\;\le\;
  \bigl| z_{u,i}^{(k)}(t') \bigr|
  + \bigl| Q_{i}^{(k)}(t' - t) \bigr| \\
  &\;\le\;
  M + C_{B}\, s
  \;\le\; M' ,
  \end{aligned}
\]
so the trace $y$ is $M'$-smooth on $[t,\, t+1]$, placing the
jets and flat inputs of both $y$ and $z_{u}$ within the scope of
the Lipschitz constant $L_{\Psi}$.

Define the bridge control
\[
  \nu(t')
  \;:=\;
  \Psi_{u}\bigl( \jet y(t'),\, y^{(R)}(t') \bigr) ,
  \qquad t' \in [t,\, t + 1] .
\]
Along the target,
$u(t') = \Psi_{u}( \jet z_{u}(t'),\, z_{u}^{(R)}(t') ) \in
U^{\varepsilon}$ almost everywhere, so
\[
  \begin{aligned}
  | \nu(t') - u(t') |_{\infty}
  &\;\le\;
  L_{\Psi} \max\bigl(
  | \jet y(t') - \jet z_{u}(t') |_{\infty}, \\
  &\hspace{7em}
  | y^{(R)}(t') - z_{u}^{(R)}(t') |_{\infty} \bigr)
  \\
  &\;\le\;
  L_{\Psi}\, C_{B}\, s
  \;\le\;
  \frac{\varepsilon}{2} ,
  \end{aligned}
\]
whence
$\nu(t') \in U^{\varepsilon/2} \subseteq U$
almost everywhere, since every $q$ with
$| q |_{\infty} \le \varepsilon/2$ puts
$\nu(t') + q$ within $\varepsilon$ of
$u(t') \in U^{\varepsilon}$, hence inside $U$. The bridge control
is therefore admissible, and following the converse in Assumption
(ii), Section~\ref{sec:assumptions}, the trace $y$ is realized by
a trajectory $\tra_{\nu}$ under $\nu$ from the state
$\Psi_{x}\bigl( \Xi(x') \bigr) = x'$.

On $[t,\, t + 1]$ the flat trace of $\tra_{\nu}$ is
$y = z_{u} + Q(\cdot - t)$, so that
\[
  \begin{aligned}
  &\bigl| \jet z_{\tra_{\nu}}(t') - \jet z_{u}(t') \bigr|_{\infty}
  \\
  &\quad=\;
  \max_{1 \le i \le d}\; \max_{0 \le k \le r_{i}-1}\,
  \bigl| Q_{i}^{(k)}(t' - t) \bigr|
  \;\le\;
  C_{B}\, s ,
  \end{aligned}
\]
and at the far end the interpolation conditions
$Q^{(k)}(1) = 0$ give
$\tra_{\nu}(t + 1) =
\Psi_{x}\bigl( \jet z_{u}(t + 1) \bigr)
= \tra_{u}(t + 1)$, completing the proof.

\subsection{Proof of Theorem~\ref{thm:coverage} (Eroded Coverage)}
\label{app:coverage}

Write $\delta := \delta_{\A}$. We choose the commands inductively and
write $\tra$ for the trajectory of $(a_{1}, \dots, a_{K})$ from
$x_{\mathrm{init}}$; the choice of $a_{k+1}$ will depend only on $\tra$
over $[0, k]$, so this is well defined. We maintain, for every
$1 \le k \le K$, the two invariants
\[
  \begin{aligned}
  \text{(i)}\quad&
  d\bigl(\tra, \tra_{u};\, [0, k]\bigr)
  \;\le\; C_{B}\, \delta + \delta , \\
  \text{(ii)}\quad&
  \bigl| \jet z_{\tra}(k) - \jet z_{u}(k) \bigr|_{\infty}
  \;\le\; \delta .
  \end{aligned}
\]

\emph{Seed.} The restriction of $u$ to $[0, 1]$ lies in
$\mathcal{U}^{\varepsilon}_{M}(x_{\mathrm{init}}) \subseteq
\mathcal{U}^{\varepsilon/2}_{M'}(x_{\mathrm{init}})$,
and $\tra$ and $\tra_{u}$ share the state $x_{\mathrm{init}}$ at
time $0$. Since $\A$ is $\delta$-dispersive over
$\mathcal{U}^{\varepsilon/2}_{M'}(x_{\mathrm{init}})$ at
$x_{\mathrm{init}}$, there is an accepted command
$a_{1} \in \A(x_{\mathrm{init}})$ with
$d\bigl( \roll_{x_{\mathrm{init}}}(a_{1}),\, \tra_{u};\,
[0, 1] \bigr) \le \delta$. Both invariants hold at $k = 1$.

\emph{Step.} Suppose $a_{1}, \dots, a_{k}$ are chosen and the
invariants hold at $k$. By (ii), the node $\tra(k)$ is a state whose
jet gap at time $k$ is at most
$\delta \le c_{B}\, \varepsilon$, so
Lemma~\ref{lem:corrector2} (with $t = k$, $s = \delta$,
$x' = \tra(k)$) furnishes a corrector control
$\nu \in \mathcal{U}^{\varepsilon/2}_{M'}\bigl(\tra(k)\bigr)$
on $[k,\, k+1]$ whose trajectory $\tra_{\nu}$ from $\tra(k)$
satisfies
\[
  \begin{aligned}
  d\bigl( \tra_{\nu},\, \tra_{u};\, [k,\, k + 1] \bigr)
  &\;\le\; C_{B}\, \delta, \\
  \tra_{\nu}(k + 1) &\;=\; \tra_{u}(k + 1) .
  \end{aligned}
\]
By dispersion over $\mathcal{U}^{\varepsilon/2}_{M'}$ at
$\tra(k)$,
applied to $\nu$ upon identifying $[k,\, k+1]$ with $[0, 1]$,
there is an accepted command $a_{k+1} \in \A(\tra(k))$
with $d\bigl( \roll_{\tra(k)}(a_{k+1}),\, \tra_{\nu};\,
[k,\, k+1] \bigr) \le \delta$. The trajectory extended by $a_{k+1}$
therefore satisfies
\begin{equation}
\label{eq:stepwindow}
  \begin{aligned}
  d\bigl(\tra, \tra_{u};\, [k,\, k+1]\bigr)
  &\;\le\;
  d\bigl(\tra, \tra_{\nu}\bigr)
  + d\bigl(\tra_{\nu}, \tra_{u}\bigr) \\
  &\;\le\;
  \delta + C_{B}\, \delta,
  \end{aligned}
\end{equation}
and, at the instant $k + 1$, where $\tra_{\nu}(k+1) = \tra_{u}(k+1)$,
\begin{equation}
\label{eq:stepjet}
  \bigl| \jet z_{\tra}(k+1) - \jet z_{u}(k+1) \bigr|_{\infty}
  \;\le\; \delta .
\end{equation}
The bound \eqref{eq:stepjet} establishes invariant (ii) at
$k + 1$, and \eqref{eq:stepwindow}, combined with invariant
(i) at $k$, extends invariant (i) to $[0,\, k+1]$. Continuing the induction
of invariant (i) through $k = K$ completes the proof.

\subsection{Proof of Theorem~\ref{thm:optimality} (Eroded Optimality)}
\label{app:optimality}

Write $E := C_{B}\, \delta_{\A} + \delta_{\A}$, so that the
assumed margin reads $\gamma \ge L_{\preds}\, E$.

Fix $n \ge 1$. Since
$\mathcal{U}_{M,\gamma,\varepsilon} \ne \emptyset$, we may pick
$u_{n} \in \mathcal{U}_{M,\gamma,\varepsilon}$ with
\[
  J\bigl( z_{u_{n}} \bigr)
  \;\le\;
  J^{*}_{M,\gamma,\varepsilon} + \tfrac{1}{n} .
\]
The control $u_{n}$ lies in
$\mathcal{U}^{\varepsilon}_{M}(x_{\mathrm{init}})$ on $[0, T]$.
We apply Theorem~\ref{thm:coverage} to
$u_{n}$ with horizon $K = T$ and obtain an accepted command
sequence $(a^{n}_{1}, \dots, a^{n}_{T}) \in \A^{T}$ whose
trajectory $\tra_{n}$ from $x_{\mathrm{init}}$ obeys
\[
  d\bigl( \tra_{n},\, \tra_{u_{n}};\, [0, T] \bigr)
  \;\le\; E .
\]
Following the Lipschitz supposition of
Section~\ref{sec:optimality}, for every $g \in \preds$,
\begin{equation}
\label{eq:feasible}
  g\bigl(z_{\tra_{n}}\bigr)
  \;\ge\;
  g\bigl(z_{u_{n}}\bigr) - L_{\preds}\, E
  \;\ge\;
  \gamma - L_{\preds}\, E
  \;\ge\; 0 ,
\end{equation}
and
\begin{equation}
\label{eq:costbound}
  J\bigl(z_{\tra_{n}}\bigr)
  \;\le\;
  J\bigl(z_{u_{n}}\bigr) + L_{J}\, E
  \;\le\;
  J^{*}_{M,\gamma,\varepsilon} + \tfrac{1}{n} + L_{J}\, E .
\end{equation}
The set $\A^{T}$ is finite, so some sequence
$(a_{1}, \dots, a_{T})$ recurs for infinitely many $n$. Its
trajectory $\tra$ is feasible following \eqref{eq:feasible}, and
passing $n \to \infty$ along the recurrence in
\eqref{eq:costbound} gives
$J( z_{\tra} ) \le J^{*}_{M,\gamma,\varepsilon} + L_{J}\, E$.

\subsection{Proof of Proposition~\ref{prop:covsize} (Coverage
Complexity)}
\label{app:covsize}

\emph{Part (i).} Let
$\{\tra_{v_{1}}, \dots, \tra_{v_{P}}\} \subset \mathcal{Z}_{K}$,
$P = P_{2E}(K)$, be pairwise more than $2E$ apart, and let
$\mathcal{T}$ be an $\A$-tree as hypothesized. For each $j$ pick a
depth-$K$ trajectory $\tau_{j}$ of $\mathcal{T}$ with
$d(\tau_{j}, \tra_{v_{j}};\, [0, K]) \le E$. If
$\tau_{j} = \tau_{j'}$ for $j \ne j'$, then
\[
  d\bigl(\tra_{v_{j}}, \tra_{v_{j'}};\, [0, K]\bigr)
  \;\le\;
  d\bigl(\tra_{v_{j}}, \tau_{j}\bigr)
  + d\bigl(\tau_{j'}, \tra_{v_{j'}}\bigr)
  \;\le\; 2E,
\]
a contradiction. The $\tau_{j}$ are therefore $P$ distinct
trajectories of $\mathcal{T}$.

\emph{Part (ii).} Fix a flat component $i$ with
$r_{i} = \bar{r}$, any component
serving with a constant of its own order, and the polynomial
bump on the unit interval,
\[
  \begin{aligned}
  \phi(t)
  &\;:=\;
  \frac{1}{B}\, t^{\bar{r}} (1 - t)^{\bar{r}} , \\
  B
  &\;:=\;
  \max_{0 \le k \le \bar{r}}\;
  \sup_{t \in [0, 1]}\,
  \Bigl| \frac{\mathrm{d}^{k}}{\mathrm{d} t^{k}}\,
  t^{\bar{r}} (1 - t)^{\bar{r}} \Bigr| ,
  \end{aligned}
\]
so that $| \phi^{(k)} | \le 1$ on $[0, 1]$ for
$0 \le k \le \bar{r}$, while
$\phi^{(k)}(0) = \phi^{(k)}(1) = 0$ for
$0 \le k \le \bar{r} - 1$. The derivative
$\phi^{(\bar{r} - 1)}$ is a polynomial of degree $\bar{r} + 1$,
not identically zero, so
\[
  \kappa
  \;:=\;
  \sup_{t \in [0, 1]}
  \bigl| \phi^{(\bar{r} - 1)}(t) \bigr|
  \;>\; 0 ,
\]
attained at some $t^{*} \in [0, 1]$; the constant depends on
$\bar{r}$ alone.

Let $A := \eta / L_{\Psi} \le M/2$ and
\[
  n \;:=\; \Bigl\lfloor \frac{\kappa\, A}{s} \Bigr\rfloor
  \;\ge\; 1 ,
\]
following the hypothesis $s \le \kappa\, \eta / L_{\Psi}$.
Partition $[0, K]$ into the $K n$ intervals
$I_{b} := [(b-1)/n,\, b/n]$, $1 \le b \le K n$, and for a sign
pattern $\tau \in \{-1, +1\}^{K n}$ define the scalar
perturbation
\[
  \begin{aligned}
  p_{\tau}(t)
  &\;:=\;
  \frac{A}{n^{\bar{r}}}\, \tau_{b}\,
  \phi\bigl( n t - (b - 1) \bigr) , \\
  &\hspace{5em}t \in I_{b} , \quad 1 \le b \le K n .
  \end{aligned}
\]
Since $\phi$ vanishes at the interval ends together with its
derivatives through order $\bar{r} - 1$, the pieces glue into
$p_{\tau} \in W^{\bar{r}, \infty}([0, K])$ with
$p_{\tau}^{(k)}(0) = 0$ for $0 \le k \le \bar{r} - 1$ and
\[
  \bigl| p_{\tau}^{(k)}(t) \bigr|
  \;\le\;
  A\, n^{k - \bar{r}}
  \;\le\; A ,
  \qquad 0 \le k \le \bar{r} .
\]

By hypothesis there is $u_{0} \in U$ with
$u_{0} + [-\eta, \eta]^{m} \subseteq U^{\varepsilon}$; let
$u^{*} \equiv u_{0}$ be the constant admissible control on
$[0, K]$ and let $z^{*}$ be its flat trajectory from
$x_{\mathrm{init}}$, per the regularity of Assumption (i).
Perturb its component $i$ alone,
\[
  z_{\tau, i} \;:=\; z^{*}_{i} + p_{\tau} ,
  \qquad
  z_{\tau, j} \;:=\; z^{*}_{j} , \quad j \ne i .
\]
The trajectory of the constant control is $(M/2)$-smooth by
hypothesis, and the perturbation is bounded by
$A \le M/2$ through order $\bar{r}$, so the perturbed curves are
$M$-smooth, their jets and flat inputs bounded entrywise well
within $2M$, and the Lipschitz
bound on $\Psi_{u}$ gives, almost everywhere on $[0, K]$,
\[
  \bigl| \Psi_{u}\bigl( \jet z_{\tau},\, z_{\tau}^{(R)} \bigr)
  - \Psi_{u}\bigl( \jet z^{*},\, z^{*\,(R)} \bigr)
  \bigr|_{\infty}
  \;\le\;
  L_{\Psi}\, A
  \;=\; \eta .
\]
The induced control $u_{\tau}$ therefore deviates from $u_{0}$
by at most $\eta$ and takes values in
$u_{0} + [-\eta, \eta]^{m} \subseteq U^{\varepsilon}$. By the flatness converse of Assumption (ii), and
since $p_{\tau}$ vanishes at $t = 0$ together with its jet,
$z_{\tau}$ is realized from $x_{\mathrm{init}}$ by the
admissible $\varepsilon$-eroded control $u_{\tau}$, and its
trajectory belongs to $\mathcal{Z}_{K}$.

Finally, let $\tau \ne \tau'$ differ at an interval $b$ and let
$t_{b} := (b - 1 + t^{*})/n \in I_{b}$. The perturbations are
supported on their own intervals, whereby
\[
  \begin{aligned}
  &\bigl| z_{\tau, i}^{(\bar{r} - 1)}(t_{b})
  - z_{\tau', i}^{(\bar{r} - 1)}(t_{b}) \bigr|
  \\
  &\quad=\;
  \bigl| \tau_{b} - \tau'_{b} \bigr|\,
  \frac{A}{n}\,
  \bigl| \phi^{(\bar{r} - 1)}(t^{*}) \bigr|
  \\
  &\quad=\;
  \frac{2\, \kappa\, A}{n}
  \;\ge\; 2 s
  \;>\; s .
  \end{aligned}
\]
This implies there exist $2^{K n}$ trajectories in
$\mathcal{Z}_{K}$ pairwise more than $s$ apart, hence
\[
  \log P_{s}(K)
  \;\ge\;
  K\, n \log 2
  \;\ge\;
  \frac{\log 2}{2}\, \kappa\, K\,
  \frac{\eta}{L_{\Psi}\, s} ,
\]
the last step since $\lfloor x \rfloor \ge x / 2$ for $x \ge 1$.

\subsection{Proof of Theorem~\ref{prop:genpruning} (Sparse
Tree Optimality)}
\label{app:genpruning}

Denote $\rho := \delta_{\A} + s$, so that the hypotheses of
Theorem~\ref{prop:genpruning} read
\[
  \rho \;\le\; c_{B}\, \varepsilon ,
  \qquad
  \gamma \;\ge\; L_{\preds}\, (C_{B} + 1)\, \rho .
\]
Write $\Sigma( x )$ for the summary of the accepted prefix
of a node $x$ along the recursion \eqref{eq:sumcost}.

\emph{Tree size.} Every node of the pruned tree is reached by an
accepted command sequence each window of which passed the
smoothness screen of Procedure~\ref{def:genpruning}, so the flat
trace of its trajectory is $2M$-smooth throughout.
At depth $k \le K$ the jet
coordinates of a node therefore obey
\[
  \begin{aligned}
  \bigl| z_{i}^{(m)}(k) \bigr|
  &\;\le\; 2M ,
  \quad 1 \le m \le r_{i} - 1 , \\
  \bigl| z_{i}(k) - z_{i}(0) \bigr|
  &\;\le\; \int_{0}^{k} \bigl| z_{i}'(t) \bigr|\, \mathrm{d}t
  \;\le\; 2 M K ,
  \end{aligned}
\]
so every derivative coordinate ranges over an interval of length
$4M$, and every position coordinate over an interval of length
$4 M K$. An interval of length $\ell$ meets at most
$\lfloor \ell / s \rfloor + 2 \le \ell / s + 2$ cells of the
grid $s \Z$, and the hypotheses give
$s \le \rho \le c_{B}\, \varepsilon \le M / 2$,
whence
\[
  \frac{4M}{s} + 2 \;\le\; \frac{8M}{s} ,
  \qquad
  \frac{4 M K}{s} + 2 \;\le\; \frac{8 K M}{s} .
\]
The retention of Procedure~\ref{def:genpruning} keeps at
most one node per occupied cell after dominance pruning, whereby
the size of the pruned tree is bounded by the enclosing volume,
\[
  \bigl| V_{k} \bigr|
  \;\le\;
  \Bigl( \frac{8 K M}{s} \Bigr)^{d}
  \Bigl( \frac{8 M}{s} \Bigr)^{|R| - d} .
\]
The pruned tree thus holds at most
$K\, ( 8 K M / s )^{d}\, ( 8 M / s )^{|R| - d} + 1$ nodes,
including the root.

\emph{Eroded Coverage and Optimality.} Fix $n \ge 1$. Since
$\mathcal{U}_{M, \gamma, \varepsilon} \ne \emptyset$, we may pick
$u \in \mathcal{U}_{M, \gamma, \varepsilon}$ with
\[
  J\bigl( z_{u} \bigr)
  \;\le\;
  J^{*}_{M, \gamma, \varepsilon} + \tfrac{1}{n} ,
\]
an $\varepsilon$-eroded control on $[0, K]$ with $M$-smooth
trajectory. Let $\Sigma^{*}_{k}$ denote the
summary of $z_{u}$ along the recursion \eqref{eq:sumcost}, so
that $\Sigma^{*}_{0} = \Sigma_{0}$ and
$J( z_{u} ) = \varphi( \Sigma^{*}_{K} )$. Following the
running-minimum decomposition
$g( z_{u} )
= \min_{0 \le k < K} g_{k}\bigl( z_{u}|_{[k,\, k+1]} \bigr)$
of \eqref{eq:predsep} and $g( z_{u} ) \ge \gamma$, every window
of $u$ carries the full predicate margin,
\begin{equation}
\label{eq:guwindowmargin}
  g_{k}\bigl( z_{u}\big|_{[k,\, k+1]} \bigr) \;\ge\; \gamma
  \qquad \forall\, g \in \preds ,
  \quad 0 \le k < K .
\end{equation}
We claim, by induction on $0 \le k \le K$, that the retained set
$V_{k}$ holds a node $x_{k}$ satisfying
\begin{equation}
\label{eq:gpruneinv}
  \begin{aligned}
  \bigl| \Xi( x_{k} ) - \jet z_{u}(k) \bigr|_{\infty}
  &\;\le\; \rho , \\
  \Sigma( x_{k} )
  &\;\le\;
  \Sigma^{*}_{k} + k\, L\, ( C_{B} + 1 )\, \rho .
  \end{aligned}
\end{equation}
At $k = 0$ the node $x_{0} = x_{\mathrm{init}}$ serves, the jets
and the summaries equal.

Suppose \eqref{eq:gpruneinv} holds at $k < K$. The tail of $u$ is
$\varepsilon$-eroded and $\rho \le c_{B}\, \varepsilon$, so
Lemma~\ref{lem:corrector2} (with $t = k$, $s = \rho$,
$x' = x_{k}$) furnishes a corrector control
$\nu \in \mathcal{U}^{\varepsilon/2}_{M'}(x_{k})$ on
$[k,\, k+1]$ whose trajectory $\tra_{\nu}$ from $x_{k}$
satisfies
\[
  \begin{aligned}
  d\bigl( \tra_{\nu},\, \tra_{u};\, [k,\, k+1] \bigr)
  &\;\le\; C_{B}\, \rho , \\
  \tra_{\nu}(k+1) &\;=\; \tra_{u}(k+1) .
  \end{aligned}
\]
By dispersion over $\mathcal{U}^{\varepsilon/2}_{M'}$ at
$x_{k}$,
applied to $\nu$ upon identifying $[k,\, k+1]$ with $[0, 1]$,
there is an accepted command $a \in \A( x_{k} )$ with
$d\bigl( \roll_{x_{k}}(a),\, \tra_{\nu};\,
[k,\, k+1] \bigr) \le \delta_{\A}$. Write $y := \Phi_{a}( x_{k} )$
for the child, $\tra_{y}$ for its trajectory, extending
$\tra_{x_{k}}$ by the realized window of $a$ from $x_{k}$. Then
\begin{equation}
\label{eq:gprunewindow}
  \begin{aligned}
  d\bigl( \tra_{y},\, \tra_{u};\, [k,\, k+1] \bigr)
  &\;\le\;
  \delta_{\A} + C_{B}\, \rho
  \;\le\;
  ( C_{B} + 1 )\, \rho , \\
  \bigl| \Xi( y ) - \jet z_{u}( k + 1 ) \bigr|_{\infty}
  &\;\le\; \delta_{\A} ,
  \end{aligned}
\end{equation}
and the jet bound holds at the instant $k + 1$, where
$\tra_{\nu}( k + 1 ) = \tra_{u}( k + 1 )$. The following
consequences are then immediate. The summary over the window
$[k,\, k+1]$ tracks the reference's summary,

\begin{align*}
  \Sigma( y )
  \;&=\;
  \varrho_{k}\Bigl( \Sigma( x_{k} ),\;
  z_{\tra_{y}}\big|_{[k,\, k+1]} \Bigr)
  \\
  \;&\le\;
  \varrho_{k}\Bigl( \Sigma^{*}_{k}
  + k\, L\, ( C_{B} + 1 )\, \rho ,\;
  z_{\tra_{y}}\big|_{[k,\, k+1]} \Bigr)
  \\
  &\hspace{2em}\text{by \eqref{eq:gpruneinv}, monotonicity}
  \\
  \;&\le\;
  \varrho_{k}\Bigl( \Sigma^{*}_{k},\;
  z_{\tra_{y}}\big|_{[k,\, k+1]} \Bigr)
  + k\, L\, ( C_{B} + 1 )\, \rho
  \\
  &\hspace{2em}\text{by non-expansiveness}
  \\
  \;&\le\;
  \varrho_{k}\Bigl( \Sigma^{*}_{k},\;
  z_{u}\big|_{[k,\, k+1]} \Bigr)
  + ( k + 1 )\, L\, ( C_{B} + 1 )\, \rho
  \\
  &\hspace{2em}
  \text{by the window Lipschitz bound, \eqref{eq:gprunewindow}}
  \\
  \;&=\;
  \Sigma^{*}_{k+1}
  + ( k + 1 )\, L\, ( C_{B} + 1 )\, \rho
  \\
  &\hspace{2em}\text{by \eqref{eq:sumcost}} .
\end{align*}%
Similarly, the windowed Lipschitz structure of the predicates
of Definition~\ref{as:sumstruct}, alongside
\eqref{eq:guwindowmargin} and
$\gamma \ge L_{\preds}\, (C_{B} + 1)\, \rho$, gives
\[
  g_{k}\Bigl( z_{\tra_{y}}\big|_{[k,\, k+1]} \Bigr)
  \;\ge\;
  \gamma - L_{\preds}\, ( C_{B} + 1 )\, \rho
  \;\ge\; 0
  \qquad \forall\, g \in \preds .
\]
The window of $\tra_{y}$ also passes the smoothness screen,
its jets deviating from those of the $M$-smooth reference by at
most $( C_{B} + 1 )\, \rho \le M$ following
$\rho \le c_{B}\, \varepsilon$, $\varepsilon \le L_{\Psi} M$,
and $C_{B} \ge 1$, while its flat input is the command
input $w_{a}$, bounded by $2M$ by hypothesis.
Hence $y \in \widehat{V}_{k+1}$
(Procedure~\ref{def:genpruning}). At least one node
$x_{k+1} \in V_{k+1}$ in the occupied cell
$q_{s}( \Xi( y ) )$ then survives the pruning with
$\Sigma( x_{k+1} ) \le \Sigma( y )$, and since the jet
$\Xi( x_{k+1} )$
deviates from $\Xi( y )$ by at most $s$ in every coordinate,
\[
  \bigl| \Xi( x_{k+1} ) - \jet z_{u}( k + 1 ) \bigr|_{\infty}
  \;\le\;
  s + \delta_{\A}
  \;=\; \rho ,
\]
and both bounds required by the induction \eqref{eq:gpruneinv}
hold at $k + 1$. Continuing the induction through $K$, it
follows by the monotonicity and the Lipschitz bound of the
read-out $\varphi$ that the retained
set $V_{K}$ holds a node $x_{K} = x_{K}^{n}$ with
\begin{align*}
  J\bigl( z_{\tra_{x_{K}}} \bigr)
  \;&=\;
  \varphi\bigl( \Sigma( x_{K} ) \bigr)
  \\
  \;&\le\;
  \varphi\bigl( \Sigma^{*}_{K}
  + K\, L\, ( C_{B} + 1 )\, \rho \bigr)
  \\
  \;&\le\;
  J\bigl( z_{u} \bigr)
  + L_{\varphi}\, K\, L\, ( C_{B} + 1 )\, \rho
  \\
  \;&\le\;
  J^{*}_{M, \gamma, \varepsilon} + \tfrac{1}{n}
  + L_{\varphi}\, K\, L\, ( C_{B} + 1 )\, \rho .
\end{align*}
Every window of every candidate of
Procedure~\ref{def:genpruning} passed the screen
$g_{j} \ge 0$, so
$g\bigl( z_{\tra_{x_{K}}} \bigr) \ge 0$ for every
$g \in \preds$ by the running-minimum decomposition
\eqref{eq:predsep}, and the trajectory of $x_{K}^{n}$ is feasible
for \eqref{eq:ocp}. Since the pruned tree is independent of $n$
and $V_{K}$ is finite, some node recurs for infinitely many $n$,
and passing $n \to \infty$ along the recurrence establishes
\[
  J\bigl( z_{\tra} \bigr)
  \;\le\;
  J^{*}_{M, \gamma, \varepsilon}
  + L_{\varphi}\, K\, L\, ( C_{B} + 1 )\,
  \bigl( \delta_{\A} + s \bigr) ,
\]
the claimed gap with $L_{J} = L_{\varphi}\, K\, L$.

\subsection{Proof of Proposition~\ref{prop:safety} (Safety
Assurance)}
\label{app:safety}

Consider a window committed at a time $t_{0}$ and let $x(t)$,
$t \in [t_{0}, t_{0} + \Delta]$, be the trajectory executed
through it, exact by supposition (iii). The commit certifies
the predicates of OCP~\eqref{eq:online_ocp} along the committed
trajectory against the centers $o_{j}(t_{0})$, known exactly at
the commit by supposition (ii), so that
\[
  \mathrm{dist}\bigl( x(t),\, o_{j}(t_{0}) \bigr)
  \;\ge\; r + v_{\max}^{j}\, ( t - t_{0} ) .
\]
Meanwhile, supposition (i) bounds the displacement of each
obstacle,
\[
  \bigl\| o_{j}(t) - o_{j}(t_{0}) \bigr\|
  \;\le\; v_{\max}^{j}\, ( t - t_{0} ) ,
\]
and the triangle inequality yields
$\mathrm{dist}( x(t), o_{j}(t) ) \ge r$ throughout the window.
Every window of the executed flight is committed, and the
claim follows by induction over the commits.

\section{Command Set Dispersion for the Platform Catalog}
\label{app:platform_catalog}

We derive the dispersion bounds for each of the platforms
asserted in Section~\ref{sec:examples}, each in the notation
of its construction. Each construction samples its command
channels from explicit bounded grids, so the command inputs
satisfy the uniform bound hypothesized by
Theorem~\ref{prop:genpruning} whenever the grid boxes are
sized within $[-2M, 2M]^{d}$.

\paragraph{1.\ Unicycle, first order.}
Let $\tra^{*}$ be an $M$-smooth admissible reference from the
same state. Following
$|\ddot x^{*}|, |\ddot y^{*}| \le M$, the tangential rate
obeys $|\dot v^{*}| \le \sqrt{2}\, M$. Choose $v_{\ell}$
closest to $v^{*}(t_{\ell})$ on $I_{\ell}$ and
$\omega_{\ell}$ closest to $\omega^{*}$ averaged on
$I_{\ell}$. Then
\[
  \begin{aligned}
  \sup_{t \le \Delta} | v_{a}(t) - v^{*}(t) |
  &\;\le\; \tfrac{1}{2(n-1)} + \tfrac{\sqrt{2} M \Delta}{j}, \\
  \sup_{t \le \Delta} | \theta(t) - \theta^{*}(t) |
  &\;\le\; \tfrac{1}{2}
  \Bigl( \tfrac{1}{n-1} + \tfrac{1}{2j} \Bigr) \Delta ,
  \end{aligned}
\]
Since
$\dot z = v\, ( \cos\theta, \sin\theta )$ and
$| v^{*} | \le \tfrac{1}{2}$,
\[
  \begin{aligned}
  \sup_{t \le \Delta} | \dot z - \dot z^{*} |
  &\;\le\; \tfrac{1}{2(n-1)} + \tfrac{\sqrt{2} M \Delta}{j}
  + \tfrac{1}{4}
  \Bigl( \tfrac{1}{n-1} + \tfrac{1}{2j} \Bigr) \Delta , \\
  \sup_{t \le \Delta} | z - z^{*} |
  &\;\le\; \Delta \, \sup_{t \le \Delta}
  | \dot z - \dot z^{*} |,
  \end{aligned}
\]
so in the metric of Definition~\ref{def:dist}, for
$\Delta \le 1$,
$$
  d\bigl( \tra_{a},\, \tra^{*};\, [0, \Delta] \bigr)
  \;\le\;
  \frac{1 + \Delta/2}{2\,(n-1)}
  + \Bigl( \sqrt{2}\, M + \tfrac{1}{8} \Bigr) \frac{\Delta}{j} .
$$

\paragraph{2.\ Unicycle, second order.}
Let $\tra^{*}$ be an $M$-smooth admissible reference from the
same state. The jet of the reduction is
\[
  \begin{aligned}
  \dot z &= v\, ( \cos\theta,\, \sin\theta ), \\
  \ddot z &= a\, ( \cos\theta,\, \sin\theta )
  + v \omega\, ( -\sin\theta,\, \cos\theta ),
  \end{aligned}
\]
so that
$a = \bigl\langle ( \cos\theta, \sin\theta ),\,
\ddot z \bigr\rangle$. Differentiating along the trajectory,
\[
  \begin{aligned}
  \dot a
  &= \omega \bigl\langle ( -\sin\theta, \cos\theta ),\,
  \ddot z \bigr\rangle
  + \bigl\langle ( \cos\theta, \sin\theta ),\,
  \dddot z \bigr\rangle \\
  &= v \omega^{2}
  + \bigl\langle ( \cos\theta, \sin\theta ),\,
  \dddot z \bigr\rangle .
  \end{aligned}
\]
With $|\dddot x^{*}|, |\dddot y^{*}| \le M$ and
$|v^{*}|, |\omega^{*}| \le \tfrac{1}{2}$,
\[
  | \dot a^{*} |
  \;\le\;
  \tfrac{1}{2} \cdot \tfrac{1}{4} + \sqrt{2}\, M
  \;=\; \sqrt{2}\, M + \tfrac{1}{8}
  \;=:\; M_{a} .
\]
Using $a_{\ell}$ and $\alpha_{\ell}$ closest to $a^{*}$ and
$\alpha^{*}$ averaged on $I_{\ell}$, both speeds are
integrals of gridded channels, and
\[
  \sup_{t \le \Delta} | v - v^{*} | ,
  \quad
  \sup_{t \le \Delta} | \omega - \omega^{*} |
  \;\le\;
  E
  \;:=\;
  \tfrac{1}{4} \Bigl( \tfrac{1}{n-1} + \tfrac{1}{2j} \Bigr)
  \Delta ,
\]
and integrating once more,
$\sup_{t \le \Delta} | \theta - \theta^{*} | \le E \Delta$.
Since $|v^{*}| \le \tfrac{1}{2}$ and
$| ( \cos\theta, \sin\theta )
- ( \cos\theta^{*}, \sin\theta^{*} ) |
\le | \theta - \theta^{*} |$,
\[
  \begin{aligned}
  | \dot z - \dot z^{*} |
  &\;\le\; | v - v^{*} | + | v^{*} |\, | \theta - \theta^{*} |
  \;\le\; E \Bigl( 1 + \tfrac{\Delta}{2} \Bigr), \\
  | z - z^{*} |
  &\;\le\; \Delta \sup_{t \le \Delta}
  | \dot z - \dot z^{*} | .
  \end{aligned}
\]
Further,
\[
  \begin{aligned}
  | \ddot z - \ddot z^{*} |
  \;\le{}&
  | a_{\ell} - a^{*} |
  + | v \omega - v^{*} \omega^{*} |
  + \tfrac{1}{2} | \theta - \theta^{*} | \\
  \;\le{}&
  \tfrac{1}{4(n-1)} + \tfrac{M_{a} \Delta}{j}
  + E \bigl( 1 + \tfrac{\Delta}{2} \bigr) .
  \end{aligned}
\]
So, for $\Delta \le 1$,
$$
  d\bigl( \tra_{a},\, \tra^{*};\, [0, \Delta] \bigr)
  \;\le\;
  \frac{2 + 3 \Delta}{8\,(n-1)}
  + \Bigl( \sqrt{2}\, M + \tfrac{5}{16} \Bigr)
  \frac{\Delta}{j} .
$$

\paragraph{3.\ Car with trailer.}
Let $\beta = \theta_{0} - \theta_{1}$ be the hitch angle and
$s = v \cos\beta$ the signed speed of the trailer. Along any
trajectory,
\begin{equation}
\label{eq:car-jet}
  \begin{aligned}
  \dot z &= s\, ( \cos\theta_{1},\, \sin\theta_{1} ), \\
  \ddot z &= \dot s\, ( \cos\theta_{1},\, \sin\theta_{1} )
  + \frac{v^{2} \sin\beta \cos\beta}{\ell}\,
  ( -\sin\theta_{1},\, \cos\theta_{1} ) .
  \end{aligned}
\end{equation}

Let $\tra^{*}$ be an $M$-smooth admissible reference from the
same state, with controls $u^{*} = (v^{*}, \phi^{*})$.
Differentiating $s = v \cos\beta$,
\[
  \begin{aligned}
  \ddot z ={}& \Bigl( \dot v \cos\beta
  - \tfrac{v^{2}}{L} \sin\beta \tan\phi
  + \tfrac{v^{2}}{\ell} \sin^{2}\!\beta \Bigr)
  ( \cos\theta_{1},\, \sin\theta_{1} ) \\
  &+ \frac{v^{2} \sin\beta \cos\beta}{\ell}\,
  ( -\sin\theta_{1},\, \cos\theta_{1} ) .
  \end{aligned}
\]
We suppose
\[
  |\dot v^{*}| \le \tfrac{1}{2},
  \qquad
  |\ddot v^{*}|, |\dot\phi^{*}| \le M
  \quad \text{a.e.},
\]
which are mild additional restrictions on the input signal.

Let $h_{0} = \tfrac{3}{25}$, $h_{1} = \tfrac{1}{2}$, and
$h_{\phi} = \tfrac{2\pi}{9}$ be the spacings of $G_{6}$,
$G_{3}$, and $G_{4}$. Using $v_{0}$ closest to $v^{*}(0)$,
and $\dot v_{\ell}$ and $\phi_{\ell}$ whose value and whose
tangent are closest to $\dot v^{*}$ and $\tan\phi^{*}$
averaged on $I_{\ell}$,
\[
  \begin{aligned}
  | v - v^{*} |
  &\;\le\; \tfrac{h_{0}}{2} + \tfrac{h_{1}}{2} \Delta
  + c\, \tfrac{M \Delta}{j}, \\
  | \theta_{k} - \theta_{k}^{*} |
  &\;\le\; c \Bigl( h_{0} + h_{1} + h_{\phi}
  + \tfrac{1}{j} \Bigr) \Delta,
  \quad k = 0, 1,
  \end{aligned}
\]
the trailer angle obeying
$\dot\theta_{1} = \tfrac{v}{\ell} \sin\beta$ with its
comparison factor $e^{\bar v \Delta / \ell}$ absorbed into
$c$. Through \eqref{eq:car-jet},
\[
  \begin{aligned}
  | \dot z - \dot z^{*} | ,
  \quad
  | \ddot z - \ddot z^{*} |
  &\;\le\; c \Bigl( h_{0} + h_{1} + h_{\phi}
  + \tfrac{M \Delta}{j} \Bigr), \\
  | z - z^{*} |
  &\;\le\; \Delta \sup_{t \le \Delta}
  | \dot z - \dot z^{*} | ,
  \end{aligned}
\]
so in the metric of Definition~\ref{def:dist}
$$
  d\bigl( \tra_{g},\, \tra^{*};\, [0, \Delta] \bigr)
  \;\le\;
  c \Bigl( h_{0} + h_{1} + h_{\phi} + \frac{M \Delta}{j} \Bigr),
$$
with $c = c(\bar v, L, \ell, \Delta)$ a constant depending on
the speed bound $\bar v = \tfrac{1}{2}$, the wheelbase $L$, the
hitch length $\ell$, and the window $\Delta$ alone, vanishing
in the grids and the subdivision together.

\paragraph{4.\ Quadrotor.}
Along any trajectory,
\begin{equation}
\label{eq:quad-jet}
  \begin{aligned}
  \ddot p &\;=\; \frac{c}{m}\, R(q)\, e_{3} - g e_{3}, \\
  \dddot p &\;=\; \frac{\dot c}{m}\, R(q)\, e_{3}
  + \frac{c}{m}\, R(q) \bigl( \omega \times e_{3} \bigr),
  \end{aligned}
\end{equation}
so that the flat jet depends on the thrust and its rate
$(c, \dot c)$ in addition to the state.

Let $\tra^{*}$ be an $M$-smooth admissible reference from
the same state,
$f^{*} = \ddot p^{*} + g e_{3}$ its thrust, and
$s^{*} = \langle R(q^{*})\, \omega^{*}, b_{3}^{*} \rangle$
its axial body rate.

We suppose the thrust $\gamma$-interior to the authority,
the body rate bounded, and the thrust axis nowhere parallel
to $e_{\psi^{*}} = (\cos\psi^{*}, \sin\psi^{*}, 0)$:
\[
  \begin{aligned}
  &F_{-} + \gamma \le \| f^{*} \| \le F_{+} - \gamma, \\
  &\omega^{*} \in [-\bar\omega, \bar\omega]^{3}, \\
  &\| b_{3}^{*} \times e_{\psi^{*}} \| \ge \gamma_{\psi} .
  \end{aligned}
\]
Along the reference, the angular velocity decomposes about
the thrust axis,
\[
  R(q^{*})\, \omega^{*}
  = b_{3}^{*} \times \dot b_{3}^{*} + s^{*} b_{3}^{*},
  \quad
  \dot b_{3}^{*}
  = \bigl( I - b_{3}^{*} (b_{3}^{*})^{\top} \bigr)
  \frac{\dot f^{*}}{\| f^{*} \|} ,
\]
so
$| s^{*} | \le \| \omega^{*} \| \le \sqrt{3}\, \bar\omega$.

$M$-smoothness bounds
$| \dddot p^{*} |, | p^{(4)*} |, | \ddot\psi^{*} | \le M$.
The rate $\dot f^{*} = \dddot p^{*}$ is therefore
$\sqrt{3} M$-Lipschitz.

Differentiating the axial body rate
$s^{*} = \langle R(q^{*})\, \omega^{*}, b_{3}^{*} \rangle$,
\[
  \dot s^{*}
  = \bigl\langle R(q^{*})\, \dot\omega^{*},\,
  b_{3}^{*} \bigr\rangle
  + \bigl\langle R(q^{*})\, \omega^{*},\,
  \dot b_{3}^{*} \bigr\rangle
  = \bigl\langle R(q^{*})\, \dot\omega^{*},\,
  b_{3}^{*} \bigr\rangle,
\]
the rotation contributing
$R(q^{*}) ( \omega^{*} \times \omega^{*} ) = 0$ and the
second term vanishing against the decomposition,
$\langle b_{3}^{*} \times \dot b_{3}^{*}
+ s^{*} b_{3}^{*},\, \dot b_{3}^{*} \rangle = 0$. Following
$J \dot\omega^{*} = \tau^{*} - \omega^{*} \times J
\omega^{*}$,
\[
  | \dot s^{*} | \le \| \dot\omega^{*} \|
  = \bigl\| J^{-1} \bigl( \tau^{*}
  - \omega^{*} \times J \omega^{*} \bigr) \bigr\|
  \le M_{s},
\]
with $\tau^{*}$ from the admissible rotors
$u^{*} \in [0, 1.3]^{4}$ and $M_{s}$ a constant of the
inertia, the mixer, and $\bar\omega$.

Let
$h_{0}$, $h_{1}$, $h_{2}$, and $h_{s}$ be the interval
spacings of $G_{n_{0}}$, $G_{n_{1}}$, $G_{n_{2}}$, and
$G_{n_{s}}$. Pick
$\bar a \ge F_{+} + g$, $\bar\jmath, \bar s \ge M$, and
$\bar r \ge \sqrt{3}\, \bar\omega$. Using $f_{0}$ and
$\dot f_{0}$ closest to $f^{*}(0)$ and $\dot f^{*}(0)$, and
$\ddot f$ and $r$ closest to $\ddot f^{*}$ and
$s^{*}$ averaged on $[0, \Delta]$, we get, for
$t \le \Delta$,
\begin{equation}
\label{eq:quad-ref}
\begin{aligned}
  \bigl\| f^{\mathrm{ref}}(t) - f^{*}(t) \bigr\|
  &\;\le\; \frac{\sqrt{3}\,h_{0}}{2}
  + c \bigl( h_{1} + h_{2} \bigr) \Delta
  + c\, M \Delta^{2},\\
  \bigl\| \dot f^{\mathrm{ref}}(t) - \dot f^{*}(t) \bigr\|
  &\;\le\; \frac{\sqrt{3}\,h_{1}}{2}
  + \frac{\sqrt{3}\,h_{2}}{2}\, \Delta
  + c\, M \Delta.
\end{aligned}
\end{equation}
Both trajectories leave the same state, so the attitude and
the body rate at $t = 0$ are shared, and
$f^{*}(0) = \tfrac{c^{*}(0)}{m}\, R(q(0))\, e_{3}$ is along
$b_{3}(0)$. Let $\theta$ be the angle from $b_{3}$ to
$b_{3}^{\mathrm{ref}}$ and
$e_{\omega} = \omega - \omega^{\mathrm{ref}}$ the body-rate
error. Following
$\| f_{0} - f^{*}(0) \| \le \tfrac{\sqrt{3}}{2} h_{0}$, and
supposing $\tfrac{\sqrt{3}}{2} h_{0} \le \gamma$ so that
$\| f_{0} \| \ge \| f^{*}(0) \|
- \tfrac{\sqrt{3}}{2} h_{0} \ge F_{-}$,
\[
  \sin\theta(0)
  = \frac{\bigl\| \bigl( f_{0} - f^{*}(0) \bigr) \times
  b_{3}(0) \bigr\|}{\| f_{0} \|}
  \le \frac{\sqrt{3}\, h_{0}}{2\, F_{-}} .
\]
Comparing the demand of \eqref{eq:quad-rate} at $t = 0$
with the decomposition of $R(q^{*})\, \omega^{*}$,
\[
  \begin{aligned}
  | e_{\omega}(0) |
  \le{}& k_{b}\, \theta(0)
  + \bigl\| b_{3}^{\mathrm{ref}} \times
  \dot b_{3}^{\mathrm{ref}}
  - b_{3}^{*} \times \dot b_{3}^{*} \bigr\| \\
  &+ | r - s^{*}(0) |
  + | s^{*}(0) |\,
  \bigl\| b_{3}^{\mathrm{ref}} - b_{3}^{*} \bigr\| \\
  \le{}& c \bigl( h_{0} + h_{1} + h_{s}
  + M_{s} \Delta \bigr),
  \end{aligned}
\]
the last two terms from $r$ matched to $s^{*}$ averaged
on $[0, \Delta]$. Substituting the torque \eqref{eq:quad-torque}
into $J \dot\omega = \tau - \omega \times J \omega$,
\[
  \dot e_{\omega}
  = \dot\omega - \dot\omega^{\mathrm{ref}}
  = - k_{\omega}\, e_{\omega},
  \qquad
  | e_{\omega}(t) | = | e_{\omega}(0) |\, e^{- k_{\omega} t}.
\]
Differentiating
$\cos\theta = \langle b_{3}, b_{3}^{\mathrm{ref}} \rangle$
with $\dot b_{3} = R\, ( \omega \times e_{3} )$ and
$\omega = \omega^{\mathrm{ref}} + e_{\omega}$, the
feedforward and the axial terms cancel, leaving
\[
  \begin{aligned}
  - \sin\theta\, \dot\theta
  &= k_{b}\, \theta \sin\theta
  + \bigl\langle R\, ( e_{\omega} \times e_{3} ),\,
  b_{3}^{\mathrm{ref}} \bigr\rangle \\
  &\ge \bigl( k_{b}\, \theta - | e_{\omega} | \bigr)
  \sin\theta ,
  \end{aligned}
\]
so $\dot\theta \le - k_{b}\, \theta + | e_{\omega} |$ and,
for $k_{\omega} > k_{b}$,
\[
  \theta(t) \le \theta(0)\, e^{- k_{b} t}
  + \frac{| e_{\omega}(0) |}{k_{\omega} - k_{b}} .
\]
Following \eqref{eq:quad-ref}, let
$\| f^{\mathrm{ref}} - f^{*} \| \le \gamma / 2$, so that
$\| f^{\mathrm{ref}} \| \in [F_{-} + \gamma/2,\, F_{+}]$, and
let the grids be fine enough that the bounds above keep
$\theta \le \sqrt{\gamma / F_{+}}$. Then
\[
  \langle f^{\mathrm{ref}}, b_{3} \rangle
  \;=\; \| f^{\mathrm{ref}} \| \cos\theta
  \;\ge\; F_{-} + \tfrac{\gamma}{2}
  - F_{+}\, \tfrac{\theta^{2}}{2}
  \;\ge\; F_{-} ,
\]
the clamp of \eqref{eq:quad-thrust} is inactive,
$c = m \langle f^{\mathrm{ref}}, b_{3} \rangle$, and
\[
  \tfrac{c}{m}\, b_{3} - f^{\mathrm{ref}}
  = - \bigl( I - b_{3} b_{3}^{\top} \bigr) f^{\mathrm{ref}},
  \quad
  \bigl\| \tfrac{c}{m}\, b_{3} - f^{\mathrm{ref}} \bigr\|
  \le F_{+}\, \theta .
\]
Combining with \eqref{eq:quad-ref} through
\eqref{eq:quad-jet}, where $\omega \times b_{3}$ discards
the axial channel,
\[
  \begin{aligned}
  \| \ddot p - \ddot p^{*} \| ,\;
  \| \dddot p - \dddot p^{*} \|
  &\le c \bigl( h_{0} + h_{1} + h_{2}
  + M \Delta \bigr), \\
  \| \dot p - \dot p^{*} \|
  &\le \Delta \sup_{t \le \Delta}
  \| \ddot p - \ddot p^{*} \| , \\
  \| p - p^{*} \|
  &\le \Delta \sup_{t \le \Delta}
  \| \dot p - \dot p^{*} \| .
  \end{aligned}
\]
\begin{table*}[t]
\caption{The search variants of Section~\ref{sec:variants} on
the Dynobench instances of Section~\ref{sec:dynobench},
averaged over $5$ runs at the Orin Nano compute tier and
reported against the headline algorithm
DFT\textsuperscript{*}. The static variants run at the pruning
radius lowered by $0.01$, without which their cross-depth
claims can starve the frontier short of the goal on some runs.}
\label{tab:variants}
\centering
\footnotesize
\setlength{\tabcolsep}{3pt}
\begin{tabular*}{\textwidth}{|@{\extracolsep{\fill}\hspace{\tabcolsep}}ll|cc|cc|cc|cc|}
\hline
 & & \multicolumn{2}{c|}{DFT\textsuperscript{*}} &
\multicolumn{2}{c|}{DFT\textsuperscript{*}-Static} &
\multicolumn{2}{c|}{DFT-A\textsuperscript{*}} &
\multicolumn{2}{c|}{DFT-A\textsuperscript{*}-Static} \\
\hline
system & task &
cost (s) & wall (ms) &
cost (s) & wall (ms) &
cost (s) & wall (ms) &
cost (s) & wall (ms) \\
\hline
\multirow{3}{*}{\shortstack[l]{Unicycle\\1st order}}
& bugtrap & $21.4$ & \wms{169}
& \cgood{21.2} & \wmsb{18}
& $21.3$ & \wms{82}
& $21.5$ & \wms{24} \\
 & kink & $14.1$ & \wms{50}
& $13.6$ & \wms{8}
& $14.1$ & \wms{9}
& \cgood{13.4} & \wmsb{6} \\
 & parallelpark & $3.5$ & \wms{2}
& $3.2$ & \wmsb{1}
& $3.4$ & \wms{2}
& \cgood{3.1} & \wms{2} \\
\hline
\multirow{3}{*}{\shortstack[l]{Unicycle\\2nd order}}
& bugtrap & $24.5$ & \wms{3220}
& $24.2$ & \wms{388}
& $24.4$ & \wms{1117}
& \cgood{23.7} & \wmsb{333} \\
 & kink & $18.8$ & \wms{1100}
& $18.0$ & \wms{176}
& $19.2$ & \wms{176}
& \cgood{17.6} & \wmsb{76} \\
 & parallelpark & $6.7$ & \wmsb{10}
& \cgood{6.0} & \wmsb{10}
& $7.0$ & \wms{11}
& \cgood{6.0} & \wmsb{10} \\
\hline
\multirow{3}{*}{\shortstack[l]{Car with\\a trailer}}
& bugtrap & $19.8$ & \wms{2718}
& \cgood{19.5} & \wms{217}
& $20.0$ & \wms{928}
& $19.7$ & \wmsb{148} \\
 & kink & $15.6$ & \wms{1251}
& $15.2$ & \wms{152}
& $15.9$ & \wms{89}
& \cgood{14.9} & \wmsb{28} \\
 & parallelpark & \cgood{3.5} & \wmsb{6}
& $4.4$ & \wms{12}
& $4.1$ & \wms{9}
& $4.9$ & \wms{12} \\
\hline
\multirow{4}{*}{\shortstack[l]{Quadrotor\\
  $u = (u_{i})_{i=1}^{4}$}}
& block & $2.55$ & \wms{2719}
& \cgood{2.50} & \wms{995}
& \cgood{2.50} & \wms{1727}
& \cgood{2.50} & \wmsb{931} \\
 & window & $2.00$ & \wms{385}
& \cgood{1.90} & \wmsb{150}
& $2.00$ & \wms{362}
& $1.95$ & \wms{164} \\
 & inversion & \cgood{3.10} & \wms{313}
& $3.60$ & \wmsb{205}
& $3.20$ & \wms{653}
& $3.65$ & \wms{300} \\
& inversion obs. & \cgood{3.90} & \wms{3345}
& $4.25$ & \wmsb{934}
& $4.10$ & \wms{5262}
& $4.05$ & \wms{1012} \\
\hline
\end{tabular*}
\end{table*}

With $\| b_{3}^{*} \times e_{\psi^{*}} \| \ge \gamma_{\psi}$
and $\| b_{3} - b_{3}^{*} \| \le \tfrac{\gamma_{\psi}}{2}$,
$\psi$ and $\dot\psi$ are smooth functions of the attitude
and the body rate along both trajectories, so in the metric
of Definition~\ref{def:dist}
$$
  d( \tra_{\mathrm{ref}}, \tra^{*}; [0, \Delta] )
  \le
  c ( h_{0} + h_{1} + h_{2} + h_{s}
  + (M + M_{s}) \Delta ),
$$
with $c = c(k_{b}, k_{\omega}, F_{-}, F_{+}, \bar\omega, M,
\Delta)$ a constant depending on the gains, the thrust
envelope, the body-rate box, the envelope constant, and the
window alone, vanishing in the grids and the window together.

It remains to account for the rotor projection of the
construction, $u = \mathrm{clip}\bigl( B_{0}^{-1}( c, \tau ),\,
[0, 1.3]^{4} \bigr)$. Abbreviate the bound above,
$\varrho := h_{0} + h_{1} + h_{2} + h_{s}
+ ( M + M_{s} )\, \Delta$. The reference is admissible,
$u^{*} \in [0, 1.3]^{4}$, so the projection is nonexpansive
and fixes $u^{*}$, while the demand map of
\eqref{eq:quad-thrust} and \eqref{eq:quad-torque} is
Lipschitz for $\theta < \pi$ and deviates from $u^{*}$ by
$c\, \varrho$ along the reference. The state deviation $e$ of
the executed rollout hence obeys
\[
  \dot e
  \;\le\;
  L \bigl( e + \bigl\| \mathrm{clip}\, B_{0}^{-1}( c, \tau )
  - u^{*} \bigr\| \bigr)
  \;\le\;
  L'\, e + c\, \varrho ,
\]
and Gronwall over the window recovers the displayed bound
with $c$ enlarged by $e^{L' \Delta}$, mirroring the rejection
step of Theorem~\ref{thm:realization}.

\section{Search Variants on the Dynobench Instances}
\label{app:variants}

See Table~\ref{tab:variants}.

\balance
\section{Index to Multimedia Extensions}

\noindent
\begin{tabular}{@{}cll@{}}
\hline
Extension & Media type & Description \\
\hline
1 & Video & WWDFT\textsuperscript{*} hard dynamic episode \\
2 & Video & DFT\textsuperscript{*} Dynobench solutions \\
\hline
\end{tabular}

\end{document}